\newcommand{\newstuffblue}[1]{{\color{black}{#1}}}
\newtheorem{thm}{Proposition}[section]
\theoremstyle{remark}
\newtheorem*{remark}{Remark}
\theoremstyle{definition}
\newtheorem{lemma}[thm]{Lemma}
\newcommand{\im}{\mathcal{I}}
\newcommand{\E}{\mathbb{E}}
\newcommand{\taun}{{\tau}}
\newcommand{\cA}{\mathcal{A}}
\newcommand{\sse}{\sigma_{\text{se}}}
\newcommand{\eps}{\epsilon}
\newcommand{\xx}{\mathbf{x}}
\newcommand{\bybar}{\bar{\mathbf{y}}_{1:k_n}}
\newcommand{\bxbar}{\bar{\mathbf{x}}_{1:k_n}}
\newcommand{\xnew}{x_{k_n+1}}
\DeclareMathOperator\sgn{sign}
\DeclareMathOperator\diag{diag}
\newcommand{\newstuff}[1]{{\color{black}{#1}}}
\newenvironment{ALC@g}{
    \begin{list}{}{ \itemsep\z@ \itemindent\z@
    \listparindent\z@ \rightmargin\z@
    \topsep\z@ \partopsep\z@ \parskip\z@\parsep\z@
    \leftmargin 1em
    \addtolength{\ALC@tlm}{\leftmargin}
    }
  }
  {\end{list}}
\newcommand{\CASE}[1]{\State \textbf{case} #1\textbf{:} \begin{list}}
\newcommand{\ENDCASE}{\end{list}}
\newcommand{\DEFAULT}{\State \textbf{default:} \begin{list}}
\newcommand{\ENDDEFAULT}{\end{list}}
\newcommand{\DEFAULTLINE}[1]{\State \textbf{default:} }
\title{Adaptive Batching for Gaussian Process Surrogates with Application in Noisy Level Set Estimation}
\author{{Xiong Lyu}\thanks{Department of Statistics and Applied Probability, University of California at Santa Barbara, Santa Barbara, CA 93106-3110, USA (\href{mailto:lyu@pstat.ucsb.edu}{lyu@pstat.ucsb.edu}; \href{mailto:ludkovski@pstat.ucsb.edu}{ludkovski@pstat.ucsb.edu})}
	\and {Michael Ludkovski}$^*$}
\date{\vspace{-5ex}}
\begin{document}

	\maketitle
	
	\begin{abstract}
		
		We develop adaptive replicated designs for Gaussian process metamodels of stochastic experiments. Adaptive batching is a natural extension of sequential design heuristics with the benefit of replication growing as response features are learned, inputs concentrate, and the metamodeling overhead rises. Motivated by the problem of learning the level set of the mean simulator \newstuff{response, we} develop \newstuff{five} novel schemes: Multi-Level Batching (MLB), Ratchet Batching (RB), Adaptive Batched Stepwise Uncertainty Reduction (ABSUR), Adaptive Design with Stepwise Allocation (ADSA) and Deterministic Design with Stepwise Allocation (DDSA). Our algorithms simultaneously (MLB, RB and ABSUR) or sequentially (ADSA and DDSA) determine the sequential design inputs and the respective number of replicates. Illustrations using synthetic examples and an application in quantitative finance (Bermudan option pricing via Regression Monte Carlo) show that adaptive batching brings significant computational speed-ups with minimal loss of modeling fidelity.
		
	\end{abstract}

\textbf{Keywords:} GP surrogates, level set estimation, stochastic simulation, design of experiments, stepwise uncertainty reduction


	\section{Introduction}\label{sec:introduction}
	
	Metamodels offer a cheap statistical representation of complex and/or expensive stochastic simulators that arise in applications ranging from engineering to environmental science and finance \citep{santner2003design}. Gaussian process (GP) frameworks have emerged as the leading family of metamodels thanks to their flexibility, analytical tractability and superior empirical performance. However, for GP metamodels to be fast, it is imperative to keep the respective design size $|\cA|$ manageable. In particular, unless the simulator is truly expensive or the input domain is vast, the typical recommendation is to restrict to hundreds of inputs, $|\cA| \ll 10^3$. This creates a major tension as frequently the stochastic simulator has low signal-to-noise ratio or a complex noise structure. A prototypical example is where the simulator $Y(x) = F(X_{[0,\Delta t ]})|_{X_0 = x}$ involves functionals of a continuous-time Markov chain or stochastic differential equation solution $(X_t)$, whereby the stochasticity tends to dominate the trend/drift term for short $\Delta t$, and moreover simulation noise is non-Gaussian and state-dependent (heteroskedastic).
	
	A natural solution is to employ \emph{batching}, known in the stochastic simulation community as nested Monte Carlo.
	Re-using the same input to generate multiple outputs allows for a Law of Large Numbers (LLN) averaging which can be {analytically} combined with the GP predictive equations to keep the computational complexity as a function of $k$ (number of unique inputs) rather than of the capital-$N$ (number of simulator calls). The seminal technique of \emph{stochastic kriging} \citep{ankenman2010stochastic} shows that these computational savings are exact assuming the GP hyperparameters, in particular the noise variance {$\taun^2$}, are known. Such batching becomes critical in the use of GP models in our motivating application of solving optimal stopping problems via Regression Monte Carlo, where tens of thousands of simulations are called for.
	
	In the classical setup, the metamodeling objective is to learn the mean response over the entire domain \citep{koehler1998estimating, le2015asymptotic, chen2017sequential}, whereby, modulo heteroskedastic noise, one expects to utilize the same batching level across all inputs, i.e.~splitting the total budget $N=k \times r$ into $k$ batches of $r$ replicates at locations $\bar{x}_1, \ldots, \bar{x}_k$. See~\cite{ankenman2010stochastic} for a discussion of how to pick $k$ for a given budget $N$, as well as some proposals for handling non-constant $\taun^2(x)$.  We are interested in more targeted objectives, where the picture is much less clear. As two canonical examples we recall Bayesian Optimization (finding the maximum mean response) and Level Set Estimation (determining the input sub-domain where the mean response exceeds a given threshold). In both settings GP metamodels have been shown to especially shine, not least because they organically match the sequential adaptive designs typically utilized; the respective Expected Improvement schemes form a major feature of the GP ecosystem. Since these objectives imply preferentially sampling a small portion of the input space---the neighborhood of the maximum, or the neighborhood of the desired contour---the exploration-exploitation paradigm leads to increasingly concentrated designs. Such concentration suggests to adaptively determine the amount of batching. Intuitively, replication should be low for more exploratory sites and should rise in the neighborhood of interest, where we replicate to achieve computational savings. Indeed, the intrinsic cost of replication is linked to the variability of the response at the respective inputs, which will be minimal if the inputs are very close together.  From a different perspective, replication trades off costly, precise outputs (large $r$) vis-a-vis cheap outputs with low signal-to-noise ratio (low $r$). 

	The above motivates \emph{adaptively batched} designs, where $r$ is input-dependent.  While this idea was investigated for Bayesian Optimization \citep{klein2017fast, poloczek2017multi} and for Integrated Mean Squared Error (IMSE) minimization \citep{ankenman2010stochastic,binois2019replication}, neither of these fully reveal the underlying tension between exploration (replicate less, larger metamodel overhead) and exploitation (replicate more, generate computational savings).
	In this article we propose several schemes that explicitly focus on this issue. To evaluate them we concentrate on the problem of level set estimation where the contour is adaptively learned through the sequential design but retains a spatial structure (unlike Bayesian Optimization where convergence to the single input yielding the global maximum is desired). Consequently, we expect a complex interaction between the selection of inputs and the respective replication amounts. In this context, our main contribution is to extend the paradigm of Expected Improvement to include sequential selection of both the input locations $x_{n}$ and the replication counts $r_n$. We benchmark the proposed algorithms and show that they provide significant savings compared to the naive fixed-batching approach. In particular, we are able to obtain schemes that reduce $N \simeq 10^5$ simulations to efficient replicated designs of just a few hundred unique inputs.
	
	Beyond benchmarking the developed algorithms on several synthetic examples, we also implement \newstuff{and extend them to heteroskedastic modeling} for the motivating application of valuation of Bermudan options. In the latter context, the Regression Monte Carlo (RMC) paradigm is used to provide a simulation-based algorithm that hinges on recursive estimation of certain level sets that correspond to the so-called stopping boundaries. Building upon the successful use of GP surrogates for RMC \citep{ludkovski2018kriging, lyu2018evaluating}, we demonstrate that adaptive batching significantly speeds up this approach, making  it more scalable and efficient. In particular while in \citep{ludkovski2018kriging} sequential design was typically too slow to be useful, adaptively batched models beat basic implementation on both speed and memory requirements. We note that there are other important applications of level set estimation,  from quantifying the reliability of a system or its failure probability \citep{bect2012sequential}, to ranking pay-offs from several available actions in dynamic programming \citep{hu2015sequential}. 
	
	The rest of the paper is organized as follows. Section~\ref{sec:model} formalizes the GP model and the contour-learning objective. Section~\ref{sec:batchdesign} develops heuristics for sequential designs that jointly optimize over the new input and replication level. Section~\ref{sec:csao} takes a different tack and explores dynamic replication through allocating new simulations to existing inputs. Section~\ref{sec:synthetic} benchmarks the proposed schemes on three synthetic case studies and Section~\ref{sec:bermudan} on two more examples from Bermudan option pricing. Section~\ref{sec:conclusions} concludes.

	\section{Statistical Model}\label{sec:model}
	Consider a latent $f: D \rightarrow \mathbb{R}$ which is a continuous function over a $d$-dimensional input space $D \subseteq  \mathbb{R}^d$. We wish to identify the contour $\partial S$, where, without loss of generality, $S$ is the zero level set 	
	\begin{align}
	S=\{x \in D: f(x) \geq 0 \}. 
	\end{align}
	Thus, our metamodeling objective is  equivalent to learning the sign of $f(x)$ for any $x \in D$.
	%
	For any $x_i\in D$, we have access to a simulator $Y(x_i)$ that generates noisy outputs of $f(x_i)$:
	\begin{align}
	Y(x_i) &= f(x_i)+\epsilon_i, \label{fundamental}
	\end{align}
	where $\epsilon_i$'s are realizations of independent, mean zero random variables with variance {${\taun^2}$}. To describe replicated inputs, let $\bar{x}_i$, $i = 1,...,k$ denote the unique inputs, and $y_i^{(j)}$ be the $j^{th}$ output of $r_i \geq 1$ replicates observed at $\bar{x}_i$. Let $\bar{\mathbf{y}}_{1:k} =\{\bar{y}_i,1 \leq i \leq k\}$ store averages over replicates, $\bar{y}_i := \frac{1}{r_i} \sum_{j=1}^{r_i}y_{i}^{(j)}$. \newstuff{This notation follows the ``unique-n'' formulation proposed by \cite{binois2019replication}, which was shown to reduce the computation cost from $\mathcal{O}(N^3)$ to $\mathcal{O}(n^3)$ compared to the ``full-N'' formulation.} 
	
	The inference of $\partial S$ proceeds by building a metamodel $\hat{f}$, which induces \newstuff{$\hat{S} = \{x \in D: \hat{f}(x) > 0\},$} and evaluating its \emph{error rate} $\mathcal{ER}$, i.e.~the integral over the symmetric difference between $\hat{S}$ and true $S$ weighted by a given measure $\mu(\cdot)$:
	\begin{align}
	{\cal ER}(S,\hat{S}) &=  \int_{x \in D}\!\! \mathbb{I} (\sgn \hat{f}(x) \neq \sgn f(x)) \mu(dx) = \mu(S \Delta \hat{S}), \label{loss}
	\end{align}
	where $S \Delta \hat{S} := (S  \cap \hat{S}^c) \bigcup (S^c \cap \hat{S})$. \newstuff{$S$ can also be defined using {Vorob`ev} expectation \cite{chevalier2014fast} or conservative probability estimate \cite{bolin2015excursion, azzimonti2018estimating}.} 
	
	Reconstructing $S$ via a metamodel can be divided into two aspects: the construction of the response model $x \mapsto Y(x)$, and the development of the design of experiments (DoE) for efficiently selecting the inputs $\bar{x}_1, \bar{x}_2, \ldots$. To account for the second aspect, we use $n$ to denote the rounds of sequential DoE, $k_n$ to denote the number of unique inputs $\bar{x}$'s sampled by step $n$ and $N_n = \sum_{i=1}^{k_n} r_i^{(n)}$ the respective number of simulator calls made. The superscript on $r_i$ allows the replicate counts to evolve over $n$ as well, see Section \ref{sec:csao}. The metamodel training set by step $n$ consists of $\cA_n = \left\{(\bar{x}_i, r_i^{(n)}, \bar{y}_i), 1 \leq i \leq k_n\right\}$.

	The Gaussian process paradigm treats $f$ as a random function whose posterior distribution is determined from its prior  and the training set(s) $\cA_n$. We view $f(\cdot) \sim GP( m(\cdot), K(\cdot,\cdot))$ as a realization of a Gaussian process specified by its mean function $m(x) := \mathbb{E}[f(x)]$ and covariance function $K(x,x') := \mathbb{E}[(f(x)-m(x))(f(x')-m(x'))]$. The noise distribution is {$\eps \sim \mathcal{N}(0, \taun^2)$}; \newstuff{and thus the observation $\bar{y}$ also follows a normal distribution.} For simplicity we take $m(x)=0$. The conditional distribution $f | \cA_n$ is another Gaussian process, with posterior mean $\hat{f}^{(n)}(x_*)$ and covariance $v^{(n)}(x_*,x_*')$ at arbitrary inputs $x_*, x_*'$ given by
	\begin{align}
	\hat{f}^{(n)}(x_*) &=  \mathbf{k}(x_*)[\mathbf{K}+\taun^2\mathbf{R}^{(n)}]^{-1}\bar{\mathbf{y}}_{1:k_n},  \label{mean}\\
	v^{(n)}(x_*,x_*') &=  K(x_*,x_*')-\mathbf{k}(x_*) [\mathbf{K}+\taun^2\mathbf{R}^{(n)}]^{-1}\mathbf{k}(x_*')^T, \label{cov}
	\end{align}
	with the $1 \times k_n$ vector $\mathbf{k}(x_*) = K(x_*, \bar{\mathbf{x}}_{1:k_n})$, the $k_n \times k_n$ matrix $\mathbf{K}$ given by $\mathbf{K}_{ij} = K(\bar{x}_i, \bar{x}_j)$, and the $k_n \times k_n$ diagonal matrix $\mathbf{R}^{(n)}$ given by $R_{ii}^{(n)} :=\frac{1}{r_i^{(n)}}$. The posterior mean $\hat{f}^{(n)}(x_*)$ is treated as a point estimate of $f(x_*)$, and the posterior standard deviation $s^{(n)}(x_*) := \sqrt{v^{(n)}(x_*,x_*)}$ as the uncertainty of this surrogate.
	
	\begin{remark}
				It is also common in practice that the simulators exhibit input-dependent noise, calling for a heteroskedastic metamodel. Given the noise distribution $\eps_i \sim \mathcal{N}(0, \taun(x_i)^2)$ with a known $\taun(\cdot)$, the conditional distribution $f | \cA_n$ is given by
			\begin{align*}
			\hat{f}^{(n)}(x_*) &=  \mathbf{k}(x_*)[\mathbf{K}+\tilde{\mathbf{R}}^{(n)}]^{-1}\bar{\mathbf{y}}_{1:k_n}, \\ 
			v^{(n)}_(x_*,x_*') &=  K(x_*,x_*')-\mathbf{k}(x_*) [\mathbf{K}+\tilde{\mathbf{R}}^{(n)}]^{-1}\mathbf{k}(x_*')^T,  
			\end{align*}
			with the diagonal matrix $\tilde{\mathbf{R}}^{(n)}$ given by $\tilde{R}_{ii}^{(n)} :=\frac{\taun(x_i)^2}{r_i^{(n)}}$. All the batching algorithms proposed in Section \ref{sec:batchdesign} and \ref{sec:csao} naturally extend to the heteroskedastic context if we replace $\taun^2\mathbf{R}^{(n)}$ with $\tilde{\mathbf{R}}^{(n)}$. The main challenge is then to handle estimation of the unknown conditional variance $\taun(\cdot)$, see  e.g.~\cite{ankenman2010stochastic, binois2019hetgp}. The algorithms proposed below have been ported to work with the \texttt{R} \texttt{hetGP} library \citep{binois2019hetgp} that provides an efficient way to jointly learn the mean and variance response surfaces under replicated designs.
	\end{remark}

	\section{Adaptive Designs} \label{sec:batchdesign}
	
	\subsection{Level Set Estimation}
	An adaptive DoE approach is needed to select $\bar{x}_1, \bar{x}_2,\ldots$ sequentially since the level-set $S$ is defined in terms of the unknown $f$. The standard framework of DoE is to add new inputs one-by-one at each round, using  an acquisition function $\im_n(x)$ to pick $\bar{x}_{n+1}$. The acquisition function quantifies the value of information from running a new simulation at $x$ conditional on an existing training set $\cA_n$, and picks $\bar{x}_{n+1}$ as the  myopic maximizer of $\im_n$:
	\begin{align}
	\bar{x}_{n+1} = \arg \sup_{x \in D } \im_{n}(x). \label{seq}
	\end{align}
	Building upon the seminal Expected Improvement criterion~\citep{jones1998efficient},
	various level-set sampling criteria were proposed by ~\citet{bichon2008efficient}, ~\citet{picheny2010adaptive}, ~\citet{bect2012sequential} and ~\citet{ranjan2012sequential}. Further instances of $\im(x)$ can be found in ~\citet{chevalier2013estimating, chevalier2014fast}, ~\citet{azzimonti2015quantifying, azzimonti2020adaptive}, and~\citet{bolin2015excursion}. The basic idea in sequential level-set estimation is to assess the \emph{information gain} from new simulations, targeting the learning of the contour. Most of the above criteria were originally proposed for deterministic experiments with no simulation noise, or cases with known $\taun^2$. We refer to~\citet{lyu2018evaluating} for a summary of level set estimation in stochastic experiments with heteroskedastic $\tau^2(x)$, which can be seen as the counterpart of the earlier study in ~\citet{jalali2017comparison} for Bayesian Optimization with stochastic simulators.
	
	In this section we construct a sequential batched DoE to jointly select $(\bar{x}_{n+1}, r_{n+1})$. At each DoE round we pick a \emph{new} input $\bar{x}_{n+1}$ and the associated replication amount $r_{n+1}$; thus by round $n$ there are $n$ unique inputs.
	In our first proposal, we formulate this task within a multi-fidelity framework, which is widely used in Bayesian Optimization \citep{kandasamy2016gaussian, kandasamy2016multi, kandasamy2017multi, poloczek2017multi}. Thanks to the LLN, we interpret $r_n$ as \emph{fidelity}:  a small number of replicates  is cheap but inaccurate; inputs with a large number of replicates are viewed as high-fidelity queries: expensive but accurate. Our interest is then to choose the fidelity level to query next, balancing the trade-off between accuracy and cost. As a second proposal, we relate replication to simulation and model fitting overhead costs, leading to maximization of the information gain $\im(x,r)$ per unit cost \citep{klein2017fast, mcleod2017practical}.
	
	\begin{remark}
		Another meaning of batched DoE refers to selecting multiple new inputs $\bar{x}_k$ in parallel, see \citet{chevalier2014fast}. In this article, batching always refers to using replicates; we add (at most) one new input at each DoE round.
	\end{remark}

	To begin, we repurpose two existing acquisition functions well suited to our needs. \newstuff{In our first proposal, we formulate the choice of input $x_{n+1}$ and its replicate count $r_{n+1}$ as two separate steps, which implies that $\mathcal{I}_n$ is only based on the existing information. }
	The first acquisition function is Contour Upper Confidence Bound (cUCB) \citep{lyu2018evaluating} which stems from the Upper Confidence Bound (UCB) strategies proposed by \citet{srinivas2012information} for Bayesian Optimization. cUCB blends the minimization of $|\hat{f}^{(n)}(x)|$ (exploitation) with maximization of the posterior uncertainty $s^{(n)}(x)$ (exploration):
	\begin{align}
	\im_n^{\text{cUCB}}(x) &:= \left\{ -|\hat{f}^{(n)}(x)| + \rho^{(n)} s^{(n)}(x) \right\} \mu(x), \label{ucb}
	\end{align}
	where \newstuff{$\rho^{(n)}$} is a sequence of UCB weights, \newstuff{and $\mu$ is a probability measure on the Borel $\sigma$-algebra $\bm{\mathcal{B}}(D)$ (e.g.,~$\mu=\text{Leb}_D$ the Lebesgue measure on $D$)}. Thus, cUCB targets inputs with high response uncertainty (large $s^{(n)}(x)$), and close to the contour $\partial \hat{S}$ (small $|\hat{f}^{(n)}(x)|$). See~\citet{lyu2018evaluating} on the choice of the UCB weight sequence $\rho^{(n)}$. \newstuffblue{Maximizing $\im_n^{\text{cUCB}}(\cdot)$ yields $x_{n+1}$; see Sections \ref{subsec:MLB} and \ref{subsec:rb} on various ways to select the corresponding $r_{n+1}$.}
	
	
	 \newstuff{In the second proposal, we jointly pick $x_{n+1}$ and $r_{n+1}$ in a single step, utilizing a look-ahead criterion.} The gradient Stepwise Uncertainty Reduction (gSUR) criterion focuses on the local empirical error $E_n$ defined by
	\begin{align}
	{E}_n(x)  := \Phi\bigg(-\frac{|\hat{f}^{(n)}(x)|}{s^{(n)}(x)}\bigg). \label{criterionmee}
	\end{align}
	We interpret ${E}_n(x)$ as the local probability of misclassification of $\{ x \in S\}$, see~\citet{bichon2008efficient,echard2010kriging,lyu2018evaluating,ranjan2012sequential}.
	gSUR aims to select the input which produces the greatest \emph{reduction} between  the current $E_n(x)$ given $\cA_n$ and the expected ${E}_{n+1}(x)$ conditional on the one-step-ahead design, $\cA_{n+1} = \cA_n \cup (\bar{x}_{n+1}, r_{n+1}, \bar{y}_{n+1})$.  To do so, gSUR ties the selection of $\bar{x}_{n+1}$ to the look-ahead standard deviation $s^{(n+1)}(x,r)$  at $x$ conditional on $\cA_n$ and sampling $r$ times at $x$. The latter is proportional to the current standard deviation $s^{(n)}(x)$ with the proportionality factor linked to $r$ \citep{chevalier2014corrected}:
	\begin{align}
	\frac{s^{(n+1)}(x, r)^2}{s^{(n)}(x)^2} &= \frac{\frac{\taun^2}{r}}{\frac{\taun^2}{r} + s^{(n)}(x)^2}, \label{varprop2}
	\end{align}
	since the replicated outputs $y^{(j)}_{n+1}$ are i.i.d.. Based on \eqref{varprop2} and using the fact that $\mathbb{E}_{\bar{Y}(x)}[ \hat{f}^{(n+1)}(x) ] = \hat{f}^{(n)}(x)$,  the gSUR metric approximates the effect of $\bar{Y}(x)$ on the look-ahead local empirical error $E_{n+1}(x)$:
	\begin{align}  \label{criterionmeesur}
	\im_n^{\text{gSUR}}(x, r)
	& := \left\{ \Phi\bigg(-\frac{|\hat{f}^{(n)}(x)|}{s^{(n)}(x)}\bigg)- \Phi\bigg(-\frac{|\hat{f}^{(n)}(x)|}{s^{(n+1)}(x,r)}\bigg) \right\} \mu(x) \\
	& \simeq  \left\{ E_n(x) - \mathbb{E}_{\bar{Y}(x)} \left[ E_{n+1}(x) \right] \right\} \mu(x).\nonumber
	\end{align}
	We note that $\im_n^{\text{gSUR}}(x, r) = 0$ for $x \in \partial \hat{S}^{(n)}$ (i.e.~when $\hat{f}^{(n)}(x) = 0$) so that the gSUR metric naturally enforces some exploration by sampling close to, but not exactly at, the estimated contour.

	\subsection{Multi-Level Batching} \label{subsec:MLB}

    \newstuffblue{Having determined $\bar{x}_{n+1}$ via the cUCB criterion $\im^{cUCB}_n$ \newstuff{\eqref{ucb}}, we turn to the task of picking $r_{n+1}$.} The most basic batching strategy is Fixed Batching (FB): $$r_{n+1} \equiv r_0$$ for some pre-specified batching level $r_0$. 
	To improve upon FB, we select $r_{n+1}$  from a discrete set  $\mathbf{r}_L := \{r^1,\ldots,r^L\}$, interpreted as representing $L$ different \emph{sampling fidelities}. Query at $x$ on the $\ell$-th level implies using $r^{\ell}$ replicates to generate observations $y^{(j)}, j = 1,\ldots,r^{\ell}$ yielding the average $\bar{y}$. The cost of the $\ell$-th fidelity is proportional to $r^{\ell}$. The multi-fidelity analogy \citep{kandasamy2016gaussian} is based on the idea of using low/cheap fidelities to explore and then high/expensive fidelities to exploit the desired contour.

In our context, we rely on the look-ahead standard deviation $s^{(n+1)}(\bar{x}_{n+1}, \cdot)$ in \eqref{varprop2}. Our Multi-Level Batching (MLB) Algorithm~\ref{alg:mlb} aims to match $s^{(n+1)}(\bar{x}_{n+1},r_{n+1})$ with a given threshold $\gamma_n$ which acts as the target level for the next-step standard deviation. Intuitively, $\gamma_n$ controls the credibility of the model; it is progressively lowered as the input space is explored. Recall that $r \mapsto s^{(n+1)}(\bar{x}_{n+1},r)$ is monotone decreasing in \eqref{varprop2}; MLB chooses the \emph{highest fidelity} $r_{n+1} \in \bm{r}_L$ for which $s^{(n+1)}(\bar{x}_{n+1},r_{n+1}) >\gamma_n$. If $s^{(n+1)}(\bar{x}_{n+1},r) > \gamma_n$ for all  $r \in \bm{r}_L$ then we use the highest fidelity level $r_{n+1} = r^L$; if $s^{(n+1)}(\bar{x}_{n+1},r) < \gamma_n$ for all $r \in \bm{r}_L$ then we lower the threshold by multiplying $\gamma_n$ by a reduction factor $\eta < 1$, and try to identify $r_{n+1}$ again, cf.~\cite{kandasamy2016gaussian}.

	\begin{algorithm}[tb]
		\caption{Multi-Level Batching (MLB)} \label{alg:mlb}
		\begin{algorithmic}
			\State {\bfseries Input:} $\mathbf{r}_L$, $\eta$, $k_0$, $r_0$
			\State $\mathcal{A}_{k_0} \gets \{(\bar{x}_i, r_0, \bar{y}_i),1 \leq i \leq k_0\}$, $(\hat{f}^{(k_0)}, s^{(k_0)}) \gets f|\mathcal{A}_{k_0}$, $\gamma \gets Ave(s^{(0)}(\bar{x}_{1:k_0}) )$.
			\State $N_{k_0} \gets r_0 \times k_0$.
			\For{$n=k_0,k_0+1,\ldots$}
			\State $\bar{x}_{n+1} \gets \arg\max_{x\in D} \im_{n}^{cUCB}(x)$.
			\While{ $s^{(n+1)}(\bar{x}_{n+1}, r^1) < \gamma$  \Comment{ Check if need to lower threshold} }
			\State $\gamma \gets \eta \times \gamma$.
			\EndWhile
			\State $r_{n+1} \gets \max \{ r\in \mathbf{r}_L  :\, s^{(n+1)}(\bar{x}_{n+1}, r) \geq \gamma\}$.
			\State $\bar{y}_{n+1} \gets \frac{1}{r_{n+1}}\sum_{j =1}^{r_{n+1}} y^{(j)}$.
			\State Update $\mathcal{A}_{n+1} \gets \mathcal{A}_{n} \cup \{(\bar{x}_{n+1}, r_{n+1}, \bar{y}_{n+1})\}$.
			\State Obtain $(\hat{f}^{(n+1)}, s^{(n+1)}) \gets f|\mathcal{A}_{n+1}$.
			\State $N_{n+1} \gets N_n + r_{n+1}$.
			\EndFor
		\end{algorithmic}
	\end{algorithm}

	\subsection{Ratchet Batching} \label{subsec:rb}
	
	By construction, the MLB Algorithm~\ref{alg:mlb} will step back and forth between different replication levels $r^{\ell}$. Since intuitively the design should concentrate as $n$ grows, we expect $r_{n}$ to grow over time which is achieved through the decreasing $\gamma_n$. \newstuffblue{By enforcing that $n \mapsto r_{n}$ is monotonically non-decreasing (in line with the intuition that replication becomes increasingly beneficial as $n$ grows) we can simplify the choice of $r_{n+1}$ and reduce algorithmic overhead. The resulting Ratchet Batching (RB) scheme picks $r_{n+1}$ among just two fidelity levels (compared to $L$ levels in MLB) and is summarized in Algorithm~\ref{alg:rb}.} Let $r_n^\uparrow = \min \{ r \in \bm{r}_L  : r > r_n\}$ be the next level. Then
	RB either keeps $r_{n+1} = r_n$ if $s^{(n+1)}(\bar{x}_{n+1}, r_n) \ge \gamma_n >  s^{(n+1)}(\bar{x}_{n+1}, r_n^\uparrow)$ or increments to $r_{n+1} = r_n^\uparrow$ if $s^{(n+1)}(\bar{x}_{n+1}, r_n)  >  s^{(n+1)}(\bar{x}_{n+1}, r_n^\uparrow) \ge \gamma_n$. In the third case where $s^{(n+1)}(\bar{x}_{n+1}, r_n) < \gamma_n$ we lower the threshold $\gamma_n$ as in MLB. For RB, the reduction factor $\eta$ for $\gamma$ should be close to 1, to avoid excessive ratcheting up. If $\eta$ is not large enough, there is a risk to skip levels in $\bm{r}_L$ and to  end up with
	excessive replication relative to number of simulation calls, leading to insufficient exploration.

\begin{algorithm}[tb]
		\caption{Ratchet Batching (RB)}\label{alg:rb}
		\begin{algorithmic}
			\State {\bfseries Input:} $\mathbf{r}_L$, $\eta$, $k_0$, $r_0$
			\State $\mathcal{A}_{k_0} \gets \{(\bar{x}_i, r_0, \bar{y}_i),1 \leq i \leq k_0\}$, $(\hat{f}^{(k_0)}, s^{(k_0)}) \gets f|\mathcal{A}_{k_0}$, {$\gamma \gets s^{(k_0)}$}.
			\State $N_{k_0} \gets r_0 \times k_0$.
			\For{$n=k_0,k_0+1,\ldots$}
			\State $\bar{x}_{n+1} \gets \arg\max_{x\in D} \im_{n}^{cUCB}(x)$.
			\While{$s^{(n+1)}(\bar{x}_{n+1}, r_{n}) <  \gamma \ $} \Comment{Check if need to lower threshold}
			\State $\gamma \gets \eta \times \gamma$.
			\EndWhile
			\State $r_n^{\uparrow} \gets \min \{ r \in \mathbf{r}_L : r > r_n \}$
			\State $r_{n+1} \gets r_n \cdot 1_{\{ s^{(n+1)}(\bar{x}_{n+1}, r_n^\uparrow) < \gamma\}} + r_n^{\uparrow} \cdot 1_{\{ s^{(n+1)}(\bar{x}_{n+1}, r_n^\uparrow) \ge \gamma\}}$
			\State $\bar{y}_{n+1} \gets \frac{1}{r_{n+1}}\sum_{j =1}^{r_{n+1}} y^{(j)}$.
			\State Update $\mathcal{A}_{n+1} \gets \mathcal{A}_{n} \cup \{(\bar{x}_{n+1}, r_{n+1}, \bar{y}_{n+1})\}$.
			\State Obtain $(\hat{f}^{(n+1)}, s^{(n+1)}) \gets f|\mathcal{A}_{n+1}$.
			\State $N_{n+1} \gets N_n + r_{n+1}$.
			\EndFor
		\end{algorithmic}
	\end{algorithm}
		
	\subsection{Adaptively Batched Stepwise Uncertainty Reduction} \label{sec:asur}
	
	The FB, MLB and RB schemes all pick $\bar{x}_{n+1}$ first and then $r_{n+1}$. We next propose a procedure to pick both through a joint criterion optimization. The main idea is to tie the choice of $r_{n+1}$ to \emph{cost}, namely to maximize the ratio of the information gain and the cost of generating $r$ outputs, plus the optimization overhead. The inclusion of the overhead in $\im_n$ comes from \citep{swersky2013multi, klein2017fast,mcleod2017practical} \newstuff{in Bayesian Optimization problems}, where the authors treated the total cost as the sum of query cost $T_{sim}$ and the GP metamodeling overhead $c_{ovh}$. \citet{stroh2017sequential} discussed estimating a probability of exceeding a threshold in a multi-fidelity stochastic simulator, where the input $\bar{x}_{n+1}$ and the fidelity are estimated in a sequential way. We develop an analogue for level-set estimation via	a gSUR-based acquisition function
	\begin{align}
	\im^{ABSUR}_n(x, r) & := \frac{\im^{gSUR}_n(x, r)}{c(r) + c_{ovh}(n)},  \label{asuri}
	\end{align}
	where $c_{ovh}(n)$ is the overhead and $c(r) = r \cdot T_{sim}$ is the cost of $r$ evaluations, linear in $r$.  Combining \eqref{asuri} and \newstuff{\eqref{criterionmeesur}}, we obtain
	\begin{align}
	\im^{ABSUR}_n(x, r) :=
	\frac{\Phi\bigg(-\frac{|\hat{f}^{(n)}(x)|}{s^{(n)}(x)}\bigg) - \Phi\bigg(-\frac{|\hat{f}^{(n)}(x)|}{s^{(n)}(x)}\frac{\sqrt{rs^{(n)}(x)^2+\taun^2}}{\taun}\bigg)}{r \cdot T_{sim} + {c}_{ovh}(n)}. \label{absur}
	\end{align}
	The resulting ABSUR Algorithm~\ref{alg:absur} myopically maximizes $\im^{ABSUR}$ over $x \in D$ and $r \in \mathcal{R}= [\underline{r}, \bar{r}]$.
	Intuitively, \newstuff{similar to the gSUR, ABSUR also targets the neighborhood of the zero contour $\partial S$} and the value of $r_{n+1}$ is controlled by $s^{(n)}(x)^2$ and ${c}_{ovh}(n)$; more replication results when $s^{(n)}(x)^2$ is small \newstuff{(neighborhood of the zero contour $\partial S$)} or ${c}_{ovh}(n)$ is large \newstuff{(at a later stage of active learning)}. \newstuffblue{One could replace the numerator in \eqref{absur} with other similar metrics that target reduction of contour uncertainty \citep{lyu2018evaluating}.}

	\begin{algorithm}[htb]
		\caption{Adaptive Batched SUR (ABSUR)}\label{alg:absur}
		\begin{algorithmic}
			\State{\bfseries Input:} {$\mathcal{R}= [\underline{r}, \bar{r}]$, $k_0$, $r_0, T_{sim}$, overhead cost function $n \mapsto c_{ovh}(n)$}
			\State $\mathcal{A}_{k_0} \gets \{(\bar{x}_i, r_0, \bar{y}_i),1 \leq i \leq k_0\}$, $(\hat{f}^{(k_0)}, s^{(k_0)}) \gets f|\mathcal{A}_{k_0}$
			\State  $N_{k_0} \gets r_0 \times k_0$
			\For{$n=k_0,k_0+1,\ldots$}
			\State $(\bar{x}_{n+1}, r_{n+1}) \gets \arg \sup_{x\in D, r \in \newstuff{\mathcal{R}}}  \im^{ABSUR}_{n}(x, r)$.
			\State $\bar{y}_{n+1} \gets \frac{1}{r_{n+1}}\sum_{j =1}^{r_{n+1}} y^{(j)}$.
			\State Update $\mathcal{A}_{n+1} \gets \mathcal{A}_{n} \cup \{(\bar{x}_{n+1}, r_{n+1}, \bar{y}_{n+1})\}$.
			\State Obtain $(\hat{f}^{(n+1)}, s^{(n+1)}) \gets f|\mathcal{A}_{n+1}$.
			\State $N_{n+1} \gets N_n + r_{n+1}$.
			\EndFor
		\end{algorithmic}
	\end{algorithm}

	There are four hyperparameters in ABSUR: the simulation cost $T_{sim}$, the overhead cost function $c_{ovh}(n)$ and the lower/upper bounds of replication $[\underline{r}, \bar{r}]$. For $c_{ovh}(n)$ we follow the recipe in~\citep{mcleod2017practical}, modeling it as a quadratic function of $n$ to reflect the prediction complexity of GPs:
	\begin{align}\label{c-over}
	{c}_{ovh}(n; \bm{\theta}) &= \theta_0 + \theta_1n + \theta_2n^2, 
	\end{align}
	where $\bm{\theta}$ are fitted empirically.
	Alternatively~\citet{klein2017fast} kept $c_{ovh}(n)$ as a constant. The constant $T_{sim}$ represents the cost of obtaining each observation, \newstuff{measured in the same units as $c_{ovh}(n)$ (up to rescaling $\bm{\theta}$, we can assume $T_{sim}=1$).}  If simulations are cheap, we would like to replicate more, and indeed lower $T_{sim}$ leads to larger $r_n$'s and therefore smaller designs. 
\newstuffblue{This feature implies that ceteris paribus $T_{sim}$ should be set larger when input spaces are more voluminous, e.g.~in higher-dimensional settings.}

	\section{Adaptive Design with Stepwise Allocation} \label{sec:csao}
	
	The four strategies (FB, MLB, RB and ABSUR) discussed in Section \ref{sec:batchdesign} visit each input site $\bar{x}_{n+1}$ only once. Consequently, the respective replicate count $r_{n+1}$ is determined at step $n+1$ and then remains the same throughout the latter steps. As an alternative, one can sequentially \emph{allocate} new simulations across existing designs, thereby gradually growing $r_i^{(n)}$. Namely, the algorithm identifies existing ``informative'' inputs and augments their replicate counts, without changing the number of unique inputs $k_n$ across the sequential design rounds $n$.
	In our context, we pair this augmentation with the option of expanding the design set itself. This choice is similar to the classical exploitation (do not change $k_n$) versus exploration (increase $k_n$). The resulting ADSA approach resembles \emph{Stepwise Approximate Optimal Design (SAO)}, an IMSE-based sequential design strategy proposed by~\citet{chen2017sequential} for mean response prediction.
	
	At each step $n$ of the ADSA strategy we are given a budget of $\Delta r^{(n)}$ additional simulations, and the main decision is to determine whether we should choose a new input $\bar{x}_{k_n+1}$ that then receives all these $\Delta r^{(n)}$ replicates, or we should allocate the $\Delta r^{(n)}$ new simulator calls across the existing inputs $\bar{\mathbf{x}}_{1:k_n}$. In the latter case, we aim to minimize the global look-ahead integrated contour uncertainty $\mathcal{L}^{(n+1)}$ where the metric $\mathcal{L}^{(n)}$ is defined by
	\begin{align}\label{eq:adsa-l}
	\mathcal{L}^{(n)} & :=  \sum_{j=1}^M \omega^{(n)}_j \hat{f}^{(n)}(x_{j,*}) = (\bm{\omega}^{(n)})^T \mathbf{f}^{(n)}_* \simeq \int_D \Phi( - \hat{f}(x)/ s^{(n)}(x) ) \hat{f}^{(n)}(x) \mu(dx),
	\end{align}
	where ${\mathbf{x}}_*= x_{1,*},\ldots,x_{M,*}$ is a test set of size $M$, $\mathbf{f}^{(n)}_* \equiv \hat{f}( \mathbf{x}_*)$ is the vector of  predicted responses at $\mathbf{x}_*$, and $\omega^{(n)}_j \equiv \omega( x_{j,*})\mu(x_{j,*}) = \Phi( - \hat{f}^{(n)}(x_{j,*})/ s^{(n)}(x_{j,*}) ) \mu(x_{j,*})$ are the weights that target the level-set region of interest (compare to the targeted integrated mean square error (tIMSE) criterion proposed by~\citet{picheny2010adaptive}).
	
	For allocation purposes, we approximate the look-ahead $\mathcal{L}^{(n+1)}$ as a linear combination of the $M$ predictions $\hat{f}^{(n+1)}(x_{j,*})$ with \emph{fixed} \newstuff{weights} $\bm{\omega}^{(n)}$, whereby our goal is to minimize the variance of $ (\bm{\omega}^{(n)})^T \mathbf{f}^{(n+1)}_*$ conditional on the extra allocations $\Delta r_i^{(n)}$ at each input $\bar{x}_i$. Since the covariance matrix of $\mathbf{f}^{(n+1)}_*$ given replication counts $\mathbf{R}^{(n+1)}$ is 	
	\begin{align}
	\mathbf{C}^{(n+1)} &= \mathbf{k}({\mathbf{x}}_*,  {\mathbf{x}}_*) -  \mathbf{k}({\mathbf{x}}_*, \bar{\mathbf{x}}_{1:k_n})(\bm{K} + \taun^2 \mathbf{R}^{(n+1)})^{-1} \mathbf{k}({\mathbf{x}}_*, \bar{\mathbf{x}}_{1:k_n})^{T}  \label{ccov}
	\end{align}
	the objective becomes the quadratic program \newstuff{that minimizes}
	\begin{align}
	\im_{SAO}( (\Delta r_i)_{i=1}^{k_n} ) =(\bm{\omega}^{(n)})^T {\mathbf{C}}^{(n+1)} \bm{\omega}^{(n)} 
	\label{tmse}
	\end{align}
	under the constraint $\sum_{i} \Delta r_i^{(n)} = \Delta r^{(n)}$.

	Define the $k_n \times k_n$ matrix $\bm{\Sigma}^{(n)} = \mathbf{K} + \taun^2 \mathbf{R}^{(n)}$ and the $M \times k_n$ matrix $\mathbf{K}_* := \mathbf{K}({\mathbf{x}}_*, \bar{\mathbf{x}}_{1:k_n}) $.
	The next proposition, proven in Section~\ref{sec:deltar-proof}, explains how to pick $\Delta r_i^{(n)}$'s to minimize \eqref{tmse}.
	
	\begin{thm} \label{thm:deltar}
		Let $\Delta \mathbf{R}^{(n)} := \mathbf{R}^{(n)} - \mathbf{R}^{(n+1)}$ be a $k_n \times k_n$ diagonal matrix with elements $\Delta \mathbf{R}_{ii}^{(n)}  = \frac{\Delta r_i^{(n)}}{(r_i^{(n)} + \Delta r_i^{(n)})r_i^{(n)}} = [r_i^{(n)}]^{-1} - (r_i^{(n)} + \Delta r_i^{(n)})^{-1}$, $i = 1,\ldots,k_n$. Assume $\max_{i = 1,\ldots,k_n} \Delta \mathbf{R}_{ii}^{(n)}  \ll 1$. The optimal allocation rule that minimizes \eqref{tmse} is to assign $\Delta r_i^{(n)}$ to each $\bar{x}_i$ such that
		\begin{align}
		r_i^{(n)} + \Delta r_i^{(n)} \propto \mathbf{U}^{(n)}_i, \label{newri}
		\end{align}
		where
		\begin{align}
		\mathbf{U}^{(n)} = (\bm{\Sigma}^{(n)})^{-1}\mathbf{K}_*^{T} \bm{\omega}^{(n)} .\label{ui}
		\end{align}
	\end{thm}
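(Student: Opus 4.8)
The plan is to exploit the structure of $\mathbf{C}^{(n+1)}$ in \eqref{ccov}: of its two terms, the prior covariance $\mathbf{k}(\mathbf{x}_*,\mathbf{x}_*)$ does not depend on the new allocation, so it drops out of the optimization. Hence minimizing $\im_{SAO}$ in \eqref{tmse} is equivalent to \emph{maximizing} the quadratic form $(\bm{\omega}^{(n)})^T \mathbf{K}_* (\bm{\Sigma}^{(n+1)})^{-1} \mathbf{K}_*^T \bm{\omega}^{(n)}$, where $\bm{\Sigma}^{(n+1)} = \mathbf{K} + \taun^2 \mathbf{R}^{(n+1)}$. First I would write $\bm{\Sigma}^{(n+1)} = \bm{\Sigma}^{(n)} - \taun^2 \Delta \mathbf{R}^{(n)}$, which recasts the dependence on the unknown allocation as an additive \emph{diagonal} perturbation of the already-known, invertible matrix $\bm{\Sigma}^{(n)}$, with $\mathbf{b} := \mathbf{K}_*^T \bm{\omega}^{(n)}$ held fixed.

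The key step is a first-order expansion of the matrix inverse. Since $\max_i \Delta \mathbf{R}^{(n)}_{ii} \ll 1$ makes $\taun^2 \Delta \mathbf{R}^{(n)}$ a small perturbation, I would apply the resolvent (Neumann) expansion
\begin{align*}
(\bm{\Sigma}^{(n)} - \taun^2 \Delta \mathbf{R}^{(n)})^{-1} = (\bm{\Sigma}^{(n)})^{-1} + \taun^2 (\bm{\Sigma}^{(n)})^{-1} \Delta \mathbf{R}^{(n)} (\bm{\Sigma}^{(n)})^{-1} + O\big((\Delta \mathbf{R}^{(n)})^2\big),
\end{align*}
and discard the quadratic remainder. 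Substituting and recognizing $\mathbf{U}^{(n)} = (\bm{\Sigma}^{(n)})^{-1}\mathbf{b}$ from \eqref{ui}, the leading term $\mathbf{b}^T(\bm{\Sigma}^{(n)})^{-1}\mathbf{b}$ is allocation-independent and drops out. What remains to maximize is the diagonal sum $(\mathbf{U}^{(n)})^T \Delta \mathbf{R}^{(n)} \mathbf{U}^{(n)} = \sum_i (\mathbf{U}^{(n)}_i)^2\, \Delta \mathbf{R}^{(n)}_{ii}$, using that $\Delta \mathbf{R}^{(n)}$ is diagonal and $\bm{\Sigma}^{(n)}$ is symmetric.

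Finally I would reparameterize by the new counts $m_i := r_i^{(n)} + \Delta r_i^{(n)}$, so that $\Delta \mathbf{R}^{(n)}_{ii} = 1/r_i^{(n)} - 1/m_i$ and the budget constraint $\sum_i \Delta r_i^{(n)} = \Delta r^{(n)}$ becomes $\sum_i m_i = N_n + \Delta r^{(n)}$, a constant. Dropping the constant $\sum_i (\mathbf{U}^{(n)}_i)^2/r_i^{(n)}$, the task is to minimize $\sum_i (\mathbf{U}^{(n)}_i)^2/m_i$ subject to fixed $\sum_i m_i$. This is the standard weighted-allocation program, solved either by a Lagrange multiplier (setting $\partial/\partial m_i$ to zero gives $m_i^2 \propto (\mathbf{U}^{(n)}_i)^2$) or by Cauchy--Schwarz, both yielding $m_i \propto |\mathbf{U}^{(n)}_i|$, which is exactly \eqref{newri}.

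The main obstacle is twofold. First, the rigour of the perturbation step: making "the first-order term dominates" precise requires bounding the operator norm of $\taun^2 (\bm{\Sigma}^{(n)})^{-1} \Delta \mathbf{R}^{(n)}$ and checking that the neglected $O((\Delta \mathbf{R}^{(n)})^2)$ contribution is uniformly negligible, which is what the hypothesis $\max_i \Delta \mathbf{R}^{(n)}_{ii} \ll 1$ is meant to guarantee. Second, the closed-form optimum $m_i \propto |\mathbf{U}^{(n)}_i|$ ignores both the integrality of $m_i$ and the feasibility constraint $\Delta r_i^{(n)} \ge 0$ (existing replicates cannot be removed); the proposition states the idealized proportional rule, and I would flag that in practice one rounds and projects onto the feasible set, so the clean rule \eqref{newri} is only an approximate allocation.
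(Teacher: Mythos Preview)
Your proposal is correct and follows essentially the same route as the paper: reduce \eqref{tmse} to maximizing $(\bm{\omega}^{(n)})^T\mathbf{K}_*(\bm{\Sigma}^{(n+1)})^{-1}\mathbf{K}_*^T\bm{\omega}^{(n)}$, linearize the inverse in $\Delta\mathbf{R}^{(n)}$ under the small-perturbation assumption, and solve the resulting separable problem by a Lagrange multiplier to obtain $r_i^{(n)}+\Delta r_i^{(n)}\propto|\mathbf{U}_i^{(n)}|$. The only cosmetic difference is that the paper phrases the linearization via the Woodbury identity (writing $\Delta\mathbf{R}^{(n)}=\mathbf{B}^{(n)}\mathbf{B}^{(n)}$ and dropping the small middle block) rather than a direct Neumann expansion, and it differentiates in $\Delta r_i^{(n)}$ directly instead of your cleaner reparameterization $m_i=r_i^{(n)}+\Delta r_i^{(n)}$; your observations about the absolute value, integrality, and the nonnegativity constraint $\Delta r_i^{(n)}\ge 0$ are exactly the caveats the paper flags after the proof and handles via the pegging algorithm.
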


\begin{algorithm}[htb!]
		\caption{Adaptive Design with Stepwise Allocation (ADSA)}\label{alg:csao}
		\begin{algorithmic}
			\State{\bfseries Input:} $\bar{\mathbf{x}}_*$, $\bar{\mathbf{x}}_{1:k_0}$, $k_0$,  $r_0$, $c_{bt}$
			\State $\mathcal{A}_{k_0} \gets \{(\bar{x}_{i}, r_0, \bar{y}_{i}), i = 1,...,k_0\}$. $(\hat{f}^{(k_0)}, s^{(k_0)}) \gets f|\mathcal{A}_{k_0}$, $N_0 \gets r_0 \times k_0$.
			\For{$n = k_0, k_0+1, \ldots$}
			\State $\Delta r^{(n)} \gets c_{bt} \sqrt{n}$.
			\State Calculate allocations $\Delta r_i^{(n)}, 1 \leq i \leq k_n$ with Algorithm \ref{alg:pegging} \newstuff{(see Appendix \ref{appx:pegging})}.
			\State $\bar{x}_{k_n+1} \gets \arg\max_{x\in D} \im^{cUCB}_{n}(x, \Delta r^{(n)})$.
			\State Calculate $\mathcal{I}_{SAO}^{(n)-all}, \mathcal{I}_{SAO}^{(n)-new}$ in \eqref{timse0} and \eqref{timse1}.
			\State \textbf{Case 1:}
			 \Indent
				\State New $\bar{y}_{k_n+1} \gets \frac{1}{\Delta r^{(n)}}\sum_{j =1}^{\Delta r^{(n)}} y^{j}(\bar{x}_{k_n+1})$.
				\State Update $\mathcal{A}_{n+1} \gets \mathcal{A}_{n} \cup \{(\bar{x}_{k_n+1}, \Delta r^{(n)}, \bar{y}_{k_n+1})\}$.
				\State $N_{n+1} \gets N_n + \sum_i \Delta r^{(n)}_i$ (May not be exactly $\Delta r^{(n)}$).
				\State $k_{n+1} \gets k_n + 1$.
			\EndIndent
			\State \textbf{Case 2:}
			\Indent
				\State For $i = 1,...,k_n$, update $\bar{y}_i \gets \frac{\bar{y}_i \times  r_i^{(n)} + \sum_{j =1}^{\Delta r_i^{(n)}} y^{j}(\bar{x}_{i})}{r_i^{(n)} + \Delta r_i^{(n)}}$, $r_i^{(n+1)} \gets r_i^{(n)} + \Delta r_i^{(n)}$
				\State Update $\mathcal{A}_{n+1} \gets \{(\bar{x}_{i}, r_{i}^{(n+1)}, \bar{y}_{i})\}_{i = 1,\ldots,k_n}$.
				\State $N_{n+1} \gets N_n + \sum_{i=1}^{k_n} \Delta r_i^{(n)}$
				\State $k_{n+1} \gets k_{n}$
			\EndIndent
			\State Obtain $(\hat{f}^{(n+1)}, s^{(n+1)}) \gets f|\mathcal{A}_{n+1}$.
			\State ADSA: \textbf{Do} Case \textbf{1 if} $\mathcal{I}_{SAO}^{(n)-all} > \mathcal{I}_{SAO}^{(n)-new}$,  \textbf{otherwise do} Case \textbf{2}
			\State \{FDSA variant:\} \textbf{Do} Case \textbf{2}.
			\State \{DDSA variant:\} \textbf{Do} Case \textbf{1} if $n$ is odd, Case \textbf{2} if $n$ is even.
			\EndFor
		\end{algorithmic}
	\end{algorithm}
	
	After obtaining the allocations $\Delta \mathbf{r}^{(n)}_{1,\ldots,k_n}$, we compute the resulting look-ahead tIMSE metric:
	\begin{align}
	\mathcal{I}_{SAO}^{(n)-all} & := \sum_{j=1}^{M} \tilde{s}^{(n+1)}(x_{j,*})^2 \omega^{(n)}_j, \label{timse1}
	\end{align}
	where the look-ahead variance $\tilde{s}^{(n+1)}(\cdot)^2$ is based on the new replicate counts $r_i^{(n+1)} = r_i^{(n)} + \Delta r_i^{(n)}, i= 1,\ldots,k_n$, see proof in \citep{chevalier2014corrected, hu2015sequential}:
	\begin{align}
	\tilde{s}^{(n+1)}({\mathbf{x}}_*)^2 &= s^{(n)}({\mathbf{x}}_*)^2 - \mathbf{k}_* (\bm{\Sigma}^{(n)})^{-1}\Delta \mathbf{R}^{(n)} (\bm{\Sigma}^{(n)})^{-1}\mathbf{k}_*^{T}.
	\end{align}
	
	The alternative to allocating over existing $\bxbar$ is to pick a new input $\xnew$ and assign it $\Delta r^{(n)}$ simulations. To do so, we use the cUCB criterion to make it consistent with FB, MLB and RB. (Other acquisition functions can also be used and experiments suggest that the algorithm is not sensitive to this choice.) Then we evaluate the resulting  $\im_{SAO}^{(n)-new}$ :
	\begin{align}
	\mathcal{I}_{SAO}^{(n)-new} & := \sum_{j=1}^M s^{(n+1)}(x_{j,*}, \Delta r^{(n)})^2 \omega^{(n)}_j, \label{timse0} \\
	\nonumber s^{(n+1)}({{x}}_{j,*}, \Delta r^{(n)})^2
	&= s^{(n)}({{x}}_{j,*})^2 - \frac{v^{(n)}(x_{j,*}, \bar{x}_{k_n+1})^2}{\frac{\taun^2}{\Delta r^{(n)}}+s^{(n)}(\bar{x}_{k_n+1})^2}.
	\end{align}
	The sums in \eqref{timse1}-\eqref{timse0} are used as approximations of the underlying integrals over $x \in D$.
	Finally, we compare $\im_{SAO}^{(n)-new}$ and $\im_{SAO}^{(n)-all}$ to determine whether to sample at the new $\bar{x}_{k_n+1}$ or to allocate to existing  $\mathbf{x}_{1:k_n}$, picking the maximum of the two tIMSE metrics.
	
	For FB, MLB, RB and ABSUR, as we select one new input at each step, we have $k_n = n$. However, for ADSA we either select a new input or re-allocate, so that the resulting design size satisfies $k_n < n$. Thus, relative to the earlier schemes, in ADSA the size of $\cA_n$ and the number of DoE rounds $n$ are no longer deterministically linked and the number of unique inputs is endogenous to the particular algorithm run.
	
	A major goal of all our schemes is for $k_n$ to grow sub-linearly in $n$, i.e.~new inputs are added less frequently as more simulations are run. \newstuffblue{
There are two reasons for this: (1) As $k_n$ grows, the input space is better explored and one should favor exploitation more and more; (2) the GP overhead increases in $k_n$ so that each decision becomes more costly and therefore large batches are preferable. Put another way, $k_n \propto n$ is equivalent to fixed batching $\bar{r} = n/k_n$ and we wish for $r_n$ to grow (at least on average) in $n$.} In ADSA, we organically prefer re-allocation over adding inputs as $n$ grows. The user can further \newstuff{enhance} this situation by making the batches $\Delta r^{(n)}$ also grow in $n$.
	Specifically, we have found a good heuristic in taking $\Delta r^{(n)}$ to be proportional to $\sqrt{n}$ (see proportionality constant $c_{bt}$ in Algorithm \ref{alg:csao}), which is faster compared to constant batch sizes and more accurate than making $\Delta r^{(n)}$ linear in $n$ which is overly aggressive.

	\newstuff{\textbf{Deterministic DSA.} In practice we observe that the ADSA scheme tends to alternate roughly equally between re-allocation and addition of new inputs. To save computational overhead, we consider the simplified \emph{Deterministic Design with Stepwise Allocation} (DDSA) scheme that deterministically alternates between re-allocation and adding inputs, making $k_n = k_0 + \lceil(n-k_0)/2\rceil$ also deterministic. Observe that DDSA no longer needs to evaluate the expensive $\im_{SAO}^{(n)-all}$ and $\im_{SAO}^{(n)-new}$.} 

	\section{Results} \label{sec:synthetic}
	
	\subsection{Synthetic Experiments and Computational Implementation Details} \label{subsec:implementation}
	
	\begin{table}[t]
		\caption{Parameters for the 2-D modified \emph{Branin-Hoo} and the 6-D modified \emph{Hartman} experiments.}
		\label{tbl:experiments}
		\begin{center}
				\begin{tabular}{llll}
					\toprule
					& \textsc{Parameter} & 2-D \emph{Branin-Hoo} & 6-D \emph{Hartman} \\ \hline
					Simulation budget & $N_T$ & 2000 & 6000 \\
					Initial design size & $k_0$ & 20 & 60 \\
					Initial replicates & $r_0$ & 10 & 10 \\
					ADSA test set in \eqref{eq:adsa-l} & $M$ & 500 & 1000 \\
					\multirow{2}{*}{Replication levels} & \multirow{2}{*}{$\mathbf{r}_L$} & \multicolumn{2}{l}{$[5, 10, 15, 20, 30, 40, 50, 60, 80, $} \\
					& & \multicolumn{2}{r}{$100, 140, 180, 240, 300]$} \\
					{ABSUR  replication range} & $\mathcal{R}$ & $[5, 200]$ & $[5, 300]$ \\
					{ABSUR  simulation cost} & $T_{sim}$ & 0.01 & 0.05 \\
					{ABSUR  overhead cost in \eqref{c-over}} & $c_{ovh}(n)$ & \multicolumn{2}{c}{$\bm{\theta} = [0.137, 8.15\times 10^{-4}, 1.99\times 10^{-6}]$} \\
					{ADSA  batch factor} & $c_{bt}$ & 10 & 3.33 \\
					\bottomrule
				\end{tabular}
		\end{center}
	\end{table}
	
	In this section we benchmark the schemes on three synthetic case studies,
	employing rescaled \emph{Branin-Hoo} ($d=2$) and \emph{Hartman} ($d=6$) functions. We make linear transformations to the standard setups in order to rescale the output to $[-1,1]$ and have the zero-contour ``in the middle'' of the input space. For the Branin-Hoo case, we further restrict and rescale the original domain to make $f$ monotone along $x^1$ and to generate a single zero-contour curve. 
Full specifications are provided in the Online Supplement, see also~\cite{lyu2018evaluating}. The 2-D case studies with the Branin-Hoo response function employ two noise settings: (i) Gaussian $\epsilon \sim \mathcal{N}(0, 1)$; and (ii) heteroskedastic Student-$t$ where the distribution of $\epsilon$ is input-dependent: {$\epsilon(x) \sim t_{6-4x^1}(0,(0.4(4x^1+1))^2)$}. The latter setting is to test the influence of noise mis-specification. The third case study is in 6-D using the Hartman response and noise $\epsilon \sim \mathcal{N}(0, 1)$.

	The squared-exponential kernel $$
	K_{\text{se}}(x,x') := \sse^2\exp\bigg(-\sum_{i=1}^d\frac{(x^i-x'^i)^2} {2\ell_{i}^2}\bigg)$$ is used throughout as the GP covariance function. The covariance hyperparameters $\bm{\vartheta} = \{\ell_1, \ldots, \ell_d, \sse^2\}$ are estimated via MLE using {the \texttt{fmincon} optimizer in \texttt{MATLAB}}. We re-fit $\bm{\vartheta}$ every five DoE steps and otherwise treat it as fixed across $n$. The noise variance is taken to be known (i.e.~$\taun = 1$) in the first and third case studies. It is fitted (as an unknown constant) along with $\bm{\vartheta}$ for the experiments with Student-$t$ simulation noise.

\newstuff{For the 2-D case study the metrics $\cal{ER}$, $\mathcal{I}_{SAO}^{(n)-all}$, and $\mathcal{I}_{SAO}^{(n)-new}$ are computed as an equally weighted average over test points constructed using Latin Hypercube Sampling over the entire input space. In the 6-D case study we pick 80\% of the test points from the region $\{ x \in D : |f(x)| < 0.7\}$ that is close to the zero-contour and the remaining 20\% from the rest of the input space; the respective weights to compute the metrics are based on the volume of the former region. The same setup was used in \cite{lyu2018evaluating}; see also \cite{chevalier2014fast} for a detailed comparison between different sampling methods.}

	We use FB with batch size $r \equiv 10$ as a baseline, and compare the performance of MLB, RB, ABSUR, ADSA and DDSA. Performance is based on the error rate $\cal{ER}$ in \eqref{loss}, i.e.~evaluating (numerically, using a test set of size $M$) the symmetric difference between the true and estimated level set. This is done at a fixed simulation budget $N_T$, i.e.~each scheme is run for $k_T$ rounds  until $N_{k_T}=N_T$ the budget is exhausted. Note that the resulting number of DoE rounds $k_T$ will vary scheme-by-scheme and potentially run-by-run. We index $N_n,k_n$ by the DoE sequential iterations, while $N_T, k_T$ are indexed by total budget consumed. Table~\ref{tbl:experiments} provides further details about the parameters specific to each scheme. To optimize the various $\im$ acquisition functions we use a global, gradient-free,  genetic optimization approach as implemented in the \texttt{ga} function in \texttt{MATLAB}, with tolerance of $10^{-3}$ and $200$ generations.
	
	We fit all  the Gaussian Process surrogates  using the \texttt{GPstuff} suite in \texttt{MATLAB} \citep{vanhatalo2013gpstuff}. For easier reproducibility, our supplementary material contains \texttt{R} code, including the adaptive batching heuristics, to reproduce Figure~\ref{fig:amput2d} below. 
	We are happy to provide the \texttt{MATLAB} codes upon request.

\newstuff{The proposed adaptive batching strategies are not limited to the vanilla GP setup. Other metamodels can be straightforwardly substituted as long as they allow to efficiently evaluate the $\im_n$ criteria and the batch look-ahead variance $s^{(n+1)}(x, r)$. As an illustration, motivated by the non-Gaussian simulation noise  in the second case study and the option pricing application in Section~\ref{sec:bermudan}, we implement a GP metamodel with Student-$t$ observation noise (henceforth $t$-GP). In the $t$-GP formulation $\epsilon_i$ in \eqref{fundamental}  is taken to be $t$-distributed with variance ${\taun^2}$ and $\nu > 2$ degrees of freedom. \citet{lyu2018evaluating} showed that $t$-GP is a good choice in the face of noise misspecification. Appendix \ref{subsec:t} provides details of using a $t$-GP metamodel via a Laplace approximation approach.} \newstuffblue{Our schemes are moreover ported to work with the \texttt{hetGP} \citep{binois2019hetgp} in \texttt{R}, see Table~\ref{tbl:syntheticresultshet} below.}

	\newstuff{
	\subsubsection{Algorithm Tuning Parameters}
In this section we briefly describe the various tuning parameters associated with the proposed algorithms.
	For the UCB weight sequence $\rho^{(n)}$ in cUCB, we follow the recipe in~\cite{lyu2018evaluating} and set $\rho^{(n)} = IQR(\hat{f}^{(n)}) /3 Ave(s^{(n)})$ which keeps both terms in \eqref{ucb} approximately stable as $n$ changes. For MLB, we initialize $\gamma$ as the average standard deviation $Ave(s^{(k_0)}(\bar{x}_{1:k_0}))$ and take the reduction factor $\eta = 0.5$. For RB we use the same initial $\gamma$ but decrement it slower, $\eta = 0.8$. Higher $\eta$ increases the overall design size $k_T$ and therefore computation time. For MLB, $\eta \in [0.5, 0.7]$ leads to the lowest error rate $\cal{ER}$; for RB, we recommend $\eta \in [0.7, 0.9]$. For the replication levels $\mathbf{r}_L$ used in MLB and RB, we manually construct a ``ladder'' of $r^\ell$'s with spacing that increases roughly proportionally. In our experience, the choice of spacings (i.e.~number of levels $L$) does not play a major role, with the most important parameter of $\mathbf{r}_L$ being its upper bound $r^L$. If $r^L$ is too low, the gains from replication are limited; if $r^L$ is too high we observe over-exploitation with a design that does not have enough unique inputs.

For ABSUR, we recommend minimal replication level $\underline{r}$ of 5 or 10, and maximum replication of $\bar{r}= 0.05N_T$, i.e.~$5\%$ of the total budget $N_T$. Table~\ref{tbl:absur-rbar} in Appendix~\ref{sec:tuning-adsa} shows the impact of varying $\bar{r}$ from $1\%$ to $100\%$ of $N_T$. Unsurprisingly, increasing $\bar{r}$ decreases the design size $k_T$ and computation cost $t$. Note that because the scheme tries to optimize actual $r_n$ in the interval $[\underline{r}, \bar{r}]$, for very large $\bar{r}$ that constraint is not binding and so the impact is minimal, see last few rows in Table~\ref{tbl:absur-rbar}. In the middle of its range, the role of $\bar{r}$ is similar to that of $r^{L}$ for MLB and RB.

The coefficients $\bm{\theta}$ in the quadratic overhead function $c_{ovh}(n)$ in \eqref{c-over}, as well as the simulation cost $T_{sim}$ are pre-tuned via a linear least squares regression with the given simulator and hardware setup. Thus, they are not really tuning parameters, but reflect the relative computational effort between regression and simulation. Nevertheless, to provide some intuition, the right panel of Table~\ref{tbl:absur-rbar} shows the impact of changing $T_{sim}$ for one of our experimental setups. Higher $T_{sim}$ encourages exploration. Thus, to avoid too much exploitation and very high $r_n$'s we recommend not to make $T_{sim}$ too small; in our experiments this translates to $T_{sim} \in [0.01, 1]$.

	For the batch factor in ADSA and DDSA we take $c_{bt} = 20/d$, which favors more exploration in higher-dimensional problems with larger input domains. Table~\ref{tbl:adsa-cbt} in Appendix~\ref{sec:tuning-adsa} shows the effect of changing $c_{bt} \in [10/d,  80/d]$.  For both algorithms the design size $k_T$ decreases as $c_{bt}$ increases. However, the change in $k_T$, as well as in the error rate $\cal{ER}$ for DDSA is more significant than for ADSA, especially when simulation noise is low. 
	DDSA achieves lower $\cal{ER}$ with a smaller $c_{bt}$, while ADSA has a lower error rate with $c_{bt}$ lower than $20/d$.
}

\newstuffblue{A benefit of working with simulation batches is that the related computation is trivially parallelizable. Like all sequential methods, our schemes cannot be run fully in parallel, since the choice of $x_{n+1}$ must be done one-by-one. Nevertheless, assuming that most time is spent on simulation, distributing those across several computing cores will generate substantial savings that are not possible without batching. To maximally leverage this, one should set $r_n$ to be a multiple of the available number of cores. In the examples below we do not employ any parallelization.}
	
	\subsection{Algorithm Performance} \label{subsec:synthetic}

	\begin{table}[htb!]
		\caption{Scheme performance across the two  synthetic case studies. Results are means ($\pm$ standard deviations) from 50 runs of each combination of a metamodel and batching scheme.
			%
			\label{tbl:syntheticresults}}
		\begin{center}
				\begin{sc}
					\begin{tabular}{lrrr}
						\toprule
						Design &  Error Rate $\mathcal{ER}_T$& Time/s & Ave~$k_T$\\
						\midrule
						\multicolumn{4}{c}{\emph{2-D Branin-Hoo} with $\epsilon \sim \mathcal{N}(0, 1)$} \\
						\midrule
						FB &  0.019  $\pm$  0.005 & 118.89  & 200.00 \\
						ABSUR    &  0.021   $\pm$ 0.007 &  10.32 & 35.20 \\
						RB    &  0.021 $\pm$   0.008  & 8.30  & 38.72 \\
						MLB     &  \newstuff{\textbf{0.018  $\pm$  0.008}} &  8.63 & 38.44 \\
						ADSA &  0.020 $\pm$ 0.008 & 14.11 & 34.42 \\
						DDSA &  0.022 $\pm$ 0.007 & \newstuff{\textbf{7.92}}  & 37.00 \\
						\midrule
						\multicolumn{4}{c}{\emph{6-D Hartman} with $\epsilon \sim \mathcal{N}(0, 1)$ and $N_T = 6000$} \\
						\midrule
						FB &  0.030  $\pm$  0.004 & 1934.51 & 600.00\\
						ABSUR    & 0.070   $\pm$ 0.015 &  289.52 & 159.80 \\
						RB     & 0.058 $\pm$   0.014  & 104.68 & 143.40 \\
						MLB     & \newstuff{\textbf{0.037   $\pm$ 0.008}} &  294.49  & 240.62 \\
						ADSA  & 0.043 $\pm$ 0.007 & 198.82 & 171.74 \\
						DDSA  & 0.050 $\pm$ 0.009 & \newstuff{\textbf{101.59}}  & 142.00\\
						\midrule	
						\multicolumn{4}{c}{\emph{6-D Hartman} with $\epsilon \sim \mathcal{N}(0, 1)$ and $N_T = 30000$} \\
						\midrule
						FB $r_n = 50$ &  0.015  $\pm$  0.002 & 1654.32 & 600.00\\
						FB $r_n = 100$ &  0.016  $\pm$  0.002 & 461.57  & 330.00\\
						FB $r_n = 200$ &  0.029  $\pm$  0.006 & \newstuff{\textbf{152.21}} & 195.00\\
						ABSUR    &  0.022   $\pm$ 0.003 &  757.18 & 325.25 \\
						RB    &  0.024 $\pm$   0.005  & 227.01 & 237.05 \\
						MLB   &  0.022   $\pm$ 0.006 &  240.61  & 242.95 \\
						ADSA & \newstuff{\textbf{0.016 $\pm$ 0.002}} & 995.57  & 373.80 \\
						DDSA  & 0.017 $\pm$ 0.002 & 522.00  & 350.00 \\
						\bottomrule
					\end{tabular}
				\end{sc}
		\end{center}
	\end{table}

	\begin{figure*}[ht]
		\begin{tabular}{ccc}
			\begin{minipage}[t]{0.33\linewidth}
				\begin{center}
					\includegraphics[width=1\textwidth]{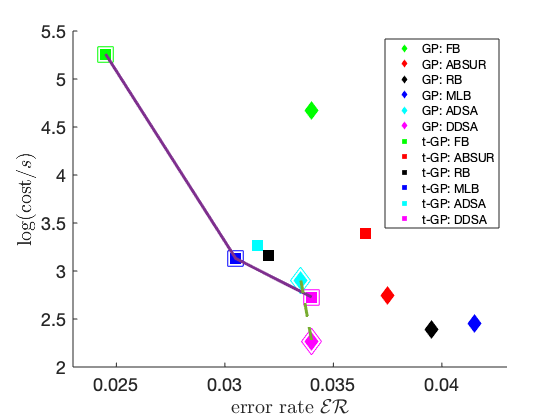}
				\end{center}
			\end{minipage} &
			\begin{minipage}[t]{0.33\linewidth}
				\begin{center}
					\includegraphics[width=1\textwidth]{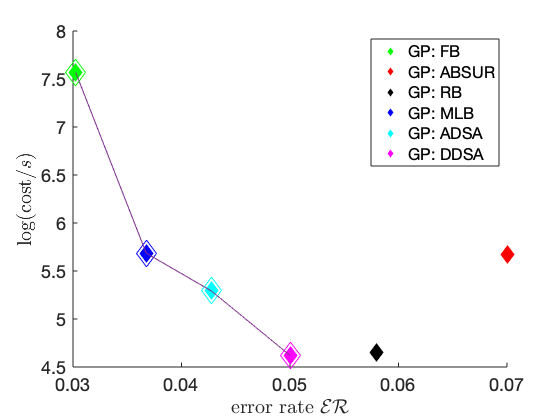}
				\end{center}
			\end{minipage} &
			\begin{minipage}[t]{0.33\linewidth}
				\begin{center}
					\includegraphics[width=1\textwidth]{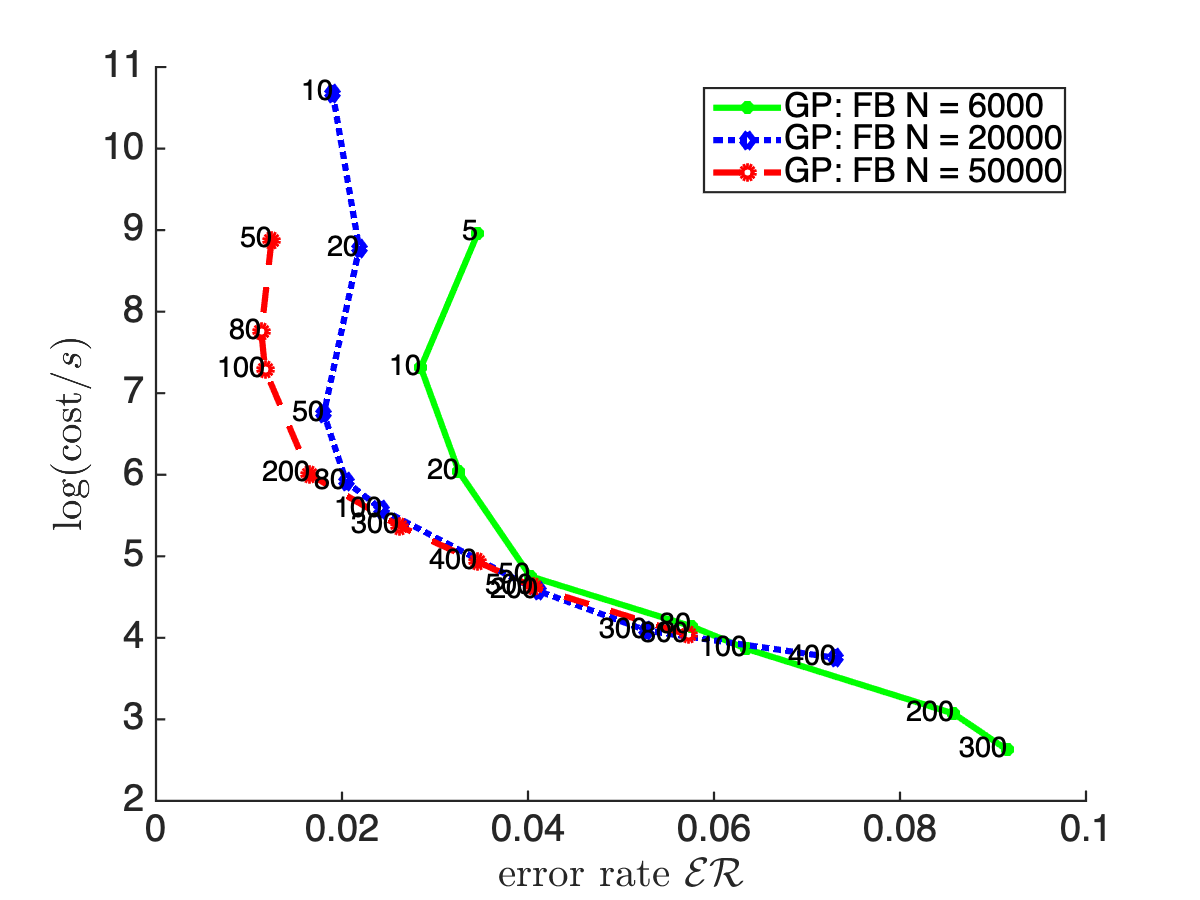}
				\end{center}
			\end{minipage} \\
			2-D Branin-Hoo & 6-D Hartman &  FB Comparison \\
		\end{tabular}
		\caption{Running time and ultimate  error rate $\mathcal{ER}_T$ across different schemes. \emph{Left panel:} 2-D Branin-Hoo with heteroskedastic noise and budget $N_T = 2000$. \emph{Middle panel:} 6-D Hartman function with Gaussian noise and $N_T = 6000$. \emph{Right panel:} 6-D Hartman function with Gaussian noise for FB with different values of $r$. \newstuff{The Pareto frontiers are highlighted for GP (solid line) and $t$-GP (dashed line).}
			\label{fig:ervscost}}
	\end{figure*}
	
	Our main goal with adaptive batching is improved computational performance. Of course, a faster algorithm generally requires to sacrifice predictive accuracy.  As such, direct comparison of schemes is not possible but must be considered through the above trade-off.
	Figure \ref{fig:ervscost} and Table \ref{tbl:syntheticresults} show the link between the error rate $\cal{ER}$  from \eqref{loss} and the running time across the proposed schemes. Since we desire fast and accurate schemes, there is a Pareto frontier going from top-left to bottom-right. In the 2-D case study (shown in the left panel in Figure~\ref{fig:ervscost}), we see that the most accurate scheme is $t$-GP with FB, while the fastest is GP with DDSA. Another Pareto-efficient scheme is $t$-GP with MLB which is arguably the best (the second fastest among $t$-GPs, and the second most accurate). In 6-D ABSUR works poorly, probably due to under-performance of the gSUR criterion; see~\cite{lyu2018evaluating} who showed that cUCB appears to be empirically better for this 6-D Hartman function. Another reason is that gSUR converges in a slower rate, see the middle panel in Figure \ref{fig:costvser}: gSUR takes $N_T \approx 30000$ simulations to achieve a comparably small error rate $\mathcal{ER}$. However, in Figure \ref{fig:ervscost}, $N_T = 6000$ for 6-D experiments. 
	
	Looking at the running times, we see that there are major gains from adaptive batching; the baseline FB scheme takes almost 10 times longer to run than designs with adaptive $r_i$'s. Fixed batching generally performs well in terms of $\cal{ER}$ (as it ends up being more exploratory) but practically those gains are crowded out by the huge cost in computational efficiency. Overall, among the five proposed schemes the recommended choice is MLB and ADSA which tend to produce low $\cal{ER}$ with a significant reduction in computational time.

	\newstuff{As mentioned in the Introduction, the benefit of replication is inextricably tied to simulation noise. To this end, in Appendix~\ref{sec:tuning-adsa} we investigate the role of the signal-to-noise ratio (SNR) on algorithm's performance by varying the noise variance $\taun^2$ in the 2-D case study with Gaussian noise. Figure~\ref{fig:kT_vs_taun2} shows that as $\taun^2$ increases, designs become smaller ($k_T$ decreases, except for ADSA). The performance metrics are reported in
Tables~\ref{tbl:absur-rbar} and \ref{tbl:adsa-cbt} in Appendix~\ref{sec:tuning-adsa}. As expected, lower SNR increases $\mathcal{ER}_T$ and algorithms should be tuned depending on the level of noise. For example, for ADSA and DDSA, one should increase $c_{bt}$ if SNR is low; for ABSUR one should increase $\bar{r}$. Some intuition can also be gleaned from Table~\ref{tbl:syntheticresults} and Table~\ref{tbl:syntheticresultshet}: the second experiment with $t$-distributed noise has much lower SNR compared to the first one with $\epsilon \sim {\cal N}(0,1)$. Lower simulation noise means that less replication is needed, which implies reducing $\bar{r}$ and $r^L$ and tends to advantage MLB compared to ADSA and ABSUR. Consistent with conclusions in \cite{lyu2018evaluating}, $t$-GP performs better than plain GP in such a setup where noise is heavy-tailed.}

	
		\begin{table}[htb!]
		\caption{\newstuffblue{Scheme performance in the 2-D heteroskedastic synthetic case study with \emph{2-D Branin-Hoo} response and noise $\epsilon(x^1, x^2) \sim t_{6-4x^1}(0,0.16(4x^1+1)^2)$. Results are means ($\pm$ standard deviations) from 50 runs of each combination of a metamodel and batching scheme. Note that the running times for GP and $t$-GP, which are from \texttt{MATLAB}, and for hetGP, which is from \texttt{R}, are not  comparable.
				%
				\label{tbl:syntheticresultshet}}}
		\begin{center}
				\begin{sc}
					\begin{tabular}{lrrr}
						\toprule
						Design &  Error Rate $\mathcal{ER}_T$& Time/s & Ave~$k_T$\\
						\midrule
						\multicolumn{4}{c}{Plain GP  in \texttt{MATLAB}} \\
						\midrule
						FB & 0.034 $\pm$ 0.029 & 106.37 & 200.00 \\
						ABSUR    & 0.037  $\pm$  0.039  & 15.50  & 39.14 \\
						RB     & 0.039  $\pm$  0.035 &  10.93 & 39.92 \\
						MLB    & 0.041  $\pm$  0.041 & 11.61  & 42.26 \\
						ADSA  & \newstuff{\textbf{0.033 $\pm$ 0.042}} & 18.20 & 34.82 \\
						DDSA  & 0.034 $\pm$ 0.043 & \newstuff{\textbf{9.67}} & 37.00 \\
						\midrule
\multicolumn{4}{c}{$t$-GP in \texttt{MATLAB}} \\
						\midrule
						FB  & 0.024  $\pm$  0.010 &  192.44  & 200.00 \\
						ABSUR     & 0.036  $\pm$  0.014 & 29.55  & 35.00 \\
						RB    & 0.032  $\pm$  0.014 &  23.65 &39.66 \\
						MLB     & \newstuff{\textbf{0.030  $\pm$  0.018}} & 22.88  & 39.72 \\
						ADSA  & 0.031 $\pm$ 0.013 & 26.26 & 30.68 \\
						DDSA  & 0.034 $\pm$  0.018 & \newstuff{\textbf{15.30}} & 37.00 \\
						\midrule
\multicolumn{4}{c}{hetGP in \texttt{R}} \\
						\midrule
						FB  & 0.035 $\pm$ 0.010 & 36.93 & 200.00 \\
						ABSUR   & 0.031  $\pm$  0.011 & 5.38  & 46.40 \\
						RB   & 0.035  $\pm$  0.010 &  1.45 &48.10 \\
						MLB     & 0.034  $\pm$  0.017 & \newstuff{\textbf{1.31}}  & 49.10 \\
						ADSA  & 0.035 $\pm$ 0.010 & 2.98 & 41.75 \\
						DDSA  & \newstuff{\textbf{0.030 $\pm$  0.010}} & 1.51 & 36.00 \\
	cIMSPE  & 0.032 $\pm$ 0.016 & 2.47 hrs & 1028.20 \\
						\bottomrule
					\end{tabular}
				\end{sc}
		\end{center}
	\end{table}

\newstuffblue{To further investigate the impact of noise on different schemes, as well as to showcase the use of alternative GP metamodels, Table~\ref{tbl:syntheticresultshet} shows results for the 2D Branin-Hoo experiment with heteroskedastic noise $\epsilon \sim t_{6-4x^1}(0,(0.4(4x^1+1))^2)$.
In this experiment we test both the different batching schemes, as well as two other metamodel familiies: $t$-GP and hetGP. $t$-GP extends the GP paradigm to allow for $t$-distributed observations, see Appendix \ref{subsec:t}. \texttt{hetGP}, implemented in the eponymous \texttt{R} library \citep{binois2019hetgp}, non-parametrically learns not just the mean response but also the input-dependent observation noise surface $\tau^2(\cdot)$.}

\newstuffblue{Using the \texttt{hetGP} library we further compare our adaptive batching to the cIMSPE algorithm described in Section 4.2 of \cite{binois2019hetgp}. cIMSPE is similar in spirit to ADSA except that it allocates simulations one-by-one. At each step, cIMSPE uses a criterion $\im_n$ to decide whether to add a new unique input, or increase by one the replicate count at an existing input. The comparison is based on the expected value of $\im_n$ and is replication-biased by comparing not just one-step-ahead but over a horizon of $h$. We use the cUCB criterion $\im^{cUCB}$ and a horizon of $h=3$. While cIMSPE offers a strong motivation for sequential construction of replicated designs, it is extremely slow because it has as no intrinsic batching and therefore requires $N$ sequential steps to allocate $N$ simulations. Consequently, it is only feasible when $N$ is small and takes orders of magnitudes more time in our setting with $N = 2000$. This limitation of the cIMSPE was one of the motivations for explicitly incorporating batching (rather than simply accommodating replication) in our approaches.}

\newstuffblue{Several observations can be gleaned from Table~\ref{tbl:syntheticresultshet}: (1) In terms of metamodels, $t$-GP and hetGP perform better than plain GP in this context with heteroskedastic noise. (2) In terms of adaptive batching schemes, their accuracy ($\cal{ER}$) is generally quite similar. DDSA runs the fastest and has among the lowest running times. (3) The comparator schemes yield similar error rates but are not competitive in terms of running times: cIMSPE is about 100 times slower and generates over a 1000 unique inputs compared to less than 50 for our schemes. FB is also slow (~6 times slower), although in combination with $t$-GP it does achieve the overall lowest error rate $\cal{ER}$.}

	To give some intuition about how the replication level should depend on the total budget $N_T$, the right panel of Figure~\ref{fig:ervscost} shows the performance of FB as we vary $r$ and $N_T$. As expected, lower $r$ generally leads to lower error rate ${\cal ER}$ but longer running time. This indicates the intrinsic necessity to explore the input space adequately which introduces a lower bound regarding the number of unique inputs $k_T = N_T/r$ for FB. However, for very low $r$ (e.g.~$r < 20$ for $N_T=6000$) there is essentially no gain from additional exploration implying that one can safely agglomerate simulations into batches without sacrificing accuracy. The resulting J-shape in the Figure implies that there is an "optimal" $r^*(N)$ that minimizes ${\cal ER}$ without needless performance degradation: $r^*(6000) \simeq 10, r^*(2 \cdot 10^4) \simeq 50, r^*(5 \cdot 10^4) \simeq 100$. This feature showcases both the strength and the weakness of fixed batching: in principle excellent performance is possible if $r \simeq r^*$ is fine-tuned; however such fine-tuning is very difficult and without it FB can be highly inefficient. The proposed adaptive batching schemes aim to automatically fine-tune $r_n$ sequentially removing this limitation.

	\begin{figure}[ht]
		\begin{minipage}[t]{0.33\linewidth}
			\begin{center}
				\includegraphics[width=1\columnwidth]{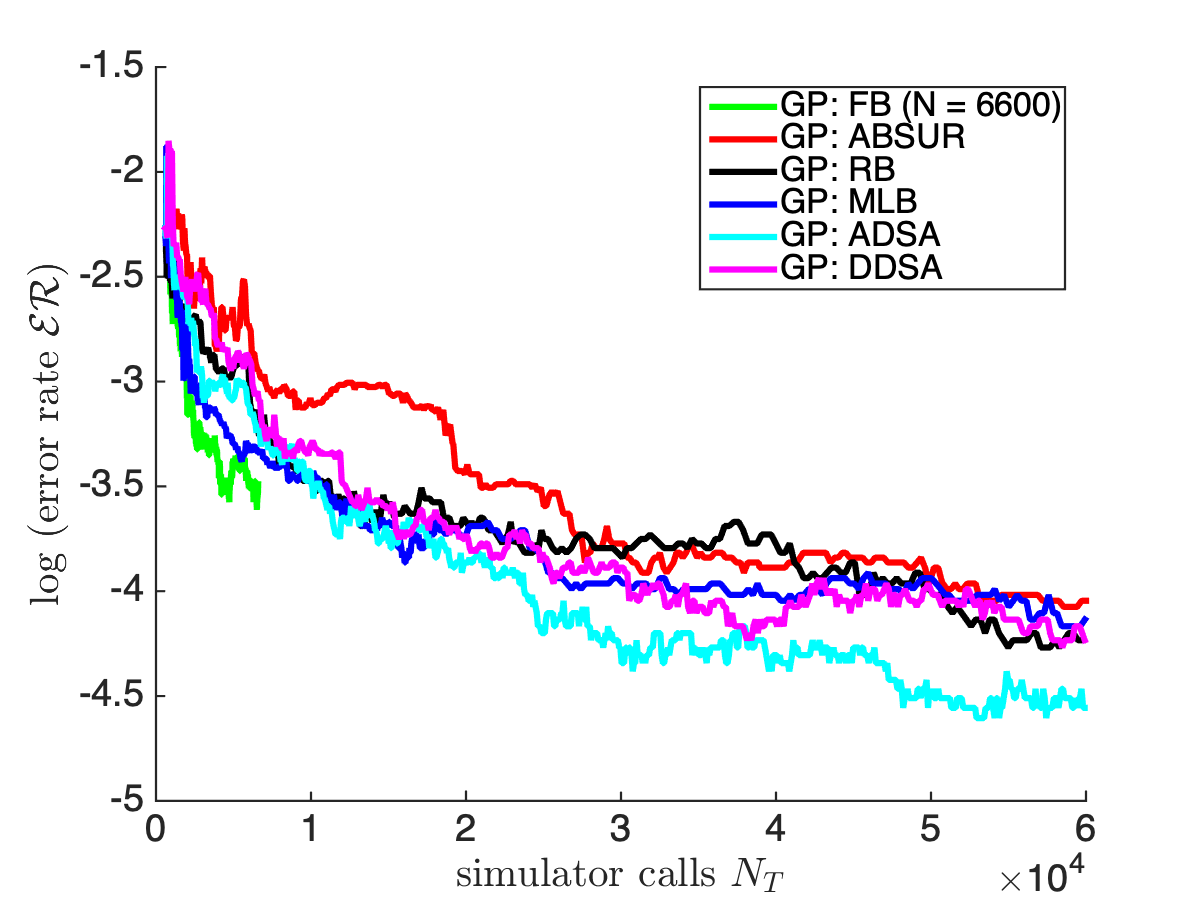}
			\end{center}
		\end{minipage}
		\begin{minipage}[t]{0.33\linewidth}
			\begin{center}
				\includegraphics[width=1\columnwidth]{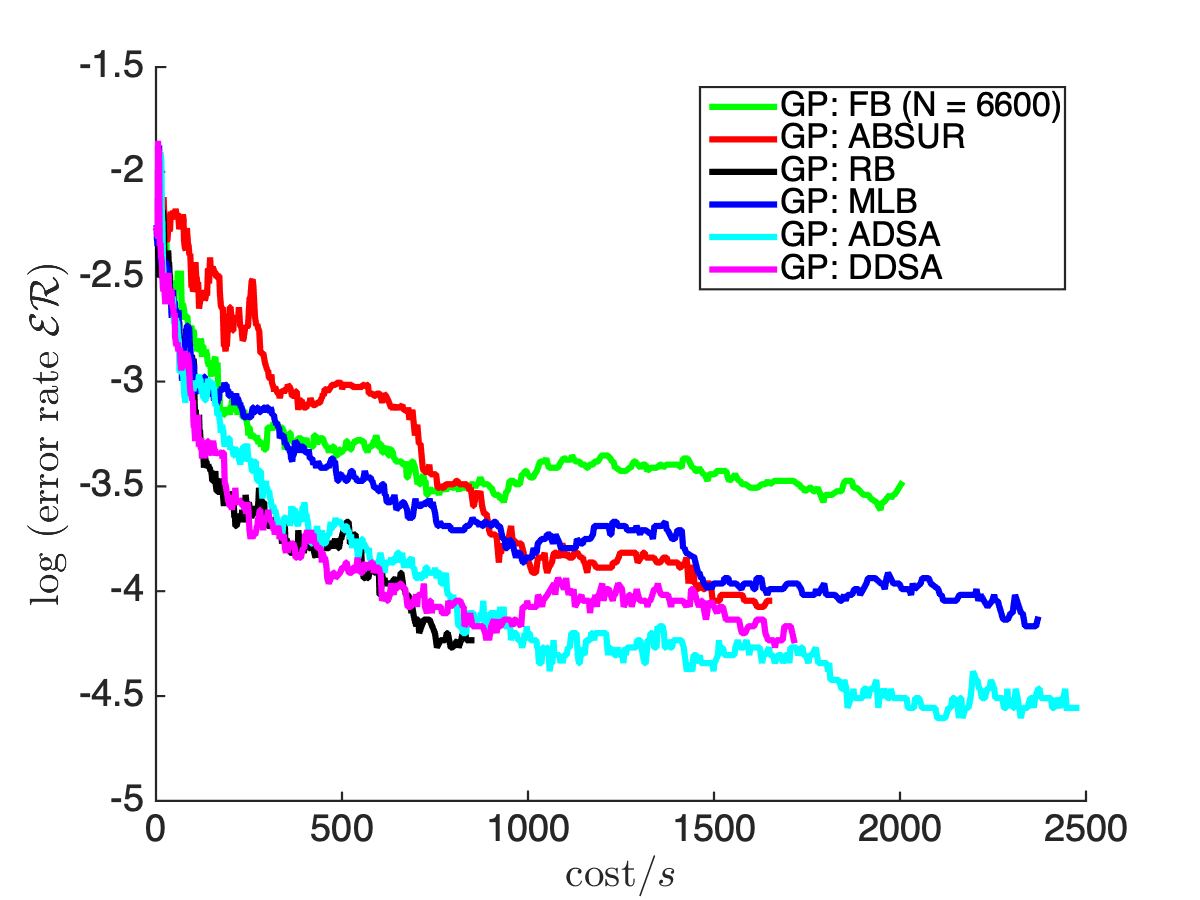}
			\end{center}
		\end{minipage}
		\begin{minipage}[t]{0.33\linewidth}
			\begin{center}
				\includegraphics[width=1\columnwidth]{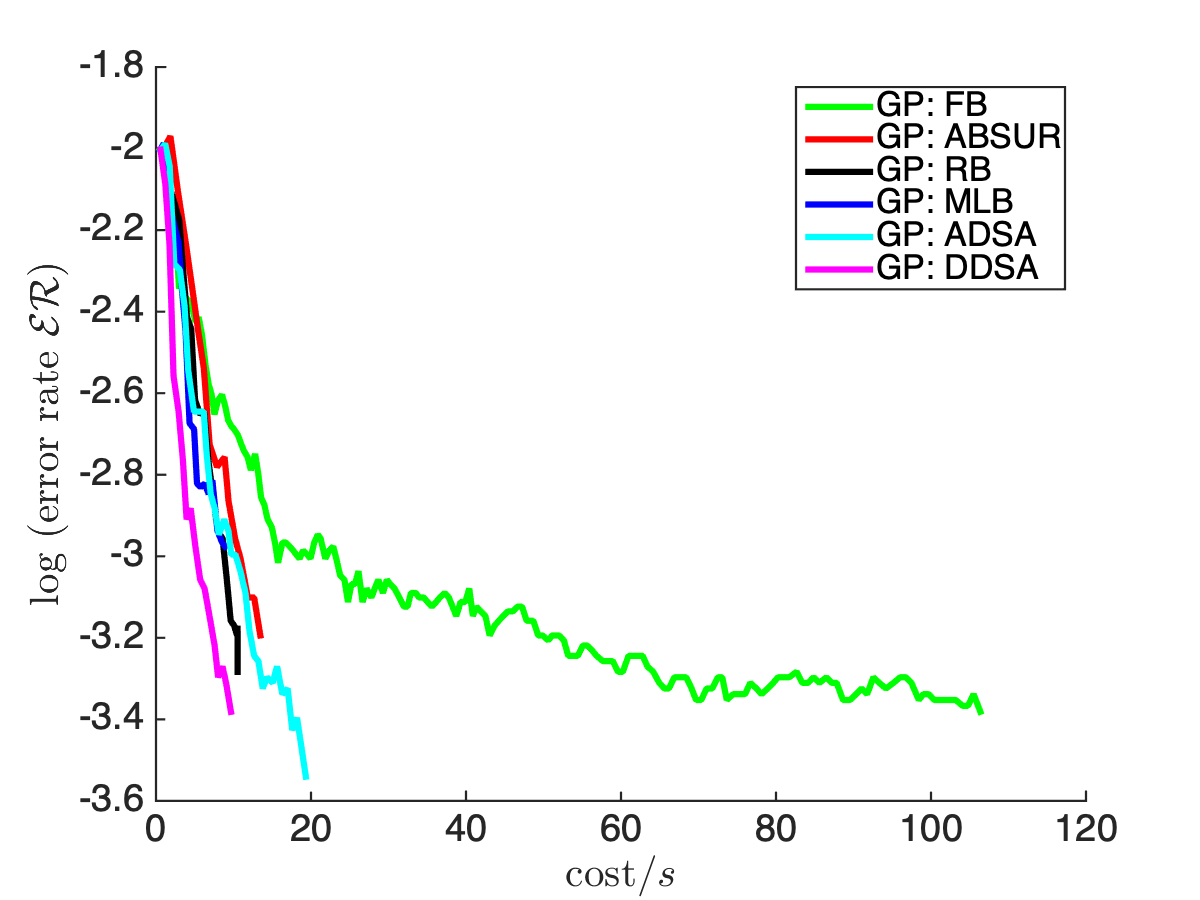}
			\end{center}
		\end{minipage}
		\caption{Log Error rate $\log \mathcal{ER}_t$ as a function of simulator calls $N_t$ for FB ($r = 10$), ABSUR, RB, MLB, ADSA and DDSA and 6-D  experiments (left panel). Log error rate $\log \mathcal{ER}_t$ as a function of running time $t$ for the 6-D case study with Gaussian noise (middle panel) with $N_T = 60000$ and for the 2-D case study with heteroskedastic noise (right panel) with $N_T = 2000$. The FB algorithm is stopped at $N_t = 6000$ since computation is too slow.}
		\label{fig:costvser}	
	\end{figure}

	Another goal of adaptive batching is to enable an organic way to grow designs as $N_T$ changes (while for FB $r$ necessarily must be pre-chosen in terms of $N_T$). A good algorithm is able to efficiently improve its accuracy as $N_T$ grows, avoiding excessive exploration or exploitation.
	The left panel of Figure~\ref{fig:costvser} shows the $\log$ error rate $\mathcal{ER}$ as a function of $N_T$ for FB, ABSUR, RB, MLB, ADSA and DDSA for the 6-D \emph{Hartman} experiments, respectively.  For FB, we stopped at $N_T = 6000$ due to prohibitive running times for designs. We observe that \newstuff{while all schemes perform somewhat similarly, MLB reduces the error rate $\mathcal{ER}$ at the fastest rate when $N_n < 600$, and otherwise, ADSA is the fastest}. ADSA shines in the later stage of sequential development of DoE, since it needs enough ``candidate inputs" to calculate the allocation rule. In terms of computational efficiency, we are concerned not with ${\cal ER}$ in terms of $N_T$ but in terms of running time---i.e.~how much predictive accuracy can be achieved within a given time budget. The respective relationship is shown in the middle and right panels of Figure~\ref{fig:costvser} where the $x$-axis is now in terms of $t$ seconds. We observe that all the adaptive schemes reduce the error rate $\mathcal{ER}$ at a faster rate than a scheme with fixed replication level. In the early stage, RB and DDSA are the fastest, and ABSUR is the slowest. However, as $N_T$ or $t$ continues to rise, ADSA keeps reducing the error rate $\mathcal{ER}$ and eventually achieves a smaller $\mathcal{ER}$ than other algorithms. However, ADSA usually takes slightly longer time.  In conclusion, ADSA is the most accurate algorithm given a large enough cost $t$ or simulator calls $N_T$, and MLB is the most accurate algorithm when $N_T$ is small. Results are consistent with those observed in Figure~\ref{fig:ervscost}.

	\begin{figure}[h]
		\begin{minipage}{0.45\textwidth}
			\begin{center}
				\includegraphics[width=0.9\columnwidth]{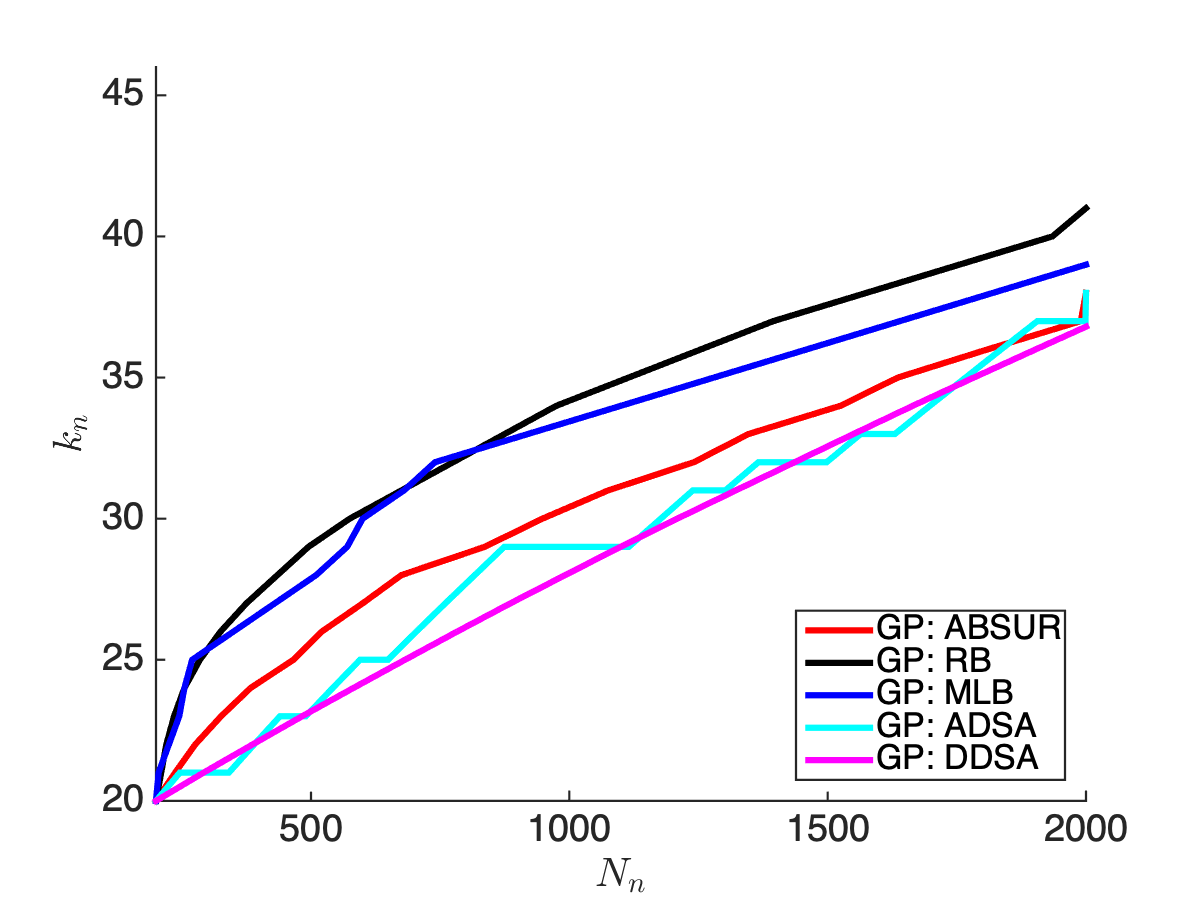} \\
			\end{center}
		\end{minipage}
		\begin{minipage}{0.45\textwidth}
			\begin{center}
				\includegraphics[width=0.9\columnwidth]{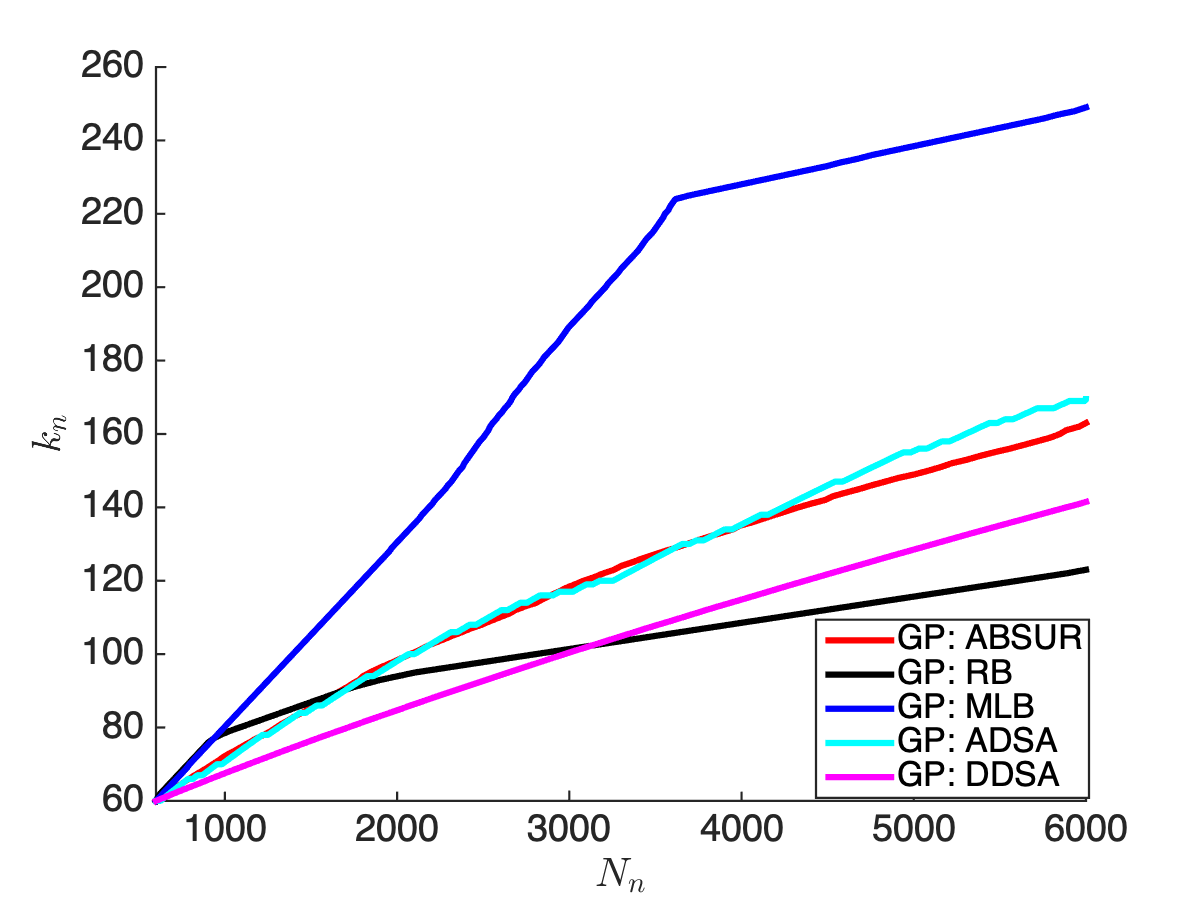} \\
			\end{center}
		\end{minipage}
		\caption{The design size $k_n$ as a function of simulator calls $N_n$. Left: 2-D case study with heteroskedastic noise; Right: 6-D case study with Gaussian noise. 
}
		\label{fig:kvsN}
	\end{figure}

	Recall that GP model fitting complexity is  $\mathcal{O}(k_n^3)$ (driven by the matrix inversion $\mathbf{K}^{-1}$), so that the design size $k_n = | \cA_n|$ is the primary driver of computational efficiency. In the baseline FB scheme, $r^{(n)} \equiv r$ is constant so that $k_n = N_n/r$ grows linearly in simulator budget $N_n$. This is precisely the reason that a constant $r$ becomes impossible to maintain as $N_n$ grows and why we had to abandon FB in the left panel of Figure~\ref{fig:costvser}. A key aim of adaptive batching is to achieve \emph{sub-linear} growth of $k_n$ i.e.~$k_n/N_n \to 0$ as $n$ grows so that $r^{(n)}$ keeps getting larger as we develop the DoE. Figure \ref{fig:kvsN} plots $k_n$ as a function of $N_n$ for 2-D and 6-D experiments.  As desired, we observe a generally concave shape, which is approximately of square-root shape. The stair-case shape of $k_n$ for ADSA is due to the adaptive re-allocation of new simulations which allow to increase $N_n$ without changing $k_n$ at some steps. We note that RB and ADSA achieve the most concave shape and hence would be the fastest for very large $N_n$ which can be seen indirectly in Figure~\ref{fig:costvser} as well. 

	\subsection{Comparing Designs}
	To drill down into the designs obtained from different approaches, Figure~\ref{fig:zbyn} visualizes the adaptively batched designs produced for the 2-D Branin-Hoo experiment with heteroskedastic Student-$t$ noise. The left panel displays the resulting design size $k_T$ with simulation budget of $N_T = 2000$. Recall that besides FB and DDSA, design sizes of all other schemes vary across algorithm runs (i.e.~$k_T$ depends on the particular realizations $y_{1:N_T}$), so that $k_T$ is a random variable; in the plot we visualize its boxplot across 50 runs of each scheme. The smallest designs are obtained from ADSA (31-39 unique inputs). DDSA produces exactly $k_T=37$ unique inputs. Recall that DDSA alternates between adding a new site and re-allocating to existing sites, while ADSA does the same adaptively; in this case we find that slightly more than half the time re-allocation is preferred. The design size $k_n$ for ABSUR is slightly larger at 34-42. The value of $k_T$ for RB varies from 37 to 45, while for MLB has the greatest number of unique inputs, ranging from 34 to 50. Given $N_T = 2000$ the above implies that the schemes average about $Ave(r^{(n)}) = $40-60 replicates per site. The middle panel of Figure~\ref{fig:zbyn} shows the replication level $r^{(n)}$ as a function of design size $k_n$ for a typical run of schemes from Section \ref{sec:asur}, illustrating how replication is increased sequentially. Methods that raise $r^{(n)}$ faster end up with smaller design size $k_T$. ABSUR increases $r^{(n)}$ the fastest, with MLB having a similar pattern. With RB $r^{(n)}$ grows slower, implying that RB builds designs with more unique inputs.
	
	\begin{figure}[h]
		\begin{minipage}[t]{0.33\textwidth}
			\begin{center}
				\includegraphics[trim = 20 170 20 190, clip = true, width=1\columnwidth]{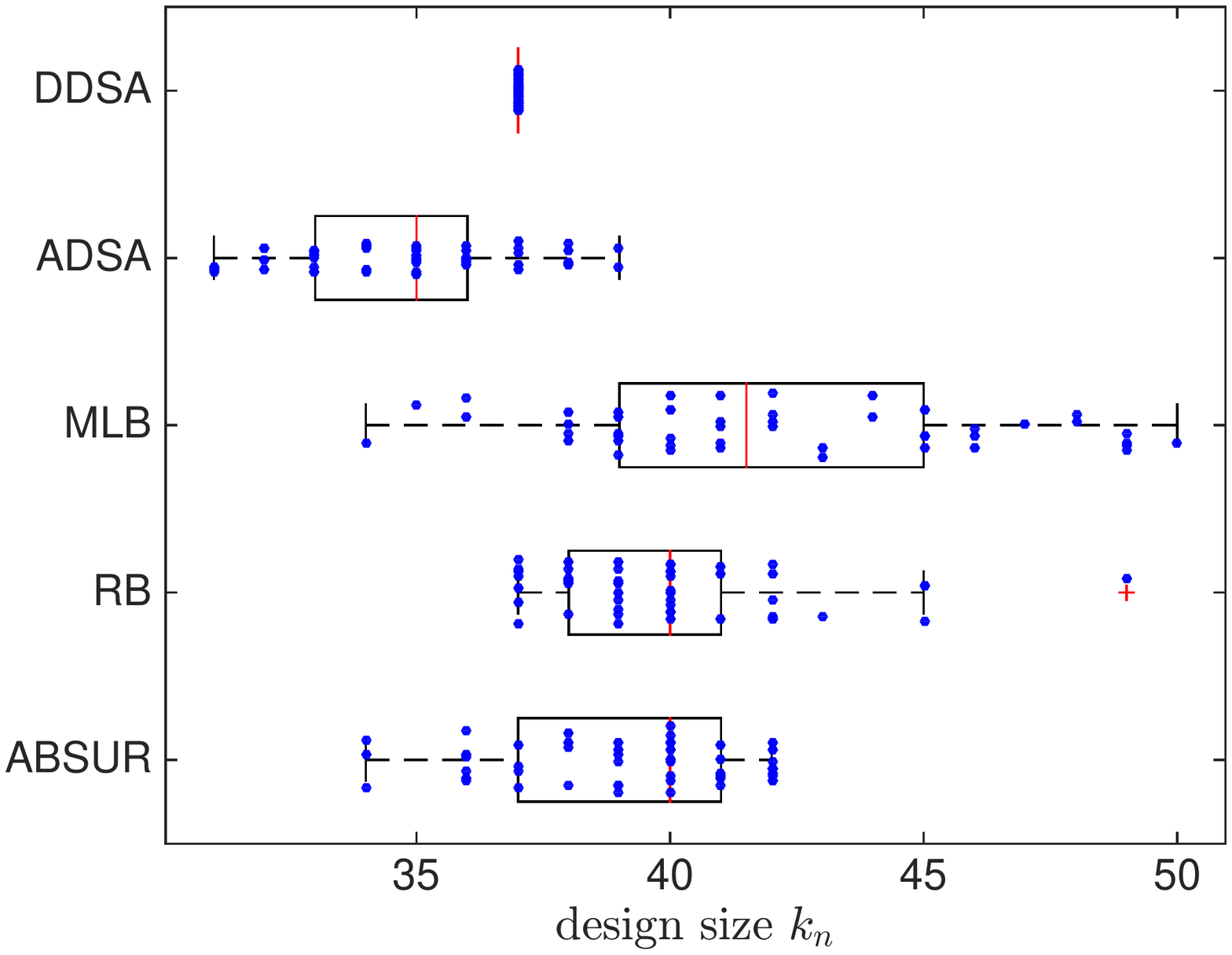}
			\end{center}
		\end{minipage}
		\begin{minipage}[t]{0.33\textwidth}
			\begin{center}
				\includegraphics[trim = 20 170 20 190, clip = true, width=1\columnwidth]{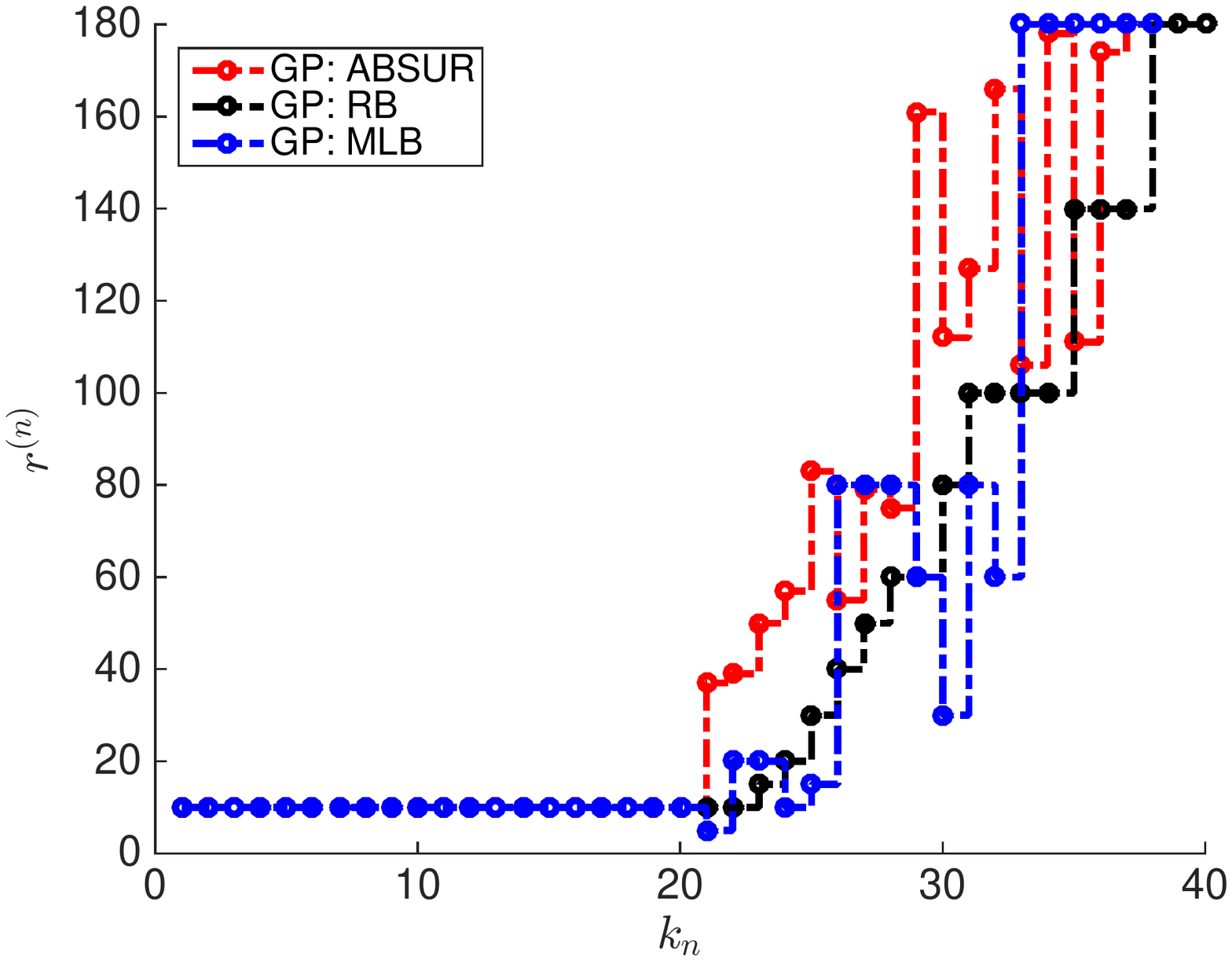}
			\end{center}
		\end{minipage}
		\begin{minipage}[t]{0.33\textwidth}
			\begin{center}
				\includegraphics[trim = 20 170 20 190, clip = true, width=1\columnwidth]{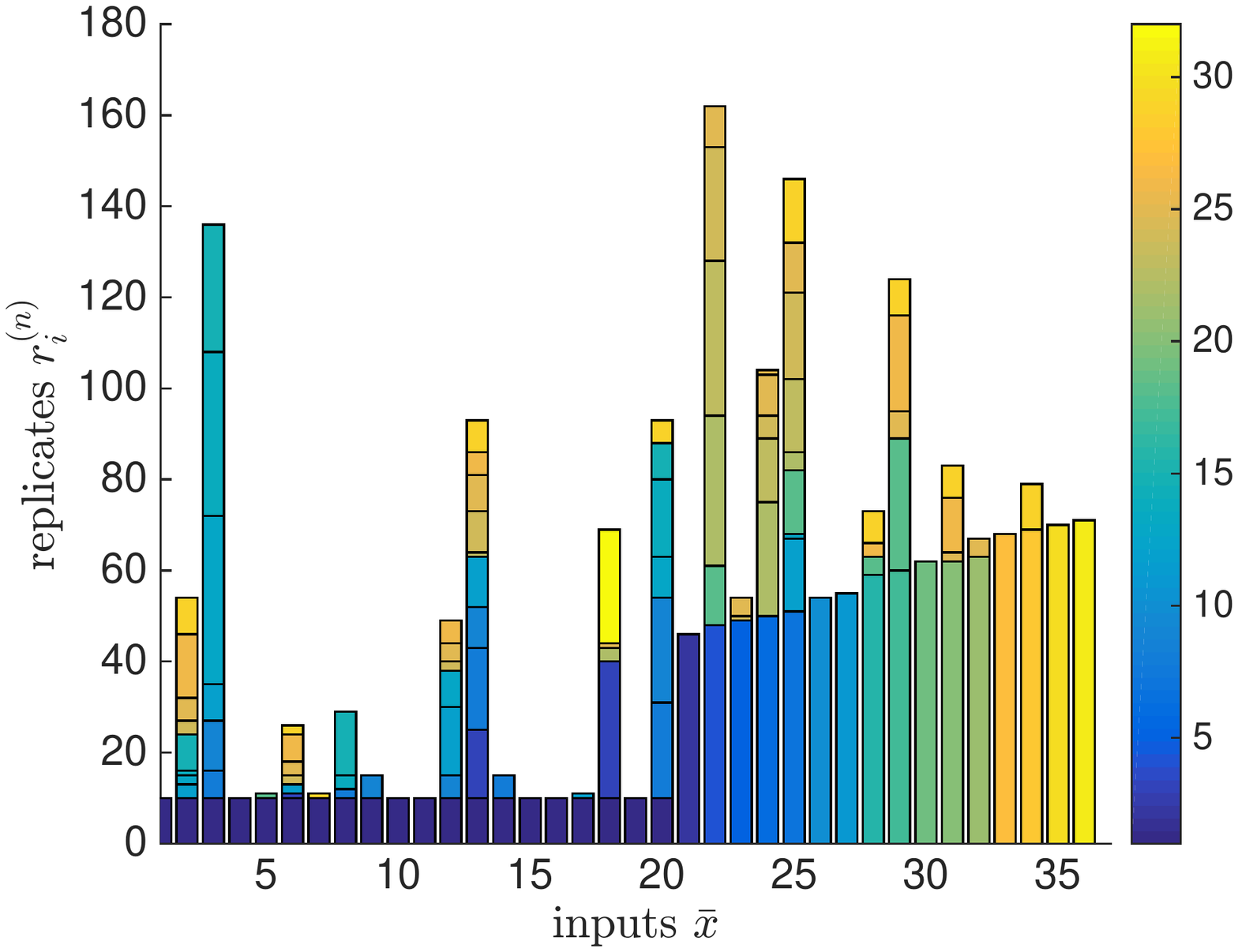}
			\end{center}
		\end{minipage}
		\caption{Visualizing adaptive batching for the 2-D case study with heteroskedastic noise.
			\emph{Left panel:}  distribution of design size $k_T$ corresponding to $N_T=2000$ across 50 algorithm runs. \emph{Middle}: number of replicates $r^{(n)}$ as a function of algorithm step $k_n$ for the schemes of Section~\ref{sec:batchdesign}. \emph{Right:} evolution of $r_i^{(n)}$ for ADSA designs $\bxbar$. The total $r_i^{(N)}$ is decomposed into $\Delta r_i^{(n)}$ for $n=1,\ldots,k_T$ with each $\Delta r$ color-coded by round $n$. }
		\label{fig:zbyn}
	\end{figure}

	The right panel of Figure~\ref{fig:zbyn} visualizes the replication of a representative ADSA run which has the option to add new inputs or re-allocate to existing ones. We show the sequential growth of $r^{(n)}_i$ through a stack histogram: the $x$-axis represents the unique inputs $\xx_i$ as picked by the algorithm and the vertical stacks represent $\Delta r_i^{(n)}$, color-coded by the round $n$ when they were added. We observe that only 10 out of the $n_0 = 20$ original inputs are revisited, and generally about half of the inputs are used in more than one round. At the same time, some inputs, such as $\bar{x}_{13}, \bar{x}_{20}, \bar{x}_{25}$ are visited in numerous rounds.

	Figure \ref{fig:fitted} shows the estimated zero-contour $\partial \hat{S}$  with its 95\% posterior credible band at $N_T=2000$ in the 2-D test case with \newstuff{heteroskedastic} noise. The volume of the credible band $\partial \hat{S}^{(\pm 0.95)}$, defined as
	\begin{align}
	\partial \hat{S}^{(\pm 0.95)}=& \left\{ {x} \in D: \left(\hat{f}^{(N_T)}({x}) + 1.96s^{(N_T)}({x})\right) \left(\hat{f}^{(N_T)}({x}) - 1.96s^{(N_T)}({x})\right)<0 \right\}, \label{cis}
	\end{align}
	captures inputs $x$ whose sign classification remains ambiguous and quantifies the uncertainty about the estimated zero-contour $\partial \hat{S}$. As expected, all schemes start by exploring the input space using a few replicates and then primarily sample in the target region around the level set, with increasing replication. \newstuff{Another feature that can be seen is that all methods favor the upper-left and bottom-right corners, which are regions that are simultaneously close to the edge of the input space (hence larger posterior $s_n(\cdot)$) and close to the zero contour. In particular, highest replication occurs in the upper-left region.}

Comparing the first four plots, we find that the ABSUR is more efficient than RB and MLB, concentrating at the zero-contour faster and simultaneously ramping up $r^{(n)}$ quicker. In the plot, this happens already after just half-a-dozen steps. In contrast, RB takes about a dozen steps to explore with correspondingly low $r^{(n)}$'s. Although MLB also ramps up $r_n$ quickly, it then steps back and forth between low and high replication levels, resulting in a slightly larger $k_T$ than ABSUR. ADSA and DDSA perform similarly. One observation is that they select similar inputs to allocate the extra simulator calls. For example the initial inputs close to the \newstuff{left and top} edge all get more replicates $r_n$ via reallocation in ADSA and DDSA. Across the DoE rounds, ADSA chooses to reallocate budget in approximately 54\% of them, so that $k_T = 0.54N_T/\Delta r$. Therefore, the value of $k_T$ is approximately the same for ADSA and DDSA.

Some of the design differences can be attributed to the different behavior of the underlying heuristics cUCB and gSUR. Indeed, cUCB tends to over-emphasize sampling around the zero-contour, while gSUR is more exploratory and tends to place a few inputs right at the edge of the input domain (upper left corner and lower right corner in the plot with ABSUR). The aggressiveness of cUCB generates more accurate estimates  $\partial\widehat{ S}$ even if the posterior uncertainty is higher (wider CI band) sometimes. 

	To conclude, the performance of FB is sensitive to value of replicates $r_n$. With higher $r_n$, the running time decreases while the error rate $\cal{ER}$ may increase or decrease. For different budget $N_T$, the "optimal" value of $r_n$ varies. We can tune $r_n$ to obtain FB scheme with best performance for a fixed $N_T$ in synthetic experiments where the ground truth is known. However, $N_T$ is not always provided initially in real experiments. At this time, it is impossible to tune $r_n$ for FB. Adaptive batching designs stand out perfectly. Instead of tuning $r_n$ manually at the start of sequential design, adaptive batching algorithms self-adaptively pick the current "optimal" $r_n$ during sequential design. Among all adaptive batching designs, DDSA and RB are the most efficient algorithms, while ADSA ends up with the most accurate estimate in most cases with approximately twice of running time. For low dimension experiments or larger $N_T$ \newstuff{or higher SNR}, DDSA reaches similar or even better error rate $\cal{ER}$ compared with ADSA, while in high dimension experiments or smaller $N_T$ \newstuff{or lower SNR}, results obtained with ADSA are significantly better than DDSA. \newstuff{ADSA is also more robust to the choice of hyperparameters and has a more stable performance in all cases.}

\begin{figure*}[htb!]
	\begin{minipage}[t]{0.32\linewidth}
		\begin{center}
			\includegraphics[trim = 0 200 120 150, clip = true, width=1\columnwidth]{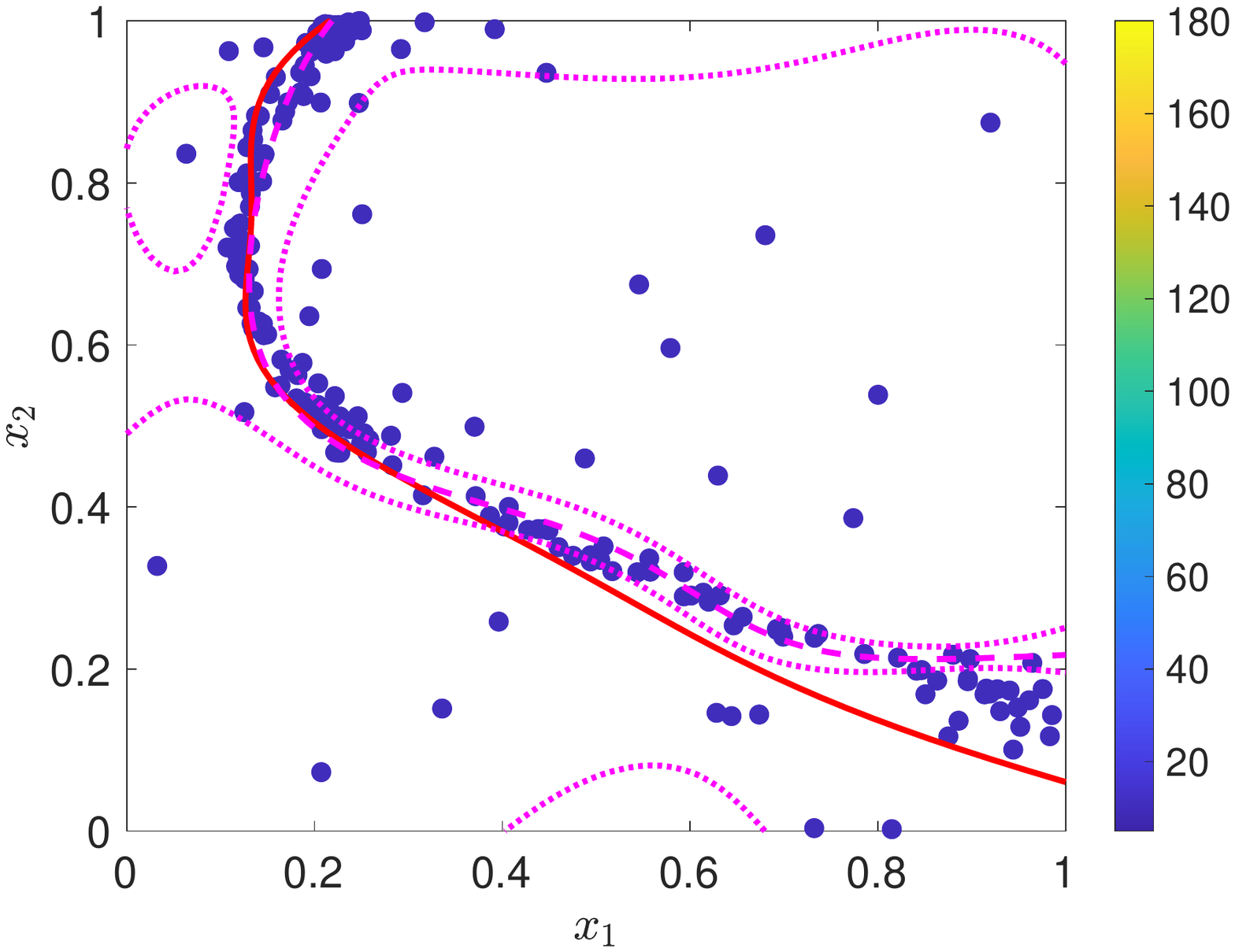} \\
			$\quad$ FB: $k_n = 200$ ($r=10$)
		\end{center}
	\end{minipage}
	\begin{minipage}[t]{0.3\linewidth}
		\begin{center}
			\includegraphics[trim = 30 200 120 150, clip = true,width=1\columnwidth]{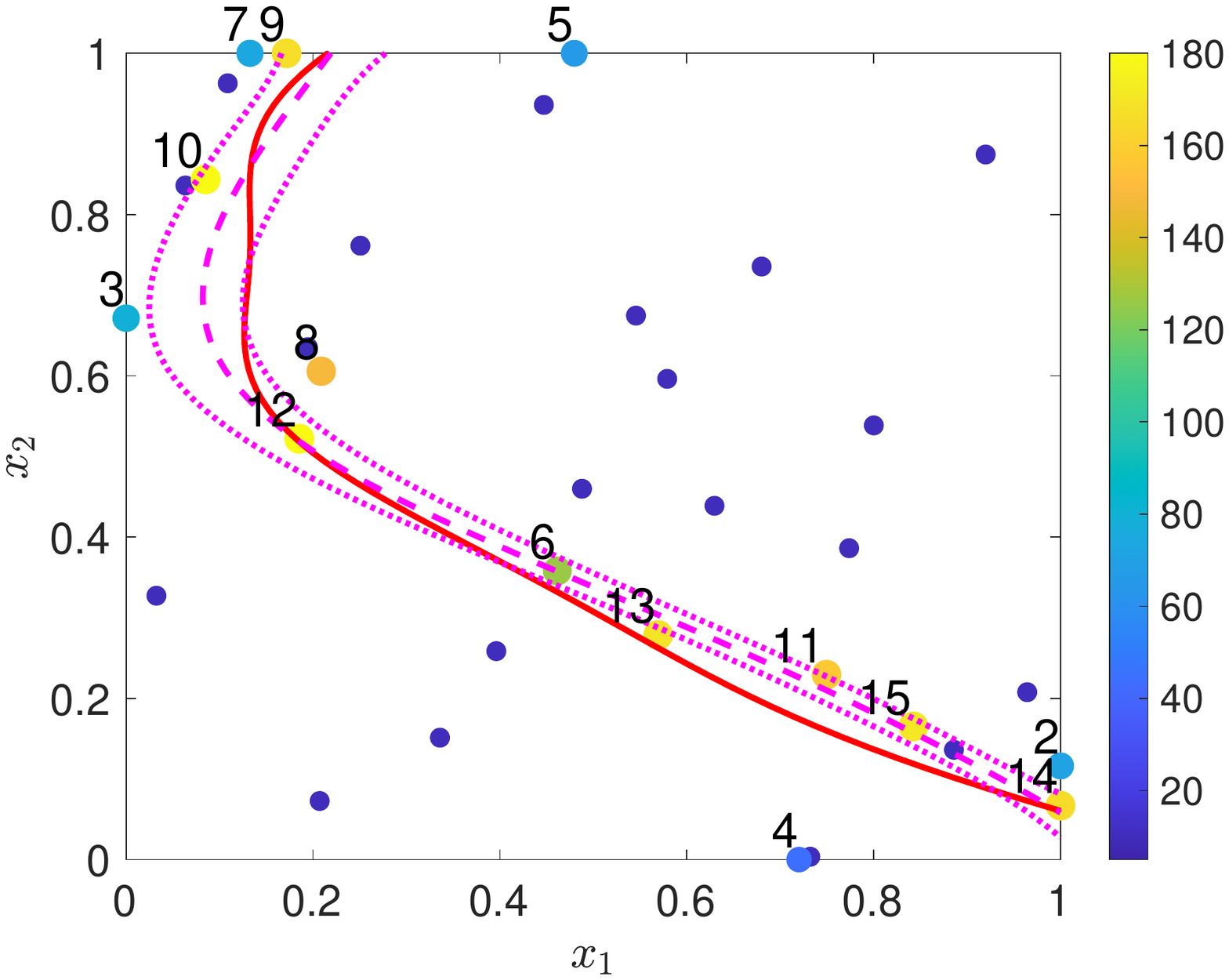} \\
			ABSUR: $k_T = 34$
		\end{center}
	\end{minipage}
	\begin{minipage}[t]{0.37\linewidth}
		\begin{center}
			\includegraphics[trim = 30 200 0 150, clip = true, width=1\columnwidth]{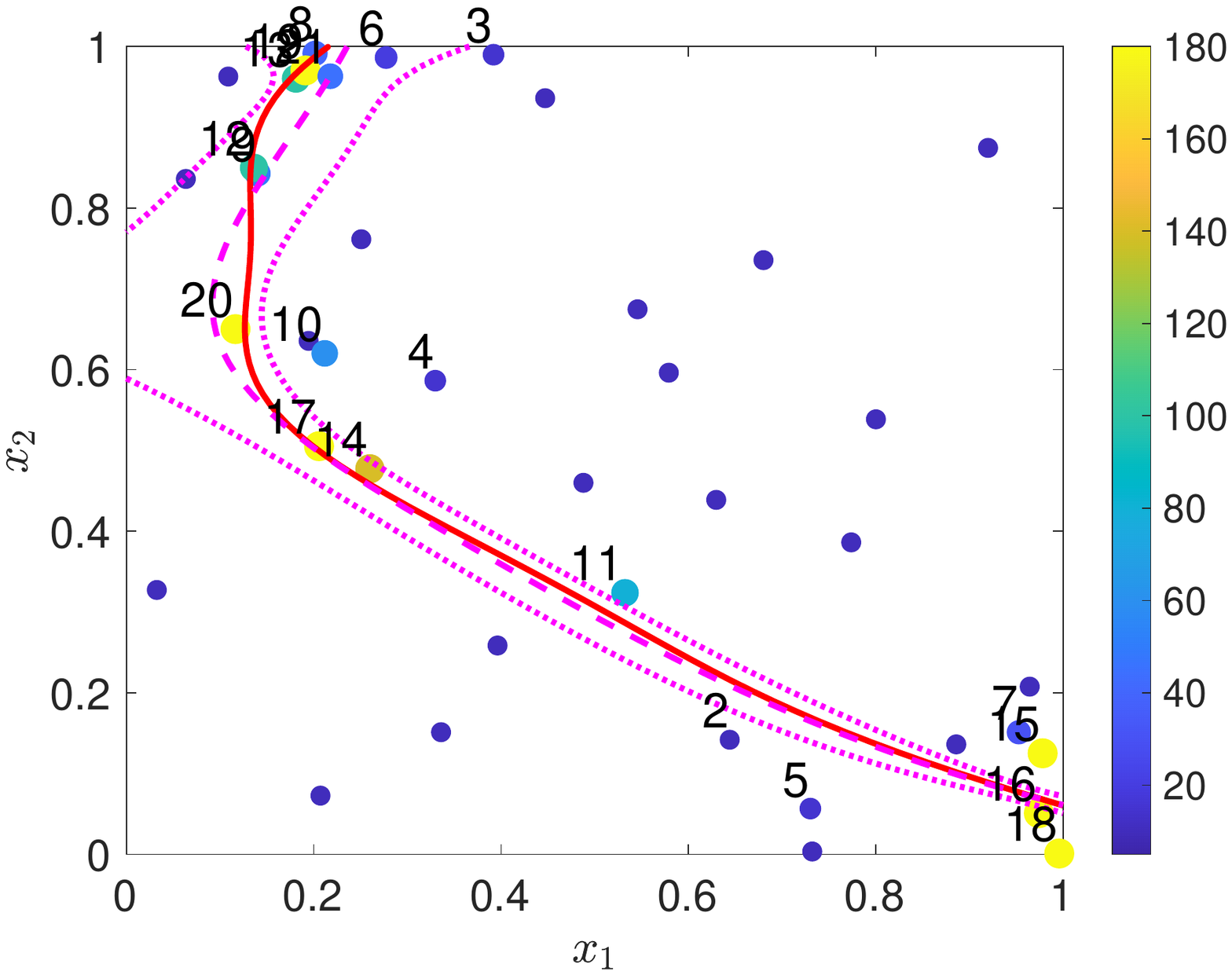} \\
			RB: $k_T = 40$
		\end{center}
	\end{minipage}
	\begin{minipage}[t]{0.32\linewidth}
		\begin{center}
			\includegraphics[trim = 0 200 120 170, clip = true,width=1\columnwidth]{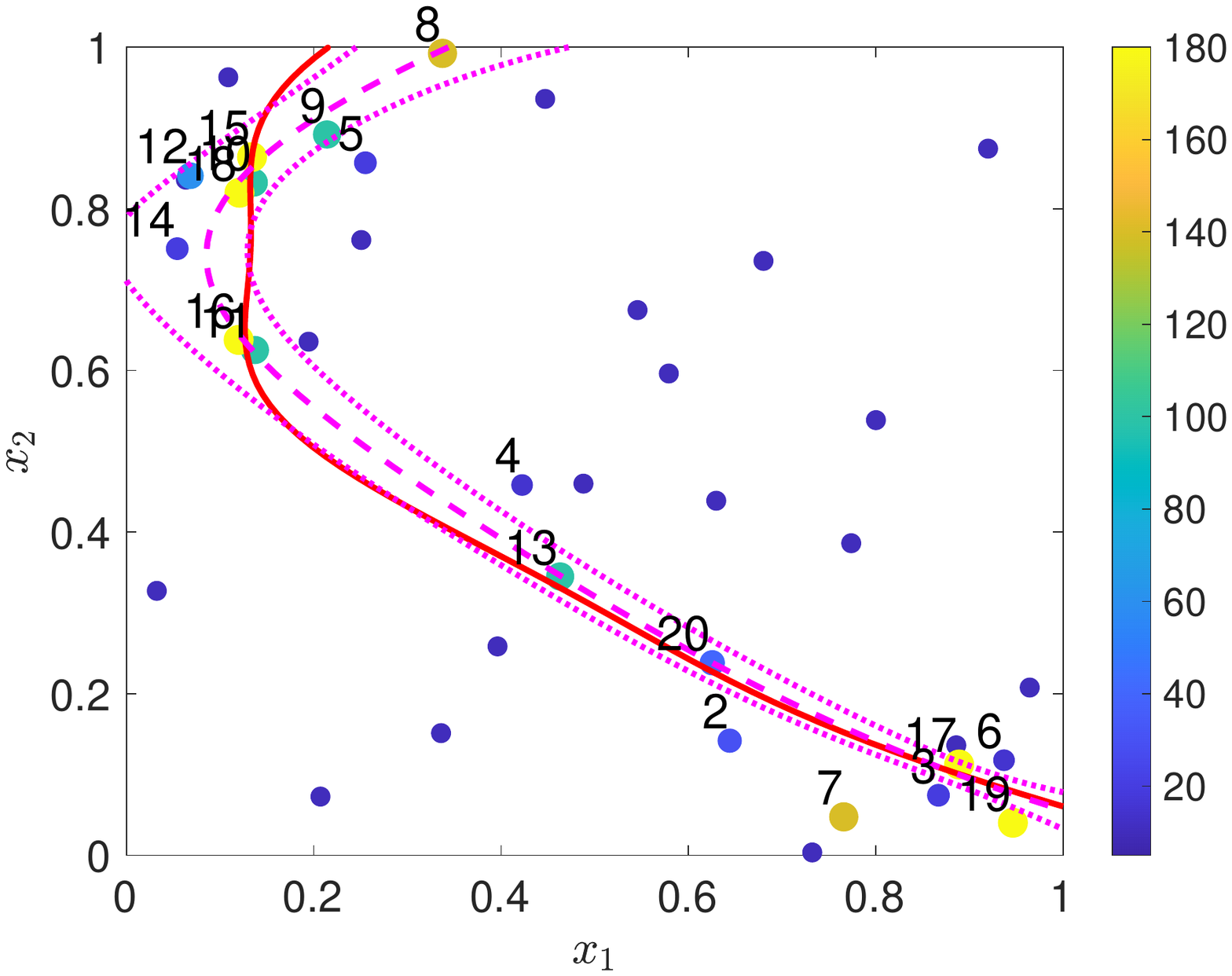} \\
			MLB: $k_T = 39$
		\end{center}
	\end{minipage}
	\begin{minipage}[t]{0.3\linewidth}
		\begin{center}
			\includegraphics[trim = 30 200 120 170, clip = true, width=1\columnwidth]{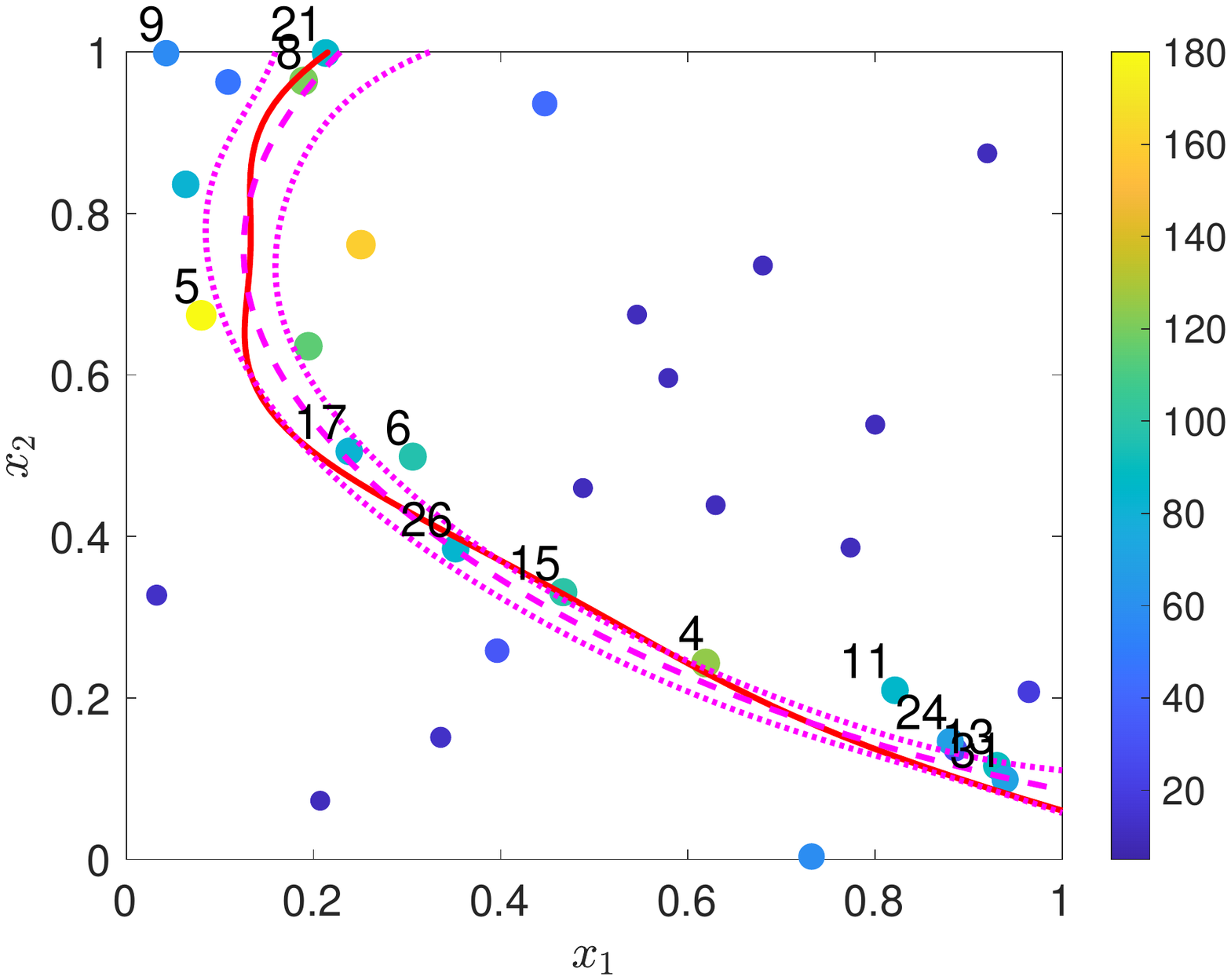} \\
			ADSA: $k_T = 33$
		\end{center}
	\end{minipage}
	\begin{minipage}[t]{0.37\linewidth}
		\begin{center}
			\includegraphics[trim = 30 200 0 180, clip = true,width=1\columnwidth]{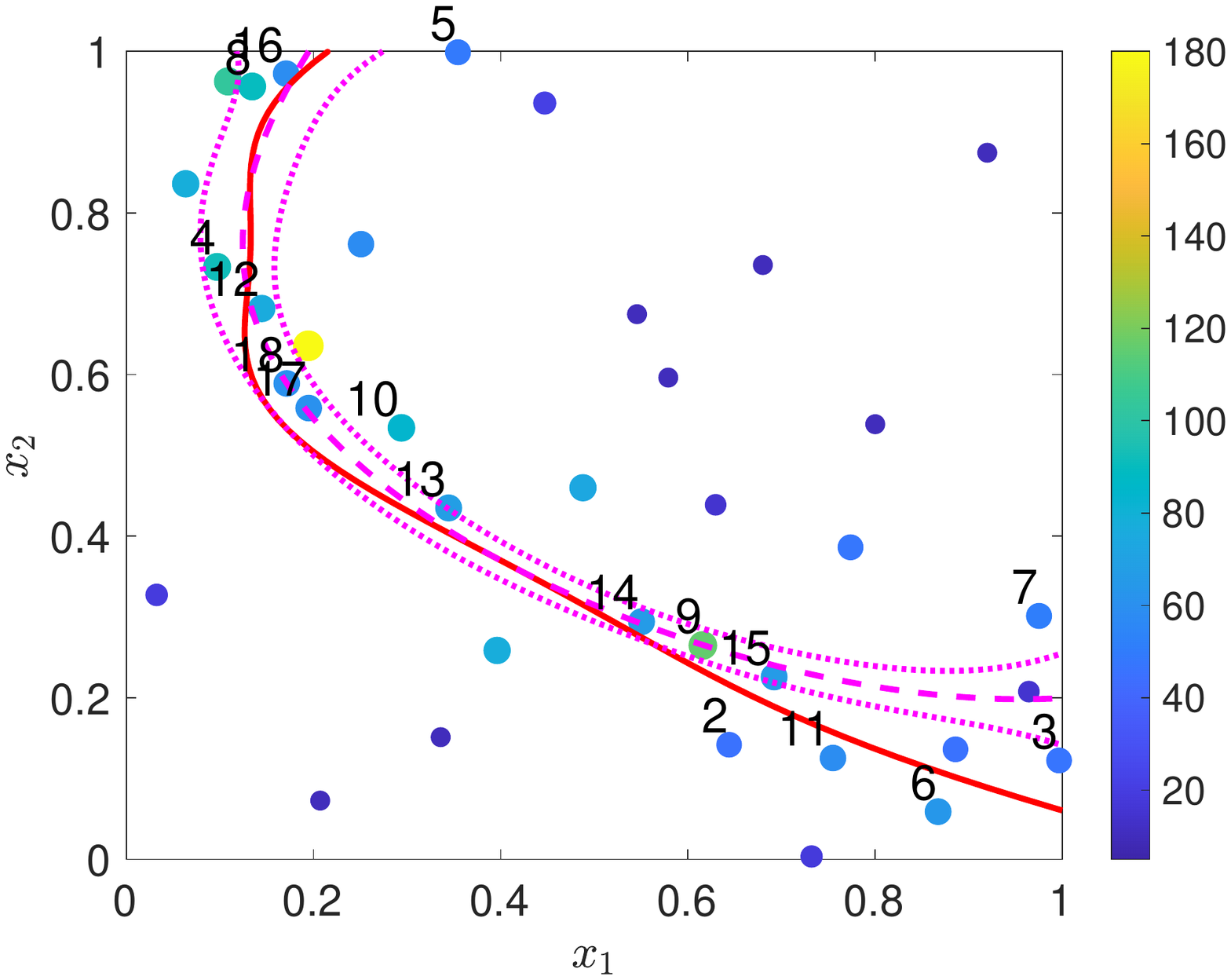} \\
			DDSA: $k_T = 37$
		\end{center}
	\end{minipage}
		\caption{\newstuff{GP fits $f | \cA_{k_T}$ 
			and designs for 2-D case study with heteroskedastic noise. The dashed lines are the estimated posterior zero-contours $\{ x :\hat{f}^{(N)}(x) = 0\}$ to be compared to the true contour (solid line). The dotted lines are the corresponding 95\% credible intervals. The initial design (same across all panels) are the blue unlabelled dots. The labels indicate the order of the inputs $\bar{x}_i, i = 1,\ldots k_n$ and the respective color/size are proportional to the replication level $r^{(n)}$. Design sizes $k_T$ vary across the schemes. 
			\label{fig:fitted}}}
	\end{figure*}

	\section{Application to Optimal Stopping} \label{sec:bermudan}
	As a fourth and final case study, we consider an application of contour finding for determining the optimal exercise policy of a Bermudan financial derivative \citep{ludkovski2018kriging}. The underlying simulator is based on a $d$-dimensional geometric Brownian motion $(\bm{Z}_{t})=(z^1_t, \ldots, z^d_t)$ that represents prices of $d$ assets and follows the log-normal dynamics
	\begin{align}
	\bm{Z}_{t+\Delta t}&= \bm{Z}_{t} \exp \bigg((r-\frac{1}{2} \diag{\bm{\Xi}})\Delta t + \sqrt{\Delta t}\bm{\Xi} \Delta \bm{W}_{t}\bigg), \label{eq:brown}
	\end{align}
	where $r$ is the interest rate, $\bm{\Xi}$ is the $d \times d$ covariance matrix and  $\Delta \bm{W}_{t} \sim \mathcal{N}(0,\bm{I}_d)$ are the Gaussian stochastic stocks.
	Let $h(t,z)$ be the option payoff from exercising when $\bm{Z}_{t} = \bm{z}$. 
	We assume that exercising is allowed every $\Delta t$ time units, up to the option maturity $T$. The overall goal is to determine the stopping regions $\{ S_t : t = \Delta t, 2\Delta t, \ldots, T-\Delta t \}$ to maximize $\E[ h(\tau, \bm{Z}_\tau) ]$, where $\tau = \min\{ t: \bm{Z}_t \in S_t \}$ is the exercise strategy. The dynamic programming principle implies that $S_t$ can be recursively computed as the  zero level set of the timing function $ z\mapsto f(t,\bm{z}) = h(t,\bm{z}) - \E[ h(\tau_t, \bm{Z}_{\tau_t} )]$ where the latter term is the continuation value based on the exercise strategy from the forward-looking $\{ {S}_s, s > t\}$. Numerically, this yields a simulator of $f(t,\bm{z})$ through pathwise reward over one-step-ahead simulations of $\bm{Z}_{t+\Delta t}$.
	
	\begin{table}[htb!]
		\caption{Performance of GP metamodels with FB, MLB, RB, ABSUR, ADSA and DDSA designs in the 2-D Average Put and 3-D Max Call examples. Results are  averages from  20 runs of each scheme.}
		\label{tbl:amput}
		\begin{center}
				\begin{sc}
					\begin{tabular}{lccrr}
						\toprule
						Design & Model & Payoff & Time/s $T$ & Inputs $k_T$ \\
						\midrule
						\multicolumn{5}{c}{\textbf{2-D Average Put}} \\
						\midrule
						FB & GP & 1.451$\pm$ 0.002 & 29.82 & 100.00\\
						RB    & GP & 1.443$\pm$ 0.004 & 5.42 & 35.85\\
						MLB    & GP & \newstuff{\textbf{1.440$\pm$ 0.004}} & \newstuff{\textbf{4.92}} & 33.97 \\
						ABSUR     & GP & 1.446$\pm$ 0.004 & 11.40 & 53.80\\
						ADSA & GP & 1.445 $\pm$ 0.003 & 11.76 & 32.87 \\
						DDSA & GP & 1.445 $\pm$ 0.003 & 5.42 & 34.00 \\
						\midrule
						FB & $t$-GP & 1.449 $\pm$ 0.002 & 63.11 & 100.00 \\
						RB    & $t$-GP & 1.445 $\pm$ 0.004 & 11.36 & 36.39 \\
						MLB    & $t$-GP & \newstuff{\textbf{1.443 $\pm$ 0.004}} & \newstuff{\textbf{10.52}} & 35.35 \\
						ABSUR     & $t$-GP & 1.443 $\pm$ 0.004 & 26.13 & 49.79 \\
						ADSA & $t$-GP & 1.447 $\pm$ 0.003 & 19.00 & 44.83 \\
						DDSA & $t$-GP & 1.446 $\pm$ 0.003 & 11.31 & 34.00 \\
						\midrule
						\multicolumn{5}{c}{\textbf{3-D Max Call}} \\
						\midrule
						FB & GP & 11.26  $\pm$  0.01 & 2239.10 & 1000.00 \\
						RB    & GP & \newstuff{\textbf{11.23  $\pm$  0.01}}  & \newstuff{\textbf{37.42}} & 342.39 \\
						MLB    & GP & 11.24 $\pm$   0.01 & 38.17 & 342.07 \\
						ABSUR     & GP & 11.23  $\pm$  0.01 & 109.81 & 407.90 \\
						ADSA & GP& 11.25 $\pm$ 0.01 & 194.05 & 460.33 \\
						DDSA & GP& 11.26 $\pm$ 0.01 & 94.58 & 381.00\\
						\bottomrule
					\end{tabular}
				\end{sc}
		\end{center}
	\end{table}
	
	In this setting, the underlying distribution of $\bm{Z}_t$ at time $t$ is log-normal since $\log \bm{Z}_t$ is multivariate normal. To reflect this fact which dictates the importance of correctly identifying whether $x \in S_t$ or not (since option exercising decisions are made \emph{along} trajectories of $\bm{Z}$, conditional on the given initial value $\bm{Z}_0 = z_0$), we employ log-normal weights $\mu(dz)= p_{\bm{Z}_t}( \cdot | \bm{z}_0)$ in \eqref{loss}. We further use $\mu$ to weigh the respective $\im_n$ criteria when optimizing for new inputs. In line with the problem context, we assess performance using the ultimate estimated option value. The latter is evaluated via an out-of-sample Monte Carlo simulation that averages realized payoffs along a database of $M' = 10^5$ forward  paths $\bm{z}^{1:M'}_{0:T}$:
	\begin{align}
	\label{eq:hat-V} \hat{V}(0, \bm{z}_0) &= \frac{1}{M'} \sum_{m=1}^{M'} h(\tau^m_0, \bm{z}^{(m)}_{\tau^m_0} ),
	\end{align}
	with $\tau^m_0 := \min \{t : z^{(m)}_{t} \in \widehat{S}_{t} \} \wedge T$.
	Since our goal is to find the \emph{best} exercise value, higher $\hat{V}$'s indicate a better approximation of $\{S_t\}$. To allow a direct comparison, we set parameters matching the test cases in~\cite{ludkovski2018kriging}):
	\begin{align*}
	\text{2-D average Put option: } & & h_{Put}(t,\bm{z}) & = e^{-r t}( {\cal K} - z^1 - z^2)_+; \\
	\text{3-D Max-Call option: } & & h_{Call}(t,\bm{z}) & = e^{-r t}( \max (z^1, z^2, z^3)  - {\cal K})_+.
	\end{align*} These settings have very low signal-to-noise ratio, and non-Gaussian heteroskedastic noise, so  $N_T \gg 10^3$ is imperative. \newstuff{We use plain GP and $t$-GP metamodels (refitted every ten steps) with a constant noise variance $\taun^2$  to model the timing function $f(t,\bm{z})$. All adaptive algorithms combined with homoskedastic and heteroskedastic GP ($t$-GP)  are publicly available as part of the \texttt{mlOSP} library in \texttt{R} \cite{ludkovski2020mlosp}.}

	\begin{table}
		\caption{Parameters for the 2-D Basket Put Option and 3-D Max Call Option.}
		\label{tbl:optionexperiments}
		\begin{tabular}{rcc} \toprule
			& 2-D Basket Put & 3-D Max-Call \\ \midrule
			\begin{tabular}{@{}r@{}} Option \\ Parameters \end{tabular} & \begin{tabular}{@{}c@{}} ${\cal K} = 40, \Delta t = 0.04, T = 1$ \\ $ r = 0.06, \sigma = 0.2, X_0 = [40, 40]$ \end{tabular} & \begin{tabular}{@{}c@{}} ${\cal K} = 100, \Delta t = 1/3, T = 3$ \\ $r = 0.05, \sigma = 0.2, X_0 = [90, 90, 90]$ \end{tabular} \\
			Budget & $N_T=2000$, $k_0 = 20, r_0 = 20$ & $N_T = 30,000$, $k_0 = 300, r_0 = 30$  \\
			FB & $r=20$ & $r= 30$ \\
			MLB/RB & $\mathbf{r}_L = \{20, 30, 40, 50, 60, 80, 120, 160\}$ & $\mathbf{r}_L = \{20, 30, 40, 50, 80, 160, 240, 320, 480, 640\}$ \\
			ABSUR & $\mathcal{R}= [20, 160], T_{sim} = 0.01$ & $\mathcal{R}= [20, 640], T_{sim} = 0.01$ \\
			ADSA & $c_{bt}=10$ & $c_{bt}=6.67$  \\     \bottomrule
		\end{tabular}
	\end{table}
	
	Table~\ref{tbl:amput} shows the performance of different designs/models. In the 2-D setting the best performing scheme is DDSA. We obtain savings of 80\% in computation time compared to the baseline FB scheme. For the 3-D Max Call, DDSA achieves the highest payoff, and  at a fraction ($\sim 1/20$th) of time. RB and MLB lead to slightly smaller payoff than DDSA, but with a saving of 60\% in computation cost. ADSA leads to basically the same payoff as DDSA and takes approximately twice as much time compared with DDSA. ABSUR takes half the time of ADSA, leading to a lower payoff. In both 2-D and 3-D settings, ADSA and DDSA lead to a higher payoff and have a more stable performance than the other adaptive batch designs. In terms of design size $k_T$, ABSUR yields the largest $k_T$, while DDSA yields the most compact designs. 
	
	\begin{figure}[htb!]
		\begin{minipage}[t]{0.44\textwidth}
			\begin{center}
				\includegraphics[trim = 0 0 80 0, clip = true, width=1\columnwidth]{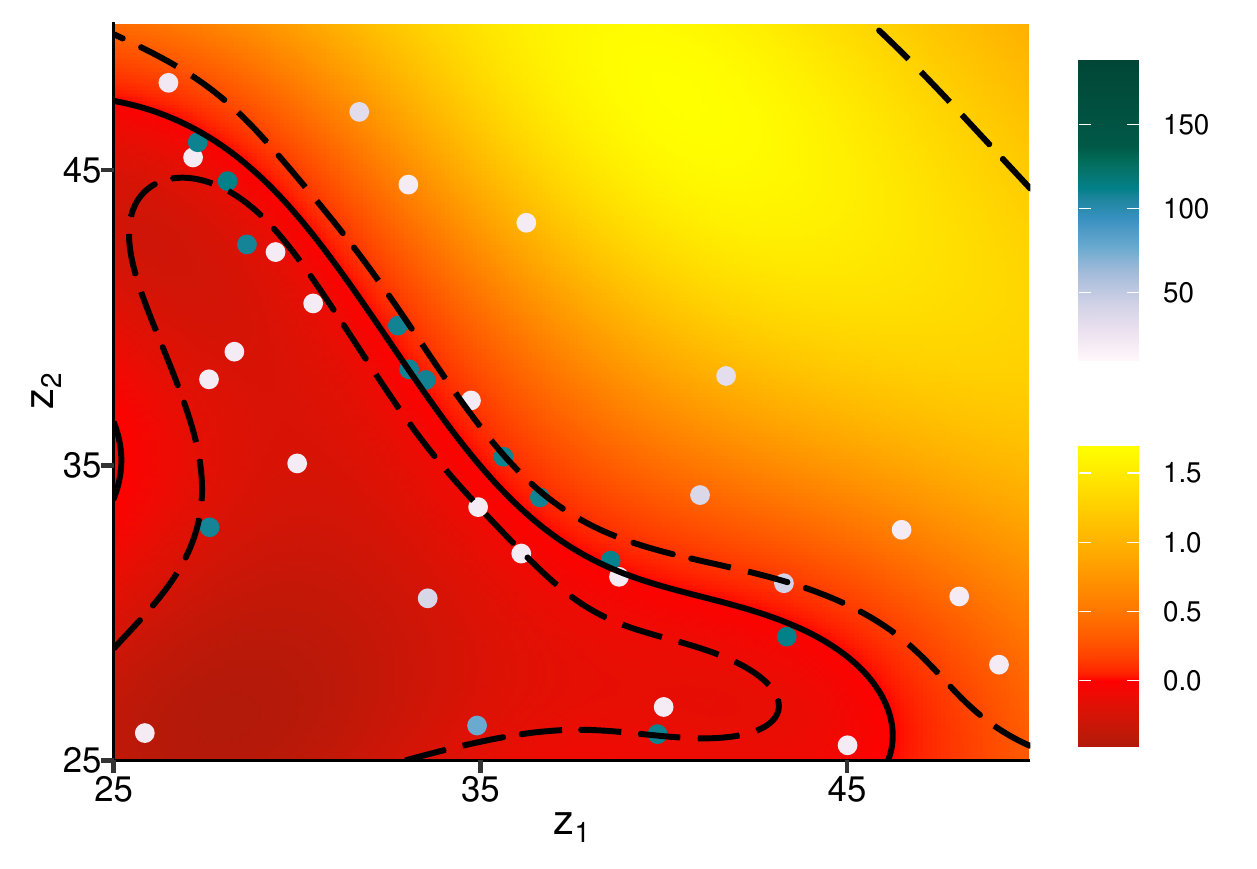} \\
				ABSUR: $k_T = 40$
			\end{center}
		\end{minipage}
		\begin{minipage}[t]{0.56\textwidth}
			\begin{center}
				\includegraphics[trim = 0 0 0 0, clip = true,width=1\columnwidth]{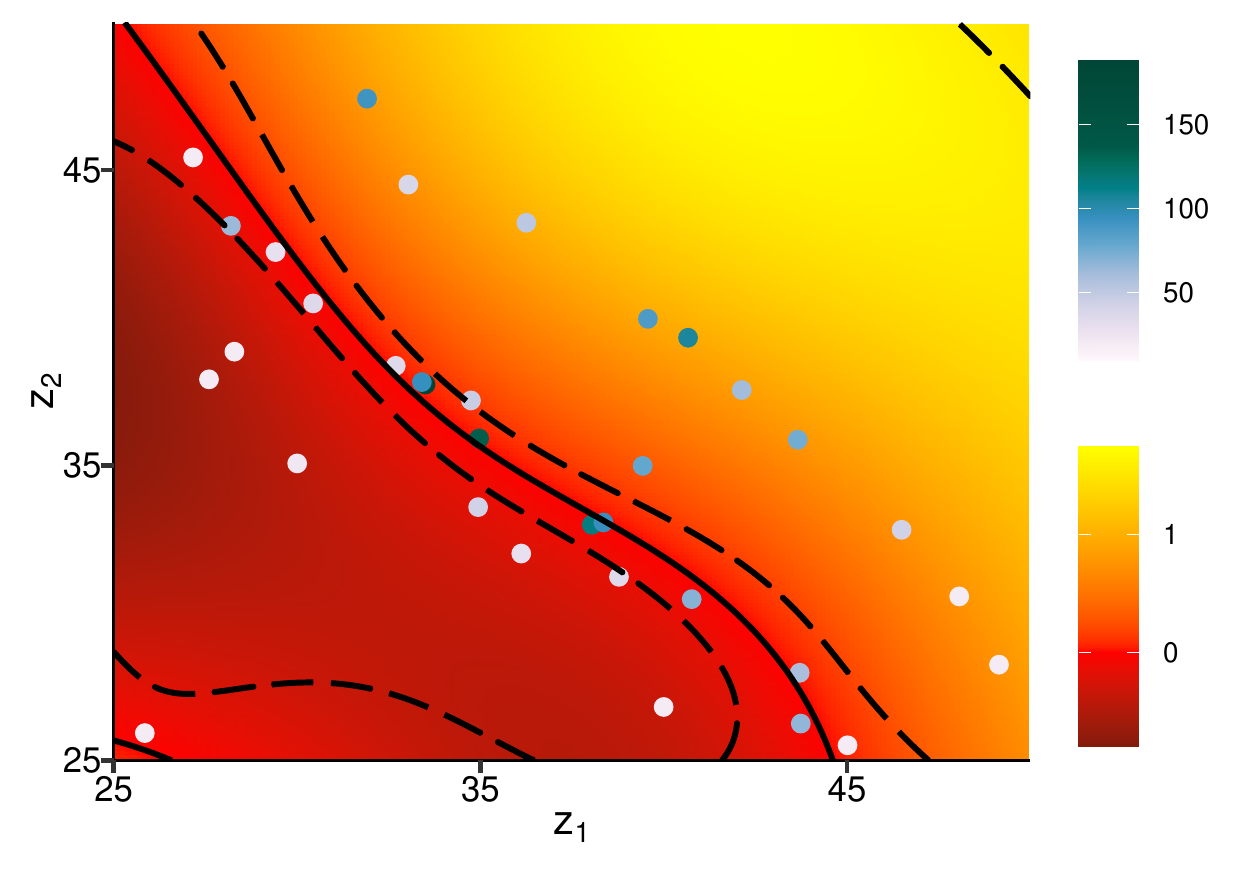} \\ ADSA: $k_T = 37$
			\end{center}
		\end{minipage}
		\caption{GP fits $f^{(k_T)}(t,\cdot)$ and designs $\cA$ for 2-D average put option example at $t = 0.6$ and $N_T=2000$. \emph{Left panel:} ABSUR; \emph{right:}  ADSA.
			The solid lines are the estimated exercise boundary $\hat{f}^{(k_T)}(t,\bm{z}) = 0$ and the dashed lines are the corresponding 95\% credible intervals. The scatter plot is the design $\cA_{k_T}$ color-coded by replicate counts $r_i, i=1,\ldots,k_T$.}
		\label{fig:amput2d}	
	\end{figure}

	Figure \ref{fig:amput2d} shows the GP fits $\hat{f}(t,\bm{z})$ for ABSUR and ADSA for the 2-D Put case study at $t=0.6$. The desired zero-level contour goes from NW to SE and due to the chosen setting should be symmetric about the $z^1=z^2$ line. We see that both strategies select inputs around the contour; {consistent with the results shown in Figure~\ref{fig:fitted}, ABSUR is somewhat more exploratory and yields wider credible intervals for the exercise boundary $\{ \hat{f}^{(k_T)} = 0\}$ in regions close to the edge of the input space, especially at the NW and SE corners. ABSUR uses slightly more design sites $k_T(ABSUR) = 40 > k_T(ADSA) = 37$ and has a flatter distribution of replication counts. In contrast, ADSA uses up to $\max_n r^{(n)} = 188$ replicates. We also observe that several initial designs repeatedly receive more replications (up to 50 counts) in ADSA.}

	\section{Conclusion} \label{sec:conclusions}
	
	We have proposed and investigated five different schemes for adaptive batching in metamodeling of stochastic experiments. \newstuff{All schemes explicitly address the shifting exploration-exploitation trade-off by capturing the intuition of increasingly beneficial replication as sequential design is constructed}. Our presentation focused on the plain Gaussian Process paradigm but as shown are straightforwardly extended to alternatives, such as $t$-GP and \texttt{hetGP}. The key step is to construct an approximation of the batch look-ahead variance $s^{(n+1)}(x,r)$. Our results demonstrate that adaptive batching offers a simple mechanism to extract significant computational gains through building more compact designs and taking advantage of the symbiotic relationship between GPs and replication. Thus, compared with using a constant value for replicates $r$ over all inputs like in FB, we are able to gain more than an order-of-magnitude speed-up with minimal loss of metamodeling fidelity. Among the proposed adaptive batching schemes, we advocate the use of ADSA and DDSA (the latter being essentially a faster heuristic). While they lead to similar results in lower dimensional experiments, ADSA is observed to be more accurate in complex settings, such as higher dimension or low signal-to-noise ratio.
	
	Our focus has been on adaptive batching in the context of level-set estimation. Related problems such as evaluating the probability of failure, or evaluating a tail risk measure, would benefit from the same ideas and will be investigated in follow-up projects. \newstuffblue{Another extension is to tackle $\epsilon$-softened optimization, i.e.~target the region of $\epsilon$-optimal inputs for a given $\epsilon >0$. Such objective might be desirable to practitioners who simultaneously optimize over several (potentially non-qualitative) factors. This entails replacing the zero level set with $f(x) = 0$ with $f(x) = M_n$ where $M_n$ is an estimator for $\max_x f(x) | {\cal A}_n$. For instance, one could obtain $M_n$ similar to the computation of the Expected Improvement criterion in Bayesian Optimization.}  Another important problem that is beyond the scope of the present work is theoretical analysis about the asymptotic complexity of the proposed schemes such as ADSA, for example to establish the long-run growth rate of $k_n$ in order to quantify the asymptotic complexity of the GP metamodel as $N_n \to \infty$.

\newstuffblue{\textbf{Acknowledgements.} We thank the anonymous reviewers for their helpful comments that helped to improve on earlier versionss of the manuscript; we are also grateful to Mickael Binois for useful discussions and help in porting our algorithms from \texttt{MATLAB} to \texttt{R}. Both authors were partially supported by NSF DMS-1521743. ML is additionally supported by NSF DMS-1821240.}

	
	\appendix

	\section{Allocation Rule}\label{sec:deltar-proof}

\newstuffblue{\begin{proof}[Proof of Proposition~\ref{thm:deltar}]
	Because the unique inputs are unchanged during the allocation step, comparing
	$\mathbf{C}^{(n+1)}
	= \mathbf{K}({\mathbf{x}}_*,  {\mathbf{x}}_*) -  \mathbf{K}_*(\bm{\Sigma}^{(n+1)})^{-1} \mathbf{K}_*^{T}$
	to $\mathbf{C}^{(n)}=\mathbf{K}({\mathbf{x}}_*,  {\mathbf{x}}_*) -  \mathbf{K}_*(\bm{\Sigma}^{(n)})^{-1} \mathbf{K}_*^{T}$, the only term that changes is $\bm{\Sigma}^{(n+1)}$.
	Minimizing 
	eq.~\eqref{tmse} therefore reduces to maximizing
	\begin{align}
	(\bm{\omega}^{(n)})^T \mathbf{K}_*(\mathbf{K} + \taun^2 \mathbf{R}^{(n+1)})^{-1} \mathbf{K}_*^{T} \bm{\omega}^{(n)} 
	\label{obj:csao2}
	\end{align}
	Decompose $\Delta \mathbf{R}^{(n)} =: \mathbf{B}^{(n)}\mathbf{B}^{(n)}$. Using the Woodbury Identity,
	\begin{align}
	(\bm{\Sigma}^{(n+1)})^{-1} = (\mathbf{K} + \taun^2 (\mathbf{R}^{(n)} -\Delta \mathbf{R}^{(n)}))^{-1}
	\approx (\bm{\Sigma}^{(n)})^{-1} + \taun^2(\bm{\Sigma}^{(n)})^{-1}\Delta \mathbf{R}^{(n)} (\bm{\Sigma}^{(n)})^{-1}, \label{eqn:reasimplified}
	\end{align}
	\noindent where the last expression is obtained by dropping the term $\mathbf{B}^{(n)}[\mathbf{K} + \taun^2 \mathbf{R}^{(n)}]^{-1}\mathbf{B}^{(n)} \approx \mathbf{0}$ due to $\max_{i} \Delta \mathbf{R}_{ii}^{(n)}  \ll 1$.
	Therefore, maximizing \eqref{obj:csao2} subject to $\sum_{i=1}^{k_n} \Delta r_i^{(n)} = \Delta r^{(n)}$ is equivalent to maximizing {
		\begin{align}
		\tilde{\im}_{SAO}(\Delta \mathbf{R}) &= \taun^2 \cdot (\bm{\omega}^{(n)} )^T \mathbf{K}_* (\bm{\Sigma}^{(n)})^{-1}\Delta \mathbf{R}^{(n)} (\bm{\Sigma}^{(n)})^{-1}\mathbf{K}_*^{T} \bm{\omega}^{(n)}  + \lambda \left(\Delta r^{(n)} - \sum_{i=1}^{k_n} \Delta r_i^{(n)} \right),
		\label{obj:csao22}
		\end{align}}
	where $\lambda$ is a Lagrange multiplier.	The first-order optimality conditions are
	\begin{align}
	\frac{\partial \tilde{\im}_{SAO}}{\partial \Delta r_i^{(n)}} &= - \frac{\taun^2 \cdot (\bm{\omega}^{(n)})^T \mathbf{K}_* (\bm{\Sigma}^{(n)})^{-1} (\bm{\Sigma}^{(n)})^{-1} \mathbf{K}_*^{T} \bm{\omega}^{(n)}}{(r_i^{(n)} + \Delta r_i^{(n)})^2}  - \lambda = 0 \label{obj:csao23}
	\end{align}
	which leads to $r_i^{(n)} + \Delta r_i^{(n)}  \propto [(\bm{\Sigma}^{(n)})^{-1}\mathbf{K}_*^{T} \bm{\omega}^{(n)}]_i$, $1 \leq i \leq k_n$ as in \eqref{ui}.
\end{proof}

Following~\citet{liu2010stochastic}, we use a pegging procedure~\citep{bretthauer1999nonlinear} to obtain integer-valued  $\Delta r_i^{(n)}$, see Algorithm~\ref{alg:pegging} in the Appendix. Note that due to the rounding, the added number of replicates $\sum_{i=1}^{k_n} \Delta r_i^{(n)}$ is not exactly $\Delta r^{(n)}$.  Moreover, there are several approximations in Proposition~\ref{thm:deltar} that render $\Delta r_i^{(n)}$ and~\eqref{newri} suboptimal: (1) we assume that $\max_{i = 1,\ldots,k_n} \Delta \mathbf{R}_{ii}^{(n)}  \ll 1$; (2) we freeze the weights in \eqref{tmse} rather than using  $\bm{\omega}^{(n+1)}$; (3) we round off to integer $\Delta r_i^{(n)}$.

\begin{remark}
	Similar results about minimizing the look-ahead GP variance of a linear combination $\bm{\omega}^T \mathbf{f}$ appear in~\citep{ankenman2010stochastic, chen2017sequential, liu2010stochastic, ludkovski2017sequential}. 
	Relative to~\citet{ankenman2010stochastic} and \citet{chen2017sequential}, we get rid of all integrals, making \eqref{newri} computationally efficient. 
	The algorithm proposed by~\citet{ludkovski2017sequential} relied on in-sample test set~${\mathbf{x}}_* = \bar{\mathbf{x}}_{1:k_n}$ while our test set is different from the existing inputs.
	
	{Proposition~\ref{thm:deltar} can be extended to the heteroskedastic setting by replacing the constant value $\taun^2$ in equations \eqref{obj:csao2}, \eqref{eqn:reasimplified}, \eqref{obj:csao22} and \eqref{obj:csao23} by a diagonal matrix $\mathbf{S}$ where $\mathbf{S}_{ii} = \taun^2(x_i), 1 \leq i \leq k_n$. Solving eq.~\ref{obj:csao23} leads to $r_i^{(n)} + \Delta r_i^{(n)}  \propto \taun^2(x_i) \mathbf{U}^{(n)}_i$, $1 \leq i \leq k_n$.}
\end{remark}	
}
	\section{Pegging Algorithm for ADSA} \label{appx:pegging}
	
	\begin{algorithm}[htb!]
		\caption{Pegging Algorithm}\label{alg:pegging}
		\begin{algorithmic}
			\State{\bfseries Input:} {$I_0 = \{1,\ldots,k_n\}$, $r = \sum_{i = 1}^{k_n} r_i^{(n)}$, $\mathbf{U}^{(n)}$ from eq.~\eqref{ui}}
			\State $j  \gets 0$.
			\For{all $i \in I_j$}
			\State $\Delta r_i^{(n)} \gets \frac{\mathbf{U}_i^{(n)}}{\sum_{j = 1}^{k_n} \mathbf{U}_j^{(n)}} \times r - r_i^{(n)} $
			\If{$\Delta r_i^{(n)} \geq 0$ for all $i \in I_j$}
			\State \textbf{break}
			\Else
			\State $I_{j+1} \gets \{i \in I_{j}: \Delta r_i^{(n)} > 0\}$
			\State $\Delta r_i^{(n)} = 0$ for $i \notin I_{j+1}$
			\State $r \gets r - \sum_{i \in I_{j}, i \notin I_{j+1}} r_i^{(n)}$
			\State $j \gets j+1$
			\EndIf
			\EndFor
			\State Round all $\Delta r_i^{(n)}, i = 1,..,k_n$ to the nearest integer.
			\State (If $\sum_{i =1}^{k_n} \Delta r_i^{(n)} = 0$, round $\max_{i =1}^{k_n} \Delta r_i^{(n)}$ up to the next integer)
		\end{algorithmic}
	\end{algorithm}
	
	\newstuff{\section{GP with Student $t$-Noise} \label{subsec:t}
	The  marginal likelihood of  $\bybar$ with a $t$-GP is (with $\mathbf{f} := f_{1:k_n} = (f(\xx_1), \ldots, f(\xx_{k_n}))$)
	\begin{align}
	p_{t\mathrm{GP}} \big({\bybar} \big| \; {\bxbar}, \mathbf{r}_{1:k_n}^{(n)}, \mathbf{f} \big) &= \prod_{i=1}^{k_n} \frac{\Gamma((\nu+1)/2)\sqrt{r_i^{(n)}}}{\Gamma(\nu/2)\sqrt{\nu\pi}\taun} \left(1+\frac{r_i^{(n)}(y_i-f_i)^2}{\nu\taun^2}\right)^{-(\nu+1)/2}, \label{liket}
	\end{align}
	where $\Gamma(\cdot)$ is the incomplete Gamma function. To integrate \eqref{liket} against the Gaussian prior $p(f|\bm{\vartheta})$ we use Laplace approximation \citep{williams1998bayesian}. Specifically, we use a second-order Taylor expansion of the log-likelihood around its mode, $\tilde{\mathbf{f}}_{t\mathrm{GP}}^{(n)}:=\arg \max_\mathbf{f}p_{t\mathrm{GP}} (\mathbf{f}|\bxbar,\bybar)$, to obtain a Gaussian approximation to the posterior $f(x_*)| \cA_n \sim \mathcal{N}( \hat{f}_{t\mathrm{GP}}^{(n)}(x_*), s_{t\mathrm{GP}}^{(n)}(x_*)^2)$ with
	\begin{align}
	\hat{f}_{t\mathrm{GP}}^{(n)}(x_*) &= \mathbf{k}(x_*)\mathbf{K}^{-1}\tilde{\mathbf{f}}_{t\mathrm{GP}}^{(n)}, \label{meant} \\
	v_{t\mathrm{GP}}^{(n)}(x_*,x_*') &= K(x_*,x_*')-\mathbf{k}(x_*) \bigg(\mathbf{K}+(\mathbf{W}_{t\mathrm{GP}}^{(n)})^{-1}\bigg) ^{-1}\mathbf{k}(x_*'), \label{covt} \\
	\nonumber &= K(x_*,x_*')-\mathbf{k}(x_*) (\bm{\Sigma}_{t\mathrm{GP}}^{(n)}) ^{-1}\mathbf{k}(x_*')
	\end{align}
	where $\mathbf{W}_{t\mathrm{GP}}^{(n)}$ is diagonal with
	\begin{align}
	W_{t\mathrm{GP},ii}^{(n)} =-\nabla^2 \log p_{t\mathrm{GP}}(\bar{y}_i|\tilde{f}_i^{(n)},\bar{x}_i)
	= (\nu + 1) \frac{\nu\frac{\taun^2}{r_i^{(n)}} - (\bar{y}_i - \tilde{f}_i^{(n)})^2}{(\nu\frac{\taun^2}{r_i^{(n)}} + (\bar{y}_i - \tilde{f}_i^{(n)})^2)^2}, \label{wt}
	\end{align}
	since the likelihood factorizes over observations. Note that $\nu$ is treated as part of the GP hyperparameters and fitted via MLE.
	
	~\citet{lyu2018evaluating} then calculated the approximate step-ahead variance of $t$-GP:
	\begin{align}
	s_{t\mathrm{GP}}^{(n+1)}(x_{k_n+1}, r_{k_n+1}^{(n)})^2 & \simeq s_{t\mathrm{GP}}^{(n)}(x_{k_n+1})^2 \cdot \frac{\frac{\taun^2}{r_{k_n+1}^{(n)}}\frac{\nu+1}{\nu-1}}{\frac{\taun^2}{r_{k_n+1}^{(n)}}\frac{\nu+1}{\nu-1} + s^{(n)}_{t\mathrm{GP}}(x_{k_n+1})^2}. \label{varpropt2}
	\end{align}
	We replace Eq.~\eqref{varprop2} with \eqref{varpropt2} to obtain the acquisition functions for $t$-GP.} 
	
\textbf{Allocation Rule for $t$-GP}: 
	To implement ADSA and DDSA for $t$-GP we need (i) the analogue of Proposition~\ref{thm:deltar} for the allocation rule  $\Delta \mathbf{r}_{1:k_n}^{(n)}$ over the existing inputs $\bxbar$; (ii) the look-ahead variance $s^{(n+1),new}(x_*)$ conditional on adding a new input; (iii) look-ahead variance $s^{(n+1),all}(x_*)$ conditional on allocating $\Delta \mathbf{r}_{1:k_n}^{(n)}$. For all these tasks, the non-Gaussian likelihood \eqref{liket} underlying $t$-GP calls for further approximations provided in the following three Lemmas.
	
	\begin{lemma}[Allocation Rule]\label{lem:U-tgp}
		The allocation $\Delta \mathbf{r}_{1:k_n}^{(n)}$ is like in Proposition~\ref{thm:deltar} but relies on
		\begin{align}\widetilde{\mathbf{U}}^{(n)}_{t\mathrm{GP}} = (\widetilde{\bm{\Sigma}}^{(n)}_{t\mathrm{GP}})^{-1}\mathbf{K}_*^T\bm{\omega}^{(n)}, \qquad \text{with}\quad \widetilde{\bm{\Sigma}}^{(n)}_{t\mathrm{GP}} := \bigg(\mathbf{K}+\frac{\nu + 1}{\nu - 1}\taun^2\mathbf{R}^{(n)}\bigg). \label{eq:hat-U-tgp}
		\end{align}
	\end{lemma}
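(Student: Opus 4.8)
The plan is to reuse the proof of Proposition~\ref{thm:deltar} almost verbatim, the only genuinely new ingredient being the identification of the correct ``effective noise'' matrix for the $t$-GP. The allocation objective is structurally unchanged: we still minimize the look-ahead weighted variance $(\bm{\omega}^{(n)})^T \mathbf{C}_{t\mathrm{GP}}^{(n+1)} \bm{\omega}^{(n)}$ subject to $\sum_i \Delta r_i^{(n)} = \Delta r^{(n)}$, where now $\mathbf{C}_{t\mathrm{GP}}^{(n+1)}$ is the $t$-GP Laplace-approximate covariance of $\mathbf{f}_*^{(n+1)}$. Exactly as in the Gaussian case, among the terms of $\mathbf{C}_{t\mathrm{GP}}^{(n+1)}$ only the inner inverse varies with the increments $\Delta r_i^{(n)}$, so minimization reduces to maximizing $(\bm{\omega}^{(n)})^T \mathbf{K}_* (\widetilde{\bm{\Sigma}}_{t\mathrm{GP}}^{(n+1)})^{-1} \mathbf{K}_*^T \bm{\omega}^{(n)}$. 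The whole task is therefore to pin down the matrix $\widetilde{\bm{\Sigma}}_{t\mathrm{GP}}^{(n)}$ that governs how the inverse responds to added replicates.

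The crux is the following observation. From \eqref{covt} the exact Laplace covariance uses $\bm{\Sigma}_{t\mathrm{GP}}^{(n)} = \mathbf{K} + (\mathbf{W}_{t\mathrm{GP}}^{(n)})^{-1}$, but the diagonal weights in \eqref{wt} depend nonlinearly on both the residuals $(\bar{y}_i - \tilde{f}_i^{(n)})^2$ and on $r_i^{(n)}$, which makes direct optimization over replication counts intractable. The key step is to replace each data-dependent entry $(W_{t\mathrm{GP},ii}^{(n)})^{-1}$ by a closed-form effective variance $\frac{\nu+1}{\nu-1}\frac{\taun^2}{r_i^{(n)}}$. The cleanest justification is to compare the $t$-GP step-ahead variance \eqref{varpropt2} of~\citet{lyu2018evaluating} with the Gaussian proportionality relation \eqref{varprop2}: the two coincide once $\taun^2$ is inflated to $\frac{\nu+1}{\nu-1}\taun^2$. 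Hence, with respect to how replication reduces posterior variance, the $t$-GP behaves identically to a homoskedastic Gaussian GP with this inflated noise, which justifies setting $\widetilde{\bm{\Sigma}}_{t\mathrm{GP}}^{(n)} = \mathbf{K} + \frac{\nu+1}{\nu-1}\taun^2 \mathbf{R}^{(n)}$ as in \eqref{eq:hat-U-tgp}.

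With this effective-noise identification in hand, I would simply rerun the Gaussian derivation with $\bm{\Sigma}^{(n)}$ replaced by $\widetilde{\bm{\Sigma}}_{t\mathrm{GP}}^{(n)}$. The increment matrix $\Delta \mathbf{R}^{(n)}$ carries over unchanged, since its entries depend only on $r_i^{(n)}$ and $\Delta r_i^{(n)}$; the Woodbury expansion \eqref{eqn:reasimplified} applies to $\widetilde{\bm{\Sigma}}_{t\mathrm{GP}}^{(n+1)}$ under the same assumption $\max_i \Delta \mathbf{R}_{ii}^{(n)} \ll 1$; and the Lagrangian \eqref{obj:csao22} together with its first-order conditions \eqref{obj:csao23} take the identical form. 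The scalar $\frac{\nu+1}{\nu-1}$ multiplies the entire quadratic objective and is absorbed into the proportionality constant, so the stationarity condition yields $r_i^{(n)} + \Delta r_i^{(n)} \propto [(\widetilde{\bm{\Sigma}}_{t\mathrm{GP}}^{(n)})^{-1}\mathbf{K}_*^T \bm{\omega}^{(n)}]_i = \widetilde{\mathbf{U}}_{t\mathrm{GP},i}^{(n)}$, which is exactly the claim.

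The main obstacle is the second step: everything else is a faithful transcription of Proposition~\ref{thm:deltar}, but the replacement of the residual-dependent $\mathbf{W}_{t\mathrm{GP}}^{(n)}$ by the deterministic effective noise $\frac{\nu+1}{\nu-1}\taun^2 \mathbf{R}^{(n)}$ is an approximation whose validity rests on the residuals behaving, on average, like the Laplace-mode fit. This is precisely the smoothing already invoked in deriving \eqref{varpropt2}, so I would either cite~\citet{lyu2018evaluating} directly for that relation or, for completeness, note that the expectation of the observed-information entries in \eqref{wt} collapses to the stated effective variance. Once \eqref{varpropt2} is granted, the remainder of the proof is routine and parallels the homoskedastic argument line for line.
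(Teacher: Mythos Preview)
Your proposal is correct and follows the same overall architecture as the paper: identify an effective diagonal noise for the $t$-GP, then rerun Proposition~\ref{thm:deltar} with $\taun^2$ inflated to $\frac{\nu+1}{\nu-1}\taun^2$. The one place where your presentation differs is the justification of that effective noise. You infer it indirectly by matching the look-ahead variance formulas \eqref{varpropt2} and \eqref{varprop2}; the paper instead plugs the residual approximation $(\bar{y}_i-\tilde{f}_i^{(n)})^2 \approx \taun^2/r_i^{(n)}$ directly into the Hessian expression \eqref{wt} and simplifies algebraically to obtain $\widetilde{W}_{ii}^{(n)} = \frac{(\nu-1)r_i^{(n)}}{(\nu+1)\taun^2}$, hence $(\widetilde{\mathbf{W}}^{(n)})^{-1} = \frac{\nu+1}{\nu-1}\taun^2\mathbf{R}^{(n)}$. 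The paper's route is slightly stronger because it yields the full diagonal approximation for the \emph{existing} design, whereas \eqref{varpropt2} strictly speaks only about the variance reduction at a \emph{new} input; your fallback remark that ``the expectation of the observed-information entries in \eqref{wt} collapses to the stated effective variance'' is essentially this direct computation and would close the argument cleanly.
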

	
	{\begin{proof}[Proof of Lemma~\ref{lem:U-tgp}]
			For $t$-GP, the noise matrix $\taun^2 \mathbf{R}^{(n)}$ in eq.~\eqref{cov} is replaced with $(\bm{W}_{t\mathrm{GP}}^{(n)})^{-1}$. To calculate the ADSA/DDSA allocation rule with a $t$-GP metamodel we substitute  $(\bar{y}_{i}-\tilde{f}^{(n)}_{t\mathrm{GP}}(\bar{x}_{i}))^2 \approxeq \frac{\taun^2}{r_i^{(n)}}$ and $\tilde{f}^{(n)}_{t\mathrm{GP}}(\bar{x}_{i}) \approxeq \tilde{f}^{(n+1)}_{t\mathrm{GP}}(\bar{x}_{i})$ in eq. \eqref{wt} to obtain (cf.~\citealt{lyu2018evaluating})
			\begin{align*}
			W_{ii}^{(n)} &= (\nu + 1) \frac{\nu\frac{\taun^2}{r_i^{(n)}} - (\bar{y}_i - \tilde{f}_i^{(n)})^2}{(\nu\frac{\taun^2}{r_i^{(n)}} + (\bar{y}_i - \tilde{f}_i^{(n)})^2)^2} \\
			& \approxeq
			(\nu+1)\frac{\nu\frac{\taun^2}{r_i^{(n)}}-\frac{\taun^2}{r_i^{(n)}}}{\left(\frac{\taun^2}{r_i^{(n)}}+\nu\frac{\taun^2}{r_i^{(n)}} \right)^2}
			= \frac{(\nu - 1)r_i^{(n)}}{(\nu + 1)\taun^2} := \widetilde{W}_{ii}^{(n)}.
			\end{align*}
			Hence,
			$(\bm{W}_{t\mathrm{GP}}^{(n)} )^{-1} \approxeq \widetilde{(\bm{W}}_{t\mathrm{GP}}^{(n)})^{-1} = \frac{\nu + 1}{\nu - 1}\taun^2\mathbf{R}^{(n)}$ and  the covariance matrix ${\mathbf{C}}^{(n)}_{t\mathrm{GP}}$ of $f({\mathbf{x}}_*)$ is approximated as
			\begin{align}
			\nonumber {\mathbf{C}}^{(n)}_{t\mathrm{GP}} &= \mathbf{K}({\mathbf{x}}_*,{\mathbf{x}}_*)-\mathbf{K}_* \bigg(\mathbf{K}+({\mathbf{W}}_{t\mathrm{GP}}^{(n)})^{-1}\bigg) ^{-1}\mathbf{K}_*^T\\
			\nonumber & \simeq \mathbf{k}(\bar{\mathbf{x}}_*,\bar{\mathbf{x}}_*)-\mathbf{k}_*\bigg(\mathbf{K}+\frac{\nu + 1}{\nu - 1}\taun^2\mathbf{R}^{(n)}\bigg) ^{-1}\mathbf{k}_*^T \\
			& \simeq \mathbf{K}(\bar{\mathbf{x}}_*,\bar{\mathbf{x}}_*)-\mathbf{K}_*(\widetilde{\bm{\Sigma}}_{t\mathrm{GP}}^{(n)}) ^{-1}\mathbf{K}_*^T, \label{approxcovt}
			\end{align}
			where $\widetilde{\bm{\Sigma}}_{t\mathrm{GP}}^{(n)}$ matches eq.~\eqref{eq:hat-U-tgp}. The rest of the proof proceeds exactly like for the regular GP model in Proposition~\ref{thm:deltar}, after boosting $\taun^2$ up by a constant ratio to $(\nu+1)/(\nu-1) \taun^2$. Then we obtain $\widetilde{\bm{U}}^{(n)}_{t\mathrm{GP}}$ as defined in~\eqref{eq:hat-U-tgp}. 
		\end{proof}
		
		Next, we need to approximate the next-step $\bm{W}_{t\mathrm{GP}}^{(n+1)}$.
		Unlike in the Gaussian case where $\bm{\Sigma}^{(n+1)}$ depends only on $\mathbf{R}^{(n+1)}$, for $t$-GP $\bm{W}_{t\mathrm{GP}}^{(n+1)}$ depends on $\bybar$ (because it depends on $\tilde{\mathbf{f}}_{t\mathrm{GP}}$). We therefore need an approximation $\widehat{\bm{W}}_{t\mathrm{GP}}^{(n+1)}$ (the notation is to emphasize that it is different from the previous approximation $\widetilde{\bm{W}}_{t\mathrm{GP}}^{(n)}$ to $\bm{W}_{t\mathrm{GP}}^{(n)}$).
		
		\begin{lemma}[Look-Ahead $t$-GP Variance]\label{lem:s-new} The look-ahead variance at $x_*$ conditional on allocating $\Delta r^{(n)}$ simulations to a new input $\bar{x}_{k_n+1}$ is approximately given by
			\begin{align}
			\tilde{s}_{t\mathrm{GP}}^{(n+1),new}(x_*)^2
			&\approxeq s_{t\mathrm{GP}}^{(n)}(x_*)^2 - \frac{v_{t\mathrm{GP}}^{(n)}(x_*, \bar{x}_{k_n+1})^2}{\frac{(\nu + 1)\taun^2}{(\nu - 1) \Delta r^{(n)}}+s_{t\mathrm{GP}}^{(n)}(\bar{x}_{k_n+1})^2}. \label{s0-csao3-t}
			\end{align}
		\end{lemma}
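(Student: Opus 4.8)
The plan is to reduce the statement to the ordinary Gaussian look-ahead formula \eqref{timse0}, exploiting the fact that under the Laplace approximation of Appendix~\ref{subsec:t} the $t$-GP posterior is itself Gaussian. Recall from \eqref{covt} that the $t$-GP posterior covariance has exactly the form of an ordinary GP in which the homoskedastic noise matrix $\taun^2\mathbf{R}^{(n)}$ is replaced by the inverse-Hessian matrix $(\mathbf{W}_{t\mathrm{GP}}^{(n)})^{-1}$, i.e.\ $\bm{\Sigma}_{t\mathrm{GP}}^{(n)} = \mathbf{K} + (\mathbf{W}_{t\mathrm{GP}}^{(n)})^{-1}$. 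Adding a single new input $\bar{x}_{k_n+1}$ carrying $\Delta r^{(n)}$ replicates therefore amounts to conditioning this Gaussian approximation on one additional observation whose effective noise is the $(k_n{+}1,k_n{+}1)$ diagonal entry $(W_{t\mathrm{GP},k_n+1,k_n+1}^{(n+1)})^{-1}$. The standard rank-one GP conditioning identity then gives
\[
\tilde s_{t\mathrm{GP}}^{(n+1),new}(x_*)^2 = s_{t\mathrm{GP}}^{(n)}(x_*)^2 - \frac{v_{t\mathrm{GP}}^{(n)}(x_*,\bar{x}_{k_n+1})^2}{s_{t\mathrm{GP}}^{(n)}(\bar{x}_{k_n+1})^2 + (W_{t\mathrm{GP},k_n+1,k_n+1}^{(n+1)})^{-1}},
\]
so the whole argument reduces to computing the effective noise term in the denominator.

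The key step is to approximate that effective noise. I would invoke the same device as in the proof of Lemma~\ref{lem:U-tgp}: substitute the typical squared residual $(\bar{y}_{k_n+1}-\tilde f_{k_n+1})^2 \approx \taun^2/\Delta r^{(n)}$ into the Hessian expression \eqref{wt}. This is natural because the new site carries $\Delta r^{(n)}$ replicates, whose average has sampling variance $\taun^2/\Delta r^{(n)}$, and because in look-ahead we have not yet observed $\bar y_{k_n+1}$ and so must replace it by its expected magnitude. Carrying out the identical cancellation as in Lemma~\ref{lem:U-tgp} yields $W_{t\mathrm{GP},k_n+1,k_n+1}^{(n+1)} \approx \frac{(\nu-1)\Delta r^{(n)}}{(\nu+1)\taun^2}$, hence effective noise $(W_{t\mathrm{GP},k_n+1,k_n+1}^{(n+1)})^{-1} \approx \frac{(\nu+1)\taun^2}{(\nu-1)\Delta r^{(n)}}$. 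Substituting into the displayed conditioning identity gives exactly \eqref{s0-csao3-t}. Equivalently, the entire argument can be summarized as: the $t$-GP effective observation noise at a site with $r$ replicates is $\frac{\nu+1}{\nu-1}\frac{\taun^2}{r}$ (consistent with the single-site look-ahead \eqref{varpropt2}), so the new-input formula follows from the Gaussian version \eqref{timse0} by the single replacement $\frac{\taun^2}{\Delta r^{(n)}} \mapsto \frac{\nu+1}{\nu-1}\frac{\taun^2}{\Delta r^{(n)}}$ together with decorating all posterior quantities by the $t\mathrm{GP}$ subscript.

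I expect the main obstacle to be the justification of the residual substitution $(\bar{y}_{k_n+1}-\tilde f_{k_n+1})^2 \approx \taun^2/\Delta r^{(n)}$ and the attendant freezing of the Laplace mode $\tilde{\mathbf{f}}_{t\mathrm{GP}}^{(n)} \approx \tilde{\mathbf{f}}_{t\mathrm{GP}}^{(n+1)}$, since unlike the Gaussian case the $t$-GP noise matrix $\mathbf{W}_{t\mathrm{GP}}^{(n+1)}$ genuinely depends on the unobserved future response $\bar y_{k_n+1}$. I would argue this heuristically, exactly as in \eqref{varpropt2} and Lemma~\ref{lem:U-tgp}, noting that the substitution is consistent with the nominal $t$-noise model and that the $\frac{\nu+1}{\nu-1}$ correction factor is precisely the ratio making the Laplace-linearized noise match the genuine $t$ variance. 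The remaining algebra—the rank-one conditioning identity itself—is routine and identical to the Gaussian derivation underlying \eqref{timse0}.
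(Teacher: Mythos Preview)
Your proposal is correct and follows exactly the approach implicit in the paper: the paper does not spell out a separate proof of this lemma, but the derivation you give---rank-one Gaussian conditioning on the Laplace-approximated posterior together with the effective-noise substitution $(W_{t\mathrm{GP},k_n+1,k_n+1}^{(n+1)})^{-1}\approx\frac{(\nu+1)\taun^2}{(\nu-1)\Delta r^{(n)}}$ from the proof of Lemma~\ref{lem:U-tgp}---is precisely the intended argument, amounting to the replacement $\frac{\taun^2}{\Delta r^{(n)}}\mapsto\frac{\nu+1}{\nu-1}\frac{\taun^2}{\Delta r^{(n)}}$ in the Gaussian formula \eqref{timse0}.
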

		
		Finally, to obtain $\im_{SAO}^{(n),all}$ we define
		\begin{align}
		\widehat{W}^{(n+1)}_{ii}  & := (\nu+1)\frac{\nu\frac{\taun^2}{r_i^{(n+1)}}-(\bar{y}^{(n)}_{i}-\tilde{f}^{(n)}_{t\mathrm{GP}}(\bar{x}_{i}))^2}{\left((\bar{y}^{(n)}_{i}-\tilde{f}^{(n)}_{t\mathrm{GP}}(\bar{x}_{i}))^2+\nu\frac{\taun^2}{r_i^{(n+1)}} \right)^2}, \label{wthattgp}
		\end{align}
		based on the approximation $(\bar{y}_{i}^{(n+1)} - \tilde{f}^{(n+1)}_{t\mathrm{GP}}(x_{i}))^2 \approxeq (\bar{y}_{i}^{(n)}-\tilde{f}^{(n)}_{t\mathrm{GP}}(x_{i}))^2$. This  yields
		
		\begin{lemma}[Look-ahead $t$-GP variance after batch allocation]\label{lem:s-al}
			\begin{align}
			\tilde{s}_{t\mathrm{GP}}^{(n+1),all}({x}_*) \approxeq K({x}_*,{x}_*)-\mathbf{K}_* \bigg(\mathbf{K}+(\widehat{\mathbf{W}}_{t\mathrm{GP}}^{(n+1)})^{-1}\bigg) ^{-1}\mathbf{K}_*^T. \label{covtapprox}
			\end{align}
	\end{lemma}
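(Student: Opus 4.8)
The plan is to obtain \eqref{covtapprox} directly from the exact Laplace-approximate $t$-GP posterior covariance \eqref{covt}, specialized to step $n+1$ with the post-allocation replicate counts $r_i^{(n+1)} = r_i^{(n)} + \Delta r_i^{(n)}$. Since the Laplace posterior variance depends on the data only through the curvature matrix and the kernel, evaluating \eqref{covt} on the test set $\mathbf{x}_*$ gives the exact look-ahead variance
\begin{align*}
s_{t\mathrm{GP}}^{(n+1),all}(\mathbf{x}_*)^2 = K(\mathbf{x}_*,\mathbf{x}_*) - \mathbf{K}_* \big(\mathbf{K} + (\mathbf{W}_{t\mathrm{GP}}^{(n+1)})^{-1}\big)^{-1} \mathbf{K}_*^T .
\end{align*}
Thus the entire content of the lemma reduces to showing that the inaccessible $\mathbf{W}_{t\mathrm{GP}}^{(n+1)}$ may be replaced by the computable surrogate $\widehat{\mathbf{W}}_{t\mathrm{GP}}^{(n+1)}$ of \eqref{wthattgp}, after which the claim is a one-line substitution.

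The obstacle, and the reason an approximation is unavoidable, is that unlike the Gaussian case where the step-ahead noise matrix $\taun^2 \mathbf{R}^{(n+1)}$ is a deterministic function of the chosen allocation alone, the $t$-GP curvature $W_{t\mathrm{GP},ii}^{(n+1)}$ of \eqref{wt} involves both the as-yet-unobserved averaged outputs $\bar{y}_i^{(n+1)}$ and the refitted mode $\tilde{f}_i^{(n+1)}$ through the residual $(\bar{y}_i^{(n+1)} - \tilde{f}_i^{(n+1)})^2$. Neither is available at the planning stage, before the extra $\Delta r_i^{(n)}$ simulations are run. I would resolve this by freezing the residual at its current value, $(\bar{y}_i^{(n+1)} - \tilde{f}_i^{(n+1)})^2 \approxeq (\bar{y}_i^{(n)} - \tilde{f}_i^{(n)})^2$, while updating only the explicit $\taun^2/r_i$ terms in \eqref{wt} to the new count $r_i^{(n+1)}$. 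Substituting this into \eqref{wt} reproduces $\widehat{W}_{ii}^{(n+1)}$ verbatim, and plugging $\widehat{\mathbf{W}}_{t\mathrm{GP}}^{(n+1)}$ into the displayed covariance yields \eqref{covtapprox}; this is precisely the ingredient needed to form the look-ahead tIMSE metric of \eqref{timse1} in its $t$-GP incarnation.

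The main difficulty is justifying the frozen-residual step. The heuristic is that augmenting an already-sampled input with further i.i.d.\ replicates sharpens the effective local noise level $\taun^2/r_i$ without systematically relocating the Laplace mode: conditional on $\cA_n$ the new batch of outputs is mean-unbiased for $f_i$, so $\bar{y}_i$ and $\tilde{f}_i$ move together and their discrepancy stays of the same order. This is the $t$-GP counterpart of the Gaussian identity $\mathbb{E}_{\bar{Y}(x)}[\hat{f}^{(n+1)}(x)] = \hat{f}^{(n)}(x)$ exploited in \eqref{criterionmeesur}, and it parallels the approximation already made for the allocation rule in Lemma~\ref{lem:U-tgp}; the distinction is that here I deliberately retain the observed residual rather than collapsing it to $\taun^2/r_i^{(n)}$, so that $\widehat{\mathbf{W}}_{t\mathrm{GP}}^{(n+1)}$ still tracks the actual goodness-of-fit at each design site. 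I would not expect the remaining algebra to be delicate, since once $\mathbf{W}_{t\mathrm{GP}}^{(n+1)} \approx \widehat{\mathbf{W}}_{t\mathrm{GP}}^{(n+1)}$ is granted the result follows immediately from \eqref{covt}.
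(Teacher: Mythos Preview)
Your proposal is correct and mirrors the paper's own argument essentially step for step: the paper defines $\widehat{W}^{(n+1)}_{ii}$ in \eqref{wthattgp} precisely via the frozen-residual approximation $(\bar{y}_i^{(n+1)} - \tilde{f}^{(n+1)}_{t\mathrm{GP}}(\bar{x}_i))^2 \approxeq (\bar{y}_i^{(n)} - \tilde{f}^{(n)}_{t\mathrm{GP}}(\bar{x}_i))^2$ and then simply substitutes into the Laplace posterior covariance \eqref{covt}, without further elaboration. Your additional heuristic justification for why the residual should be stable under reallocation (mean-unbiasedness of the new batch, analogy with the Gaussian martingale identity and with Lemma~\ref{lem:U-tgp}) goes slightly beyond what the paper provides, but the core logical route is identical.
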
}

	\section{Tuning Parameters for ABSUR and ADSA}\label{sec:tuning-adsa}

	\begin{table*}[htb]
		\caption{Varying $\bar{r}$ (left panel) and $T_{sim}$ (right panel) for ABSUR. We report the mean error rate ${\cal ER}_T$, running time $t$ (in seconds) and the design size $k_T$  for the 2-D synthetic case studies with Gaussian noise $\epsilon \sim {\cal N}(0,\tau^2)$ and budget $N_T=2000$. All other hyperparameters are set as in Table~\ref{tbl:experiments}. Results are based on 20 macroreplications of each scheme.}
		\centering
		{ 	\begin{tabular}{cc}	\begin{tabular}{rrrr}
			\hline\noalign{\smallskip}
			$\bar{r}$  & ${\cal ER}_T$   & $t$  & $k_T$  \\
			\noalign{\smallskip}\hline\noalign{\smallskip}
			\multicolumn{4}{c}{$\taun^2 = 0.01$} \\
			\noalign{\smallskip}\hline\noalign{\smallskip}
			$0.01N_T$ & 0.21\% & 54.1 & 111.5 \\
			$0.025N_T$ & 0.24\% & 28.2 & 59.2 \\
			$0.05N_T$ & 0.23\% & 20.9 & 43.5 \\
			$0.1N_T$  & 0.30\% & 15.3 & 38.6 \\
			$0.25N_T$  & 0.31\% & 13.7 & 36.0 \\
			$N_T$ & 0.58\% & 9.0  & 30.1 \\
			\noalign{\smallskip}\hline\noalign{\smallskip}
			\multicolumn{4}{c}{$\taun^2 = 0.25$} \\
			\noalign{\smallskip}\hline\noalign{\smallskip}
			$0.01N_T$ & 1.26\% & 48.0 & 110.9 \\
			$0.025N_T$ & 1.31\% & 22.0 & 57.6 \\
			$0.05N_T$ & 1.18\% & 13.5 & 40.9 \\
			$0.1N_T$  & 1.29\% & 9.7 & 34.7 \\
			$0.25N_T$  & 1.41\% & 9.9 & 33.1 \\
			$N_T$ & 1.64\% & 8.2 & 29.8 \\
			\noalign{\smallskip}\hline\noalign{\smallskip}
			\multicolumn{4}{c}{$\taun^2 = 1$} \\
			\noalign{\smallskip}\hline\noalign{\smallskip}
			$0.01N_T$ & 2.05\% & 46.1 & 110.8 \\
			$0.025N_T$ & 2.01\% & 21.1 & 57.5 \\
			$0.05N_T$ & 1.78\% & 12.4 & 40.8 \\
			$0.1N_T$  & 1.93\% & 9.7 & 34.3 \\
			$0.25N_T$  & 2.03\% & 9.2 & 32.9 \\
			$N_T$ & 2.24\% & 9.2 & 30.8 \\
			\noalign{\smallskip}\hline\noalign{\smallskip}
		\end{tabular}  & $\qquad$
		\begin{tabular}{rrrr}
			\hline\noalign{\smallskip}
			$T_{sim}$  & ${\cal ER}_T$   & $t$  & $k_T$  \\
			\noalign{\smallskip}\hline\noalign{\smallskip}
			\multicolumn{4}{c}{$\taun^2 = 0.01$} \\
			\noalign{\smallskip}\hline\noalign{\smallskip}
			$0.0001$ & 2.16\% & 11.4 & 31.0 \\
			$0.001$  & 0.27\% & 12.5 & 31.9 \\
			$0.01$  & 0.30\% & 15.2 & 38.6 \\
			$0.1$ & 0.21\% & 23.6 & 60.4 \\
			$1$ & 0.19\% & 34.5 & 100.1 \\
			$10$ & 0.23\% & 31.6 & 115.1 \\
			\noalign{\smallskip}\hline\noalign{\smallskip}
			\multicolumn{4}{c}{$\taun^2 = 0.25$} \\
			\noalign{\smallskip}\hline\noalign{\smallskip}
			$0.0001$ & 1.45\% & 9.6 & 30.1 \\
			$0.001$  & 1.44\% & 9.0 & 30.4 \\
			$0.01$  & 1.29\% & 10.1 & 34.7 \\
			$0.1$ & 1.38\% & 16.8 & 53.8 \\
			$1$ & 1.29\% & 31.6 & 97.7 \\
			$10$ & 1.30\% & 37.2 & 128.6 \\
			\noalign{\smallskip}\hline\noalign{\smallskip}
			\multicolumn{4}{c}{$\taun^2 = 1$} \\
			\noalign{\smallskip}\hline\noalign{\smallskip}
			$0.0001$ & 2.27\% & 8.4 & 30.0 \\
			$0.001$  & 2.46\% & 8.8 & 30.4 \\
			$0.01$  & 1.93\% & 9.5 & 34.3 \\
			$0.1$ & 1.89\% & 16.5 & 53.9 \\
			$1$ & 1.98\% & 31.5 & 100.6 \\
			$10$ & 2.10\% & 44.3 & 141.9 \\
			\noalign{\smallskip}\hline\noalign{\smallskip}
			\end{tabular} \end{tabular}}
		\label{tbl:absur-rbar}
	\end{table*}

\begin{table*}[htb]
			\caption{Mean error rate ${\cal ER}_T$, computation cost $t$ (in seconds) and the design size $k_T$  for ADSA and DDSA with variable $c_{bt}$ for the 2-D synthetic case studies with Gaussian noise and budget $N_T=2000$. All other hyperparameters are the same as in Table~\ref{tbl:experiments}. Results are based on 20 macroreplications of each scheme.}
			\centering
		{		\begin{tabular}{rrrrrrr}
		\hline\noalign{\smallskip}
		& \multicolumn{3}{c}{ADSA}  & \multicolumn{3}{c}{DDSA}  \\
		\noalign{\smallskip}\hline\noalign{\smallskip}
		\multicolumn{7}{c}{$\taun^2 = 0.01$} \\
		\noalign{\smallskip}\hline\noalign{\smallskip}
		$c_{bt}$  & ${\cal ER}_T$   & $t$  & $k_T$  & ${\cal ER}_T$   & $t$  & $k_T$ \\
		\noalign{\smallskip}\hline\noalign{\smallskip}
		$0.5$ & 0.54\% & 204.2 & 25.4 & 0.21\% & 139.0 & 226 \\
		$1$ & 0.67\% & 125.3 & 23.4 & 0.23\% & 58.0 & 133 \\
		$2.5$ & 0.57\% & 62.8 & 23.9 & 0.20\% & 24.1 & 73 \\
		$5$ & 0.72\% & 37.2 & 23.1 & 0.20\% & 13.8 & 51 \\
		$10$  & 0.83\% & 22.2 & 22.4 & 0.25\% & 7.6 & 37 \\
		$20$  & 1.03\% & 11.9 & 21.3 & 0.39\% & 4.1 & 30 \\
		$40$  & 1.07\% & 6.5 & 20.8 & 2.04\% & 2.3 & 25 \\
		$80$ & 1.42\% & 3.8 & 20.5 & 1.21\% & 1.3 & 23 \\
		\noalign{\smallskip}\hline\noalign{\smallskip}
		\multicolumn{7}{c}{$\taun^2 = 0.25$} \\
		\noalign{\smallskip}\hline\noalign{\smallskip}
		$c_{bt}$  & ${\cal ER}_T$   & $t$  & $k_T$  & ${\cal ER}_T$   & $t$  & $k_T$ \\
		\noalign{\smallskip}\hline\noalign{\smallskip}
		$0.5$ & 1.45\% & 211.9 & 29.6 & 1.20\% & 147.9 & 226 \\
		$1$ & 1.37\% & 125.7 & 26.3 & 1.21\% & 64.5 & 133 \\
		$2.5$ & 1.50\% & 66.3 & 23.9 & 1.26\% & 25.5 & 73 \\
		$5$ & 1.38\% & 38.7 & 23.3 & 1.19\% & 13.5 & 51 \\
		$10$  & 1.41\% & 22.5 & 22.8 & 1.32\% & 7.5 & 37 \\
		$20$  & 1.48\% & 12.8 & 22.2 & 1.43\% & 4.4 & 30 \\
		$40$  & 1.71\% & 6.8 & 21.7 & 1.55\% & 2.4 & 25 \\
		$80$ & 1.76\% & 3.7 & 21.0 & 1.76\% & 1.4 & 23 \\
		\noalign{\smallskip}\hline\noalign{\smallskip}
		\multicolumn{7}{c}{$\taun^2 = 1$} \\
		\noalign{\smallskip}\hline\noalign{\smallskip}
		$c_{bt}$  & ${\cal ER}_T$   & $t$  & $k_T$  & ${\cal ER}_T$   & $t$  & $k_T$ \\
		\noalign{\smallskip}\hline\noalign{\smallskip}
		$0.5$ & 1.94\% & 358.8 & 256.0 & 1.70\% & 146.9 & 226 \\
		$1$ & 1.94\% & 172.0 & 134.0 & 1.80\% & 63.7 & 133 \\
		$2.5$ & 1.91\% & 76.0 & 69.0 & 1.89\% & 27.0 & 73 \\
		$5$ & 1.95\% & 42.8 & 45.9 & 1.90\% & 15.6 & 51 \\
		$10$  & 1.97\% & 24.2 & 33.2 & 1.99\% & 8.0 & 37 \\
		$20$  & 2.04\% & 13.3 & 27.3 & 2.26\% & 4.5 & 29 \\
		$40$  & 2.03\% & 7.0 & 24.2 & 2.71\% & 2.3 & 25 \\
		$80$ & 2.63\% & 4.0 & 22.3 & 3.13\% & 1.3 & 23 \\
		\noalign{\smallskip}\hline\noalign{\smallskip}
		\end{tabular}}
			\label{tbl:adsa-cbt}
		\end{table*}

\begin{figure*}[htb]
		\begin{center}
			\includegraphics[width=0.5\textwidth,trim=0.3in 2.5in 0.3in 3in]{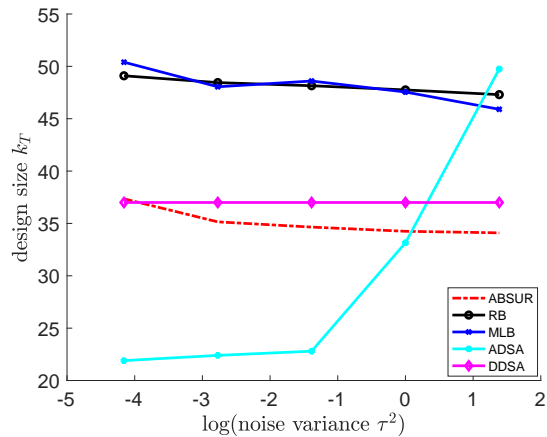} \\
		\end{center}
	\caption{Design size $k_T$ as a function of noise variance $\taun^2$ at $\taun^2 = \{4^{-3}, 4^{-2}, 4^{-1}, 1, 4\}$ in the 2-D experiment with $\epsilon \sim {\cal N}(0,\taun^2)$ and budget $N_T=2000$. Hyperparameters are set the same as in Table~\ref{tbl:experiments}.}
	\label{fig:kT_vs_taun2}
\end{figure*}

	\begin{figure*}
		\centering
		\begin{mdframed}
			\begin{multicols}{2}
				\textbf{Nomenclature}
				\begin{description}
                    \item[$n$] Sequential design step, indexes most quantities below
					\item[$\mathcal{A}$] Design set
					\item[$D$] Input space
                    \item[$d$] Dimension of input space
					\item[$Y(\cdot)$] Response
					\item[$X$] Design
					\item[$k$] Number of unique inputs
					\item[$N$] Total budget
					\item[$r$] Replicate count
					\item[$\bar{x}$] Design location
					\item[$\taun$] Noise variance
					\item[$f$] Latent function
					\item[$S$] Level set
					\item[$\epsilon$] Noise
					\item[$\bar{y}$] Average response
					\item[$\mathcal{ER}$] Error rate
					\item[$K(\cdot, \cdot)$] Covariance function
					\item[$\hat{f}(\cdot)$] Posterior mean
					\item[$v(\cdot)$] Posterior variance
					\item[$s(\cdot)$] Posterior standard deviation
					\item[$\mathcal{I}(\cdot)$] Acquisition function
					\item[$\rho$] cUCB weight
					\item[$\mu(\cdot)$] Lebesgue measure
					\item[$E$] Local empirical error
					\item[$\gamma$] Standard deviation threshold
					\item[$\eta$] Reduction factor
					\item[$L$] Number of fidelities
					\item[$c_{ovh}$] Optimization overhead
					\item[$T_{sim}$] Computation time
					\item[$\mathcal{L}$] Look-ahead integrated contour uncertainty
					\item[$\omega$] Level set contour weights
					\item[$c_{bt}$] Batch factor
					\item[$l$] Length-scale
					\item[$\sigma_{\text{se}}$] Function variance
					\item[$M$] Test set size
				\end{description}
			\end{multicols}
		\end{mdframed}
	\end{figure*}

	\bibliography{reference}
	\bibliographystyle{abbrvnat}
	
\end{document}